\renewcommand{\p@subfigure}{}
\newcolumntype{d}[1]{D{.}{.}{#1}}
\begin{document}

\begin{frontmatter} 
\title{Set Contribution Functions for Quantitative Bipolar Argumentation and their Principles} %
\author{Filip Naudot}
\ead{filipn@cs.umu.se}
\author{Andreas Brännström}
\ead{andreasb@cs.umu.se}
\author{Vicenç Torra}
\ead{vtorra@cs.umu.se}
\author{Timotheus Kampik}
\ead{tkampik@cs.umu.se}
\address{Umeå University, Sweden}

\begin{abstract}
We present functions that quantify the contribution of a set of arguments in quantitative bipolar argumentation graphs to (the final strength of) an argument of interest---a so-called \emph{topic}.
Our \emph{set contribution functions} are generalizations of existing functions that quantify the contribution of a single contributing argument to a topic.
Accordingly, we generalize existing contribution function principles for set contribution functions and provide a corresponding principle-based analysis.
We introduce new principles specific to set-based functions that focus on properties pertaining to the interaction of arguments within a set.
Finally, we sketch how the principles play out across different set contribution functions given a recommendation system application scenario.
\end{abstract}

\begin{keyword}
Quantitative Argumentation \sep Explainable AI \sep Automated Reasoning
\end{keyword}
\end{frontmatter}
\section{Introduction}
\label{sec:intro}
Computational Argumentation (CA) refers to a collection of methods for dialectical reasoning about potentially conflicting knowledge, often using graph-based approaches.
In graph-based CA, arguments are typically modeled as nodes in a graph and the relationships between arguments are modeled by one or several binary relations on the arguments, typically representing \emph{attack} or \emph{support} between nodes.
In recent years, the study of CA has received increasing attention due to its potential to provide a somewhat intuitive bridge between human and machine reasoning, as well as between symbolic and subsymbolic inference~\cite{DBLP:journals/corr/abs-2405-10729,DBLP:journals/frai/DietzKM22}.
A frequently studied CA approach is quantitative bipolar argumentation, in which arguments with numeric weights---so called initial strengths---are related via support and attack relations.
The graphs are called \emph{Quantitative Bipolar Argumentation Graphs} (QBAGs).
Initial strengths and graph topology are used by so-called \emph{gradual semantics} for drawing inferences by updating the initial strengths of the arguments to \emph{final} strengths~\cite{Baroni:Rago:Toni:2019,AMGOUD201839}.
Several applications and fundamental methods that draw inferences using QBAGs rely on meta-reasoning functions to quantify how much one argument (called \emph{contributor}) contributes to the final strengths of another argument (called \emph{topic})~\cite{yin2024qarg, DBLP:conf/ecai/0007PT23, DBLP:conf/ifaamas/AnaissyDVY25}.
Accordingly, such functions are sometimes called \emph{contribution functions}.

Scenarios for QBAG contribution functions are, for example, the following:
\begin{description}
    \item[Scenario 1.] Quantify how much a marginal change to an initial strength of one of several contributors affects the final strength of a topic to identify somewhat \emph{small} changes that, when applied, achieve a change of the topic's final strength in the desired direction.
    \item[Scenario 2.] Quantify how much the presence of a contributor affects the final strength of a topic.
    \item[Scenario 3.] Quantify how contributions to the final strength of a topic are distributed among several contributors.
\end{description}
Note that often, the focus of contribution function principles is on \emph{acyclic} QBAGs, as this restriction simplifies practical computation, as well as theoretical analysis, and is pragmatic in many use-cases (e.g.,~\cite{kotonya2019gradual,cocarascu2019extracting,lidecision,chi2021optimized,Potyka0T23,AyoobiPT23}).
Accordingly, our analysis focuses on the acyclic case as well.

Clearly, the three scenarios sketched above require different contribution functions.
Accordingly, several contribution functions have been defined in the literature, and contribution function \emph{principles} have been established that allow to distinguish between contribution functions in a rigorous manner~\cite{DBLP:journals/ijar/KampikPYCT24,Cyras:Kampik:Weng:2022,DBLP:conf/ecai/0007PT23,DBLP:conf/ifaamas/AnaissyDVY25}.

However, existing contribution functions cover the quantification of the contribution of a \emph{single} argument to a topic, which means in the above scenarios, they leave some questions unanswered.
\begin{itemize}
    \item In Scenario 1, one typically wants to find the most impactful contribution that a marginal change of any of several contributors can achieve, to then adjust the initial strength of this most impactful contributor accordingly.
    \item In Scenario 2, one may as well be interested in the effect the presence of several contributors has on a topic.
    \item In Scenario 3, one may want to group several contributors, for example all arguments that have been advanced by a specific agent, and treat each such group as a single player when assessing how contributions to a topic's final strength are distributed, allowing quantification of the group's overall impact relative to others.
\end{itemize}
To address these issues, this paper introduces and analyzes \emph{set contribution functions} that can quantify the contribution of a set of arguments to a topic (Section~\ref{sec:sctrb-functions}).
Our new functions generalize (single argument) contribution functions, in that the latter are covered by a special case (set contributor with cardinality of one) of the former (Section~\ref{sec:generalization}).
With this, we address Scenario 1 by introducing an abstract \emph{gradient} set contribution function $\sctrbgempty$ that aggregates several single-argument gradient contributions; from this, we then instantiate a function $\sctrbgmempty$ that determines the maximal gradient and thus most effective positive change.
We address Scenario 2 by introducing the \emph{removal-based} set contribution function $\sctrbrempty$ that determines the difference between a topic's initial strength with and without the set contributor.
A second variant---the \emph{intrinsic} removal-based set contribution function $\sctrbriempty$---controls for indirect effects on the set contributors.
Finally, a Shapley-value based set contribution function $\sctrbsempty$, which can be further generalized to determine contributions across an arbitrary \emph{partition} of the set of arguments, addresses Scenario 3.
To clarify the impact of this generalization, let us provide an illustrative example.
\begin{example}\label{ex:intro}
    Consider the QBAG displayed in Subfigure~\ref{fig:proof-remove-consistency}, with topic argument $\arga$ and contributing arguments $\argd$ and $\argf$. When assessed individually (using QE semantics $\fs$), both $\argd$ and $\argf$ have a negative removal-based ($\sctrbrempty$) effect on the final strength of $\arga$. However, when the two are considered together as a \emph{set contributor} $\{\argd, \argf\}$, their joint contribution to $\arga$ becomes positive. This sign inconsistency is illustrated in Subfigure~\ref{fig:sign-map}, and shows that set contributions can exhibit effects that would be difficult to measure using single-argument contributions alone. In other words, while single-argument contribution functions allow us to understand how each argument individually affects the topic, only set contribution functions can reveal how combinations of arguments interact in a way that fundamentally alters their overall impact.
\end{example}
\begin{figure}[H]
    \centering
    \subfloat[A QBAG $\graph$ (Example \ref{ex:intro}). A node labelled {\scriptsize $\argnode{\argx}{\is(\argx)}{\fs(\argx)}$} represents argument $\argx$ with initial strength $\is(\argx)$ and final strength $\fs(\argx)$. Edges labelled $+$ and $-$, respectively, represent support and attack relations.]{
    \centering
        \begin{tikzpicture}[node distance=1.5cm]
            \node[unanode] (d) at (-2, 2) {\argnode{\argd}{0.55}{0.55}};
            \node[unanode] (f) at (2, 2) {\argnode{\argf}{0.6}{0.60}};
            \node[unanode] (e) at (2, 0) {\argnode{\arge}{0.45}{0.57}};
            \node[unanode] (c) at (0, 0) {\argnode{\argc}{0.1}{0.61}};
            \node[unanode] (b) at (-2, 0) {\argnode{\argb}{0.8}{0.95}};
            \node[unanode] (a) at (0, -2) {\argnode{\arga}{0.3}{0.39}};
            
            \draw[-stealth, thick] (d) -- node[pos=0.5, left=1pt] {+} (c);
            \draw[-stealth, thick] (d) -- node[pos=0.5, left=1pt] {+} (e);
            \draw[-stealth, thick] (d) -- node[pos=0.5, left=1pt] {+} (b);
    
            \draw[-stealth, thick] (f) -- node[pos=0.5, right=1pt] {+} (c);
            \draw[-stealth, thick] (f) -- node[pos=0.5, right=1pt] {+} (e);
            \draw[-stealth, thick] (f) -- node[pos=0.5, right=1pt] {+} (b);
    
            \draw[-stealth, thick] (c) -- node[pos=0.5, below=1pt] {-} (e);
            \draw[-stealth, thick] (c) -- node[pos=0.5, below=1pt] {+} (b);
            
            \draw[-stealth, thick] (e) -- node[pos=0.5, right=1pt] {-} (a);
            
            \draw[-stealth, thick] (b) -- node[pos=0.5, left=1pt] {+} (a);

        \end{tikzpicture}
    \label{fig:proof-remove-consistency}
    }
    \hspace{5pt}
    \centering
    \subfloat[Sign map for removal-based set contributions of \{d\}, \{f\}, and \{d,f\} to the topic argument $\arga$. The darker region corresponds to cases where $\sctrbr{\{d\}}{a}$ and $\sctrbr{\{f\}}{a}$ have the same sign, whereas $\sctrbr{\{d, f\}}{a}$ has the opposite sign.]{
        \includegraphics[width=0.5\linewidth]{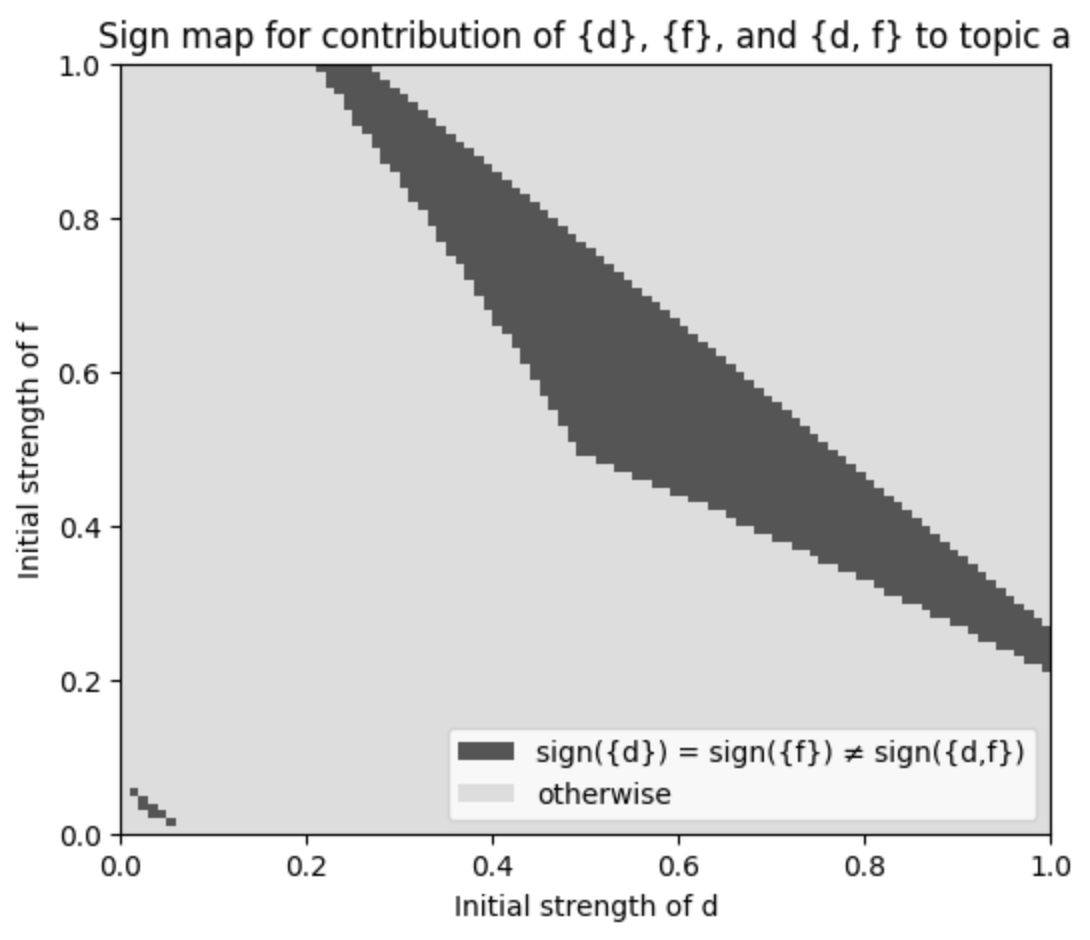}
        \label{fig:sign-map}
    }
    \caption{Example: the sign of the contributions of individual arguments to a topic may be inconsistent with the sign of the set contribution given the set of exactly these arguments.}
\end{figure}
Having introduced set contribution functions, we define principles for them, some of which are generalizations of single-argument contribution function principles, and some of which are novel (Section~\ref{sec:principles}). The principles based on the single-argument setting adapt established principles to sets:
\begin{enumerate*}[label=\roman*)]
    \item \emph{Contribution existence} stipulates that if a topic's final strength does not equal its initial strength, a non-zero set contributor must exist;
    \item \emph{Quantitative contribution existence} requires that for all partitions of a set of arguments, the sum of the set contributions to a topic, given the partitions, must add up to the difference between the topic's final and initial strengths;
    \item \emph{Directionality} stipulates that if a set contributor cannot reach a topic, the contribution to the topic must be zero;
    \item finally, \emph{counterfactuality} requires that a set contribution matches the inverted sign of the effect on a topics' final strength when removing the set contributor, or in the stricter \emph{quantitative} case, equals the difference between a topic's final strength with set contributor present and its final strength without.
\end{enumerate*}

We also introduce entirely new principles based on intuitions about desired behaviors, given several set contributors and interactions between them:
\begin{enumerate*}[label=\roman*)]
    \item \emph{Weak quantitative contribution existence} stipulates that a partition of the arguments must exist for which the sum of the contributions of the partition to a topic adds up to the difference between the topics final and initial strengths;
    \item \emph{Consistency} requires that if two set's contributions to the same topic have the same sign, then the contribution of the sets' union (to the same topic) must have the corresponding sign as well;
    \item \emph{Monotonicity} stipulates that when one set contributor is a superset of (or equal to) another one, the contribution of the former must be at least as large as the latter (to the same topic);
    \item Finally, \emph{symmetry} requires that if a topic's direct supporters and attackers are equally strong (with respect to a one-to-one mapping of final strengths), then the contribution of their union to the topic is $0$.
\end{enumerate*}

Based on both groups of principles, we execute a principle-based analysis of the set contribution functions with respect to several gradual semantics (Section~\ref{sec:analysis}), whose results are summarized in Tables~\ref{table:principle-overview-1} and Tables~\ref{table:principle-overview-2}.
Finally, we sketch an application scenario of set contribution functions, based on a QBAG use-case from the literature (Section~\ref{sec:application}), and we conclude the paper with a brief discussion of related and potential future work (Section~\ref{sec:discussion}).
An implementation of our contribution functions, alongside documentation and examples, is available at \url{https://github.com/TimKam/Quantitative-Bipolar-Argumentation}.

\begin{table}
\centering
\caption{Satisfaction and violation of set contribution function principles (cf. Section~\ref{sec:analysis}), given a semantics $\fs$ (cf. Section \ref{sec:prelim}) and a set contribution function $\sctrbempty$ (cf. Section \ref{sec:sctrb-functions}). All set contribution function principles in this table are generalizations of contribution function principles introduced in~\cite{DBLP:journals/ijar/KampikPYCT24}.}
\label{table:principle-overview-1}
\begin{tabular}{ |c|c|c|c|c| c| }
\hline
\textbf{$\sctrbempty$ / $\sigma$} & \textbf{QE}  & \textbf{DFQuAD} & \textbf{SD-DFQuAD} & \textbf{EB}  & \textbf{EBT} \\
\hline
\multicolumn{6}{|c|}{\textbf{Contribution Existence}} \\
\hline
$\sctrbrempty$ & \cmark & \cmark & \cmark & \cmark & \cmark \\ \hline
$\sctrbriempty$ & \cmark & \cmark & \cmark & \cmark & \cmark \\ \hline
$\sctrbsempty$ & \cmark & \cmark & \cmark & \cmark & \cmark \\ \hline
$\sctrbgmempty$ & \cmark & \xmark & \xmark & \cmark & \xmark \\ \hline
\hline
\multicolumn{6}{|c|}{\textbf{Quantitative Contribution Existence}} \\
\hline
$\sctrbrempty$ & \xmark & \xmark & \xmark & \xmark & \xmark \\ \hline
$\sctrbriempty$ & \xmark & \xmark & \xmark & \xmark & \xmark \\ \hline
$\sctrbsempty$ & \xmark & \xmark & \xmark & \xmark & \xmark \\ \hline
$\sctrbgmempty$ & \xmark & \xmark & \xmark & \xmark & \xmark \\ \hline
\hline
\multicolumn{6}{|c|}{\textbf{Directionality}} \\
\hline
$\sctrbrempty$ & \cmark & \cmark & \cmark & \cmark & \cmark \\ \hline
$\sctrbriempty$ & \cmark & \cmark & \cmark & \cmark & \cmark \\ \hline
$\sctrbsempty$ & \cmark & \cmark & \cmark & \cmark & \cmark \\ \hline
$\sctrbgmempty$ & \cmark & \cmark & \cmark & \cmark & \cmark \\ \hline
\hline
\multicolumn{6}{|c|}{\textbf{(Quantitative) Counterfactuality}} \\
\hline
$\sctrbrempty$ & \cmark & \cmark & \cmark & \cmark & \cmark \\ \hline    
$\sctrbriempty$ & \xmark & \xmark & \xmark & \xmark & \xmark \\ \hline
$\sctrbsempty$ & \xmark & \xmark & \xmark & \xmark & \xmark \\ \hline
$\sctrbgmempty$ & \xmark & \xmark & \xmark & \xmark & \xmark \\ \hline
\end{tabular}
\end{table}

\begin{table}
\centering
\caption{Satisfaction and violation of set contribution function principles, given a semantics $\fs$ and a set contribution function $\sctrbempty$. The set contribution function principles in this table are not generalizations of existing contribution function principles.}
\label{table:principle-overview-2}
\begin{tabular}{ |c|c|c|c|c| c| }
\hline
\textbf{$\sctrbempty$ / $\sigma$} & \textbf{QE}  & \textbf{DFQuAD} & \textbf{SD-DFQuAD} & \textbf{EB}  & \textbf{EBT} \\
\hline
\multicolumn{6}{|c|}{\textbf{Weak Quantitative Contribution Existence}} \\
\hline
$\sctrbrempty$ & \cmark & \cmark & \cmark & \cmark & \cmark \\ \hline
$\sctrbriempty$ & \cmark & \cmark & \cmark & \cmark & \cmark \\ \hline
$\sctrbsempty$ & \cmark & \cmark & \cmark & \cmark & \cmark \\ \hline
$\sctrbgmempty$ & \xmark & \xmark & \xmark & \xmark & \xmark \\ \hline
\hline
\multicolumn{6}{|c|}{\textbf{Consistency}} \\
\hline
$\sctrbrempty$ & \xmark & \xmark & \xmark & \xmark & \xmark \\ \hline
$\sctrbriempty$ &\xmark & \xmark & \xmark & \xmark & \xmark \\ \hline
$\sctrbsempty$ & \xmark & \xmark & \xmark & \xmark & \xmark \\ \hline
$\sctrbgmempty$ & \cmark & \cmark & \cmark & \cmark & \cmark \\ \hline
\hline
\multicolumn{6}{|c|}{\textbf{Monotonicity}} \\
\hline
$\sctrbrempty$ & \xmark & \xmark & \xmark & \xmark & \xmark \\ \hline
$\sctrbriempty$ & \xmark & \xmark & \xmark & \xmark & \xmark \\ \hline
$\sctrbsempty$ & \xmark & \xmark & \xmark & \xmark & \xmark \\ \hline
$\sctrbgmempty$ & \cmark & \cmark & \cmark & \cmark & \cmark \\ \hline
\hline
\multicolumn{6}{|c|}{\textbf{Symmetry}} \\
\hline
$\sctrbrempty$ & \cmark & \cmark & \cmark & \cmark & \cmark \\ \hline
$\sctrbriempty$ & \xmark & \xmark & \xmark & \xmark & \xmark \\ \hline
$\sctrbsempty$ & \xmark & \xmark & \xmark & \xmark & \xmark \\ \hline
$\sctrbgmempty$ & \cmark & \xmark & \xmark & \xmark & \xmark \\ \hline
\hline
\end{tabular}
\end{table}

\section{Preliminaries}
\label{sec:prelim}
Before we present the paper's contributions, let us introduce the necessary preliminaries: first of QBAG fundamentals and then of QBAG (single-argument) contribution functions and their principles.

\subsection{QBAGs and Gradual Semantics}
\label{subsec:qbags}
We use $\interval$ to denote \emph{strength values}.
Here, we assume a real interval, and more specifically $\mathbb{I} = [0, 1]$.
A QBAG is a set of elements, called \emph{arguments}, on which two disjoint binary relations, called \emph{support} and \emph{attack}, are established.
An initial strength strength function maps arguments to values in $\interval$.
\begin{definition}[Quantitative Bipolar Argumentation Graph (QBAG)~\cite{Potyka:2019,Baroni:Rago:Toni:2019}]
A \emph{Quantitative Bipolar Argumentation Graph (QBAG)} is a quadruple $\QBAG$, where $\Args$ is a set of arguments, 
$\Att \subseteq \Args \times \Args$ is called an \emph{attack} relation, 
$\Supp \subseteq \Args \times \Args$ is called a \emph{support} relation, $\Att \cap \Supp = \emptyset$ holds,
and $\is : \Args \to \interval$ is an \emph{initial strength function}.
\end{definition}
Henceforth, we assume a QBAG $\graph = \QBAG$ s.t. $\Args$ is finite, $\is$ is total, and $(\Args, \Att \cup \Supp)$ is acyclic (i.e., there is no set $\{\arga_0, \dots, \arga_n\} \subseteq \Args$ with $n \geq 1$, $\arga_0 = \arga_n$, and $(\arga_i, \arga_{i+1}) \in \Att \cup \Supp$ for all $i \in \{0, \dots, n-1\}$)).
We also assume that $\arga, \argb, \argx, \argy \in \Args$ and $S \subseteq \Args$.
If $(\arga, \argb) \in \Att$ we say that ``$\arga$ attacks $\argb$ (in $\graph$)'' or ``$\arga$ is an attacker of $\argb$ (in $\graph$)''.
If $(\arga, \argb) \in \Supp$ we say that ``$\arga$ supports $\argb$ (in $\graph$)'' or ``$\arga$ is a supporter of $\argb$ (in $\graph$)''.
We say that ``$\arga$ can reach $\argb$ (in $\graph$)'' iff there is a directed path from $\arga$ to $\argb$ in $(\Args, \Att \cup \Supp)$ (i.e., for a set $\{\arga_0, \dots, \arga_n \} \subseteq \Args$ s.t. $\arga_0 = \arga$ and $\arga_n = \argb$ it holds that $(\arga_i, \arga_{i+1}) \in \Att \cup \Supp$ for all $i \in \{0, \dots, n-1\}$), or $\arga = \argb$.
We may drop the reference to $\graph$ where the context is clear.

We define the \emph{restriction} of $\graph$ to $S$, denoted by $\graph\downarrow_{S}$, as follows: $\graph\downarrow_{S} := \left(S, \tau \cap (S \times \interval), \Att \cap (S \times S), \Supp \cap (S \times S) \right)$.
Somewhat analogously, we define the \emph{initial strength modification} of $\graph$ w.r.t. $\argx \in \Args$ and $\epsilon \in \interval$, denoted by $\graph\downarrow_{\tau(\argx) \leftarrow \epsilon}$ as follows: $\graph\downarrow_{\tau(\argx) \leftarrow \epsilon} := (\Args, \tau', \Att, \Supp)$,
where $\tau'(\argx) = \epsilon$ and $\tau'(\argy) = \tau(\argy)$ for all other $\argy \in \Args \setminus \{\argx\}$.

\emph{Gradual semantics} update the initial strengths of arguments in a QBAG to \emph{final} strengths, considering the QBAG's topology, thus giving rise to \emph{final strength functions}.
\begin{definition}[Gradual Semantics and Strength Functions~\cite{Baroni:Rago:Toni:2019,Potyka:2019}]\label{semantics-strength-functions}
A \emph{gradual semantics} $\fs$ defines a \emph{strength function} $\fs_{\graph} : \Args \to \interval \cup \{ \bot \}$ that assigns the \emph{final strength} $\fs_{\graph}(\argx)$ to each $\argx \in \Args$, 
where $\bot$ is a reserved symbol for ``undefined''. 
\end{definition}
Again, we may drop $\graph$ where the context is clear. 

A commonly studied class of gradual semantics are so-called \emph{modular semantics}~\cite{mossakowski2018modular}.
In order to draw inferences from QBAGs, modular semantics update arguments' initial strengths to final strengths by traversing the graph in topological order and updating a given argument's final strength by first \emph{aggregating} the final strengths of its attackers and supporters to then determine the \emph{influence} of the aggregation result on the argument's initial strength.
Below, we describe how modular semantics work based on~\cite{DBLP:journals/ijar/KampikPYCT24}.
Initially, we let $\langle \argx_1, ..., \argx_{|\Args|} \rangle$ be a fixed total order of the arguments in $\Args$ such that for every edge $(\argx_i, \argx_j) \in \Att \cup \Supp$ we have $i < j$ (i.e., a topological ordering of the acyclic QBAG).
We then define a vector $s^{(0)} \in \mathbb{R}^{|\Args|}$ of initial strengths of these arguments, i.e., $s^{(0)}_i := \tau(\argx_i)$ for $1 \leq i \leq |\Args|$.
The initial strengths are then iteratively updated to final strengths in two steps, at a given iteration step $i$:
\begin{description}
    \item[Aggregation.] An aggregation function $\alpha$ aggregates the current strength values of $\argx_i$'s supporters and attackers given $s^{(i)}$ to a single value. For representing attack and support, or lack thereof, to a given argument, we utilize a vector $v \in \{-1, 0, 1\}^n$, where $-1$, $1$, and $0$ represent attack, support, and neither, respectively.
    \item[Influence.] An influence function $\iota$ determines a new strength value for $\argx_i$, based on the aggregation result and $\argx_i$'s initial strength $w := \is(\argx_i)$.
\end{description}
The execution of aggregation and influence function for all arguments yields a new vector $s^{(i+1)}$; this update is executed until we reach $s^{(|\Args|)}$ (where $|\Args|$ is an upper bound on the number of updates, although depending on the topology of the QBAG, fewer updates may suffice).
Table~\ref{table:semantics} lists aggregation and influence functions from the literature, and Table~\ref{table:semanticsExamples} lists some modular semantics that make use of these functions.
\begin{table}[ht]
\centering
\footnotesize{
\begin{tabular}{lll}
\hline
\multicolumn{3}{c}{\textbf{Aggregation Functions}} \\ \hline
Sum & $\alpha^{\Sigma}_{v}: [0,1]^n \rightarrow \mathbb{R}$ & $\alpha^{\Sigma}_{v}(s) = \sum_{i = 1}^n v_i \times s_i $  \\
Product & $\alpha^{\Pi}_{v}: [0,1]^n \rightarrow [-1, 1]$ & $\alpha^{\Pi}_{v}(s) = \prod_{i:v_i=-1} (1 - s_i) - \prod_{i:v_i=1} (1 - s_i)$  \\
Top & $\alpha^{max}_{v}: [0,1]^n \rightarrow [-1, 1]$ & $a_v^{max}(s) = M_v(s) - M_{-v}(s),$ \\
 &  & where $M_v(s) = max\{0,v_1 \times s_1, \dots, v_n \times s_n\}$  \\
\hline
\multicolumn{3}{c}{\textbf{Influence Functions}} \\ \hline
Linear($k$) & $\iota^{l}_{w}: [-k, k] \rightarrow [0, 1]$ & $\iota^{l}_{w}(s) = w - \frac{w}{k} \times max\{0,-s\} + \frac{1-w}{k} \times max\{0, s\}$ \\
Euler-based & $\iota^{e}_{w}: \mathbb{R} \rightarrow [w^2, 1]$ & $\iota^{e}_{w}(s) = 1 - \frac{1-w^2}{1 + w \times e^s}$ \\
p-Max($k$) & $\iota^{p}_{w}: \mathbb{R} \rightarrow [0, 1]$ & $\iota^{p}_{w}(s) = w - w \times h(- \frac{s}{k}) + (1-w) \times h(\frac{s}{k}),$  \\
for $p \in \mathbb{N}$ &  & where $h(x) = \frac{max\{0,x\}^p}{1 + max\{0,x\}^p}$  \\
\end{tabular}}
\caption{Some aggregation $\alpha$ and influence $\iota$ functions~\cite[pp.~1724 Table 1]{Potyka:2019} for modular semantics. $s \in [0,1]^n$ is a strength vector 
(associating each argument with its current strength);
$v \in \{-1,0,1\}^n$ is a relationship vector indicating
which arguments attack ($-1$), support ($1$) or are in no 
relationship to ($0$) the argument of interest; $w$ is an initial strength.
}
\label{table:semantics}
\end{table}

\begin{table}[ht]
\centering
\begin{tabular}{lll}
\hline
\textbf{Semantics}           & \textbf{Aggregation} & \textbf{Influence}  \\ \hline
QuadraticEnergyModel (QE)        & Sum         & 2-Max(1)  \\
DFQuADModel (DFQuAD)       & Product     & Linear(1) \\
SquaredDFQuADModel (SD-DFQuAD) & Product     & 1-Max(1)  \\
EulerBasedModel (EB)   & Sum         & EulerBased  \\
EulerBasedTopModel (EBT) & Top         & EulerBased 
\end{tabular}
\caption{Examples of modular (gradual) semantics.}
\label{table:semanticsExamples}
\end{table}
To support the analysis of gradual semantics, \emph{principles} have been defined that describe desired semantics behavior.
While most of our observations do not rely on previously established principles, we use the \emph{bi-variate independence} and \emph{bi-variate directionality} principles~\cite{AMGOUD201839} for some results with respect to \emph{contribution existence}.
Bi-variate independence stipulates that an argument's final strength should not be affected by arguments that are not connected to it in the graph (i.e., there is no path between them even when attack/support directions are ignored).
\begin{sprinciple}[Bi-variate Independence]
\label{sprinciple:bi-variate-independence}
    A gradual semantics $\fs$ satisfies the \emph{bi-variate independence} principle iff for all QBAGs $\graph = \QBAG$, $\graph' = \QBAGp$ with $\Args \cap \Args' = \emptyset$, and 
    $\graph^* = (\Args \cup \Args', \is \cup \is', \Att \cup \Att', \Supp \cup \Supp')$ 
    it holds that $\fs_{\graph}(\arga) = \fs_{\graph^*}(\arga)$ for all $\arga \in \Args$.
\end{sprinciple}
The bi-variate directionality principle requires that an argument's final strength depend only on \emph{incoming} attacks and supports, not on the arguments it attacks and supports.
\begin{sprinciple}[Bi-variate Directionality]
\label{sprinciple:bi-variate-directionality}
    A gradual semantics $\fs$ satisfies the \emph{bi-variate directionality} principle iff for every $\graph = \QBAG$, every $(\argb,\argc) \notin (\Att \cup \Supp)$, and every $\graph'$ obtained from $\graph$ by adding $(\argb,\argc)$ to either $\Att$ or $\Supp$, it holds that for all $\arga \in \Args$ if there is no directed path from $\argc$ to $\arga$ in $\graph'$, then $\fs_{\graph}(\arga) = \fs_{\graph'}(\arga)$.
\end{sprinciple}
All semantics in Table~\ref{table:semanticsExamples} satisfy both principles.
\begin{lemma}
\label{lemma:semantics-bi-variate-principles}
    QE, DFQuAD, SD-DFQuAD, EB, and EBT semantics $\fs$ satisfy the bi-variate independence and bi-variate directionality principles.
\end{lemma}
\begin{proof}
    We observe that:
    \begin{description}
        \item[Bi-variate independence] 
        is satisfied because, by definition of each semantics, an argument's final strength depends only on its own initial strength together with an aggregation over its attackers and supporters.
        Accordingly, adding a disjoint component has no effect on an argument's final strength.
        \item[Bi-variate directionality]
        is satisfied (for acyclic QBAGs) because an argument's final strength under these semantics depends only on its own initial strength and the final strengths of its attackers and supporters.
        Adding an edge $(\argb, \argc)$ can only affect $\argc$ and arguments that are (directly or indirectly) influenced by $\argc$;
        the final strength of any argument $\arga$ such that there is no directed path from $\argc$ to $\arga$ does not change.
    \end{description}
\end{proof}
\subsection{Single-Argument Contribution Functions and their Principles}
\label{subsec:ctrb-functions-principles}
In this paper, we generalize (single-argument) contribution functions that determine the contribution of one argument---the so-called \emph{contributor}---to another one (the \emph{topic}). Henceforth, we assume a gradual semantics $\fs$.
\begin{definition}[(Single-Argument) Contribution Function]
\label{def:ctrb-function}
A contribution function
$$\ctrbempty_{\fs, \graph, \arga}: \Args \rightarrow \mathbb{R} \cup {\bot}$$
with respect to $\graph = \QBAG$, $\fs$, and $\arga \in \Args$ takes an argument $\argx \in \Args$ and returns a real number or \emph{undefined} ($\bot$), quantifying the \emph{contribution of (contributor) $\argx$ to (topic) $\arga$.}
\end{definition}
We may drop any of the subscripts $\fs$, $\graph$, and $\arga$ where the context is clear.

A simple and intuitive contribution function computes the difference between the topic's final strength given the original QBAG and its final strength in absence of the contributor.
\begin{definition}[Removal-based Contribution Function~\cite{DBLP:journals/ijar/KampikPYCT24}]
\label{def:removal}
We define the removal-based contribution function $\mathsf{Ctrb}^{\cal R}_{\arga}: \Args \rightarrow \mathbb{R} \cup {\bot}$ as follows:
\begin{align*}
   \ctrbr{\argx}{\arga} := \fs_\graph(\arga) - \fs_{\graph\downarrow_{\Args \setminus \{\argx\}}}(\arga).
\end{align*}
\end{definition}
One may argue that the removal-based contribution function does not determine the \emph{intrinsic} contribution of the contributor to the topic's final strength, as it also measures the impact of the contributor's contributors.
In order to avoid this, one can remove incoming attack and support edges to the contributor, thus ignoring effects other arguments have on the contributor's final strength (given reasonable assumptions about the applied semantics).
\begin{definition}[Intrinsic Removal-based Contribution Function~\cite{DBLP:journals/ijar/KampikPYCT24}]
\label{def:intrinsic-removal}
We define the intrinsic removal-based contribution function $\mathsf{Ctrb}^{{\cal R}'}_{\arga}: \Args \rightarrow \mathbb{R} \cup {\bot}$ as follows:
\begin{align*}
\ctrbri{\argx}{\arga} := \fs_{(\Args, \tau, \Att \setminus \{(\argy, \argx) | (\argy, \argx) \in \Att \}, \Supp \setminus \{(\argy, \argx) | (\argy, \argx) \in \Supp \})}(\arga) - \fs_{\graph\downarrow_{\Args \setminus \{\argx\}}}(\arga).
\end{align*}
\end{definition}
Yet differently, we can apply Shapley values for a principle-based game theoretical assessment of contributions to the final strength of a topic, where contributors are seen as agents forming coalitions in cooperative games.
\begin{definition}[Shapley Value-based Contribution Function~\cite{DBLP:journals/ijar/KampikPYCT24}]
\label{def:shapley}
We define the Shapley value-based contribution function $\mathsf{Ctrb}^{\cal S}_{\arga}: \Args \rightarrow \mathbb{R} \cup {\bot}$ as follows:
\begin{align*}
&{} \ctrbs{\argx}{\arga} :=  \nonumber \\
&{} \quad \sum_{S \subseteq \Args \setminus \{ \argx, \arga \}} \frac{|S|! \cdot (|\Args \setminus \{\arga\}|-|S|-1)!}{|\Args \setminus \{\arga\}|!}\left(\operatorname{\fs}_{\graph\downarrow_{\Args \setminus S}}(\arga) - \operatorname{\fs}_{\graph\downarrow_{\Args \setminus (S \cup \{ \argx \})}}(\arga)\right).
\end{align*}
\end{definition}
Finally, we can measure the contribution of the effect of a marginal change to the initial strength of a contributor on the final strengths of a topic.
\begin{definition}[Gradient-based Contribution Function~\cite{DBLP:journals/ijar/KampikPYCT24}]
\label{def:gradient}
We define the gradient-based contribution function $\mathsf{Ctrb}^{\partial}_{\arga}: \Args \rightarrow \mathbb{R} \cup {\bot}$ as follows:
\begin{align*}
    \ctrbg{\argx}{\arga} := \frac{\partial f_{\arga}}{\partial \is(\argx)} \left(\is(\argx_1), \ldots, \is(\argx_n)\right),
\end{align*}
where ${\argx_1, ..., \argx_n}$ are all arguments from which there is a path to $\arga$ in the graph $(\Args, \Att \cup \Supp)$ and $f$ is a composition of aggregation function $\alpha$ and influence function $\iota$.
\end{definition}
For contribution functions, \emph{principles} have been defined that formalize desiderata of function behavior and can be used to select contribution functions for a given use-case.
Principle satisfaction typically depends on the behavior of the applied gradual semantics.
Below, we list and briefly explain some principles that have been defined in previous work.
Henceforth, we assume a generic contribution function $\ctrbempty$.

The \emph{contribution existence} principle stipulates that each (topic) argument whose final strength does not equal its initial strength should have a contributor with a non-zero contribution.
\begin{principle}[Contribution Existence~\cite{DBLP:journals/ijar/KampikPYCT24}]
\label{principle:existence}
$\ctrbempty$ satisfies the \emph{contribution existence principle} w.r.t.\ a gradual semantics $\fs$
iff  $\sigma(\arga) \neq \is(\arga)$ implies that $\exists \argx \in \Args \setminus \{\arga\}$ with $\ctrb{\argx}{\arga} \neq 0$.
\end{principle}
Quantitative contribution existence is a stricter principle, requiring that the sum of all contributors' contribution must add up to the difference between final and initial strengths of the topic.
\begin{principle}[Quantitative Contribution Existence~\cite{DBLP:journals/ijar/KampikPYCT24}]
\label{principle:qexistence}
$\ctrbempty$ satisfies the \emph{quantitative contribution existence principle} w.r.t.\ a gradual semantics $\fs$
iff it holds that $\sum_{\argx \in \Args \setminus \{\arga\}} \ctrb{\argx}{\arga} = \sigma(\arga) - \tau(\arga)$.
\end{principle}
Directionality stipulates that contributors that cannot reach a topic must have a contribution of zero, following the intuition that contributions cannot propagate against a QBAG's topological order.
\begin{principle}[Directionality~\cite{Cyras:Kampik:Weng:2022}\footnote{We adjusted this principle for the case that $\argx = \arga$, as above we assume that an argument can reach itself.}]
\label{principle:directionality}
$\ctrbempty$ satisfies the \emph{directionality principle} w.r.t.\ a gradual semantics $\fs$
iff whenever there is no directed path in $\graph$ from $\argx \in \Args \setminus \{\arga\}$ to $\arga$, then $\ctrb{\argx}{\arga} = 0$.
\end{principle}

Counterfactuality requires that the direction of the contribution is consistent with direction of the difference between a topic's final strength in the original QBAG and its final strength in absence of the contributor.
\begin{principle}[Counterfactuality~\cite{DBLP:journals/ijar/KampikPYCT24}]
\label{principle:counterfactual}
$\ctrbempty$ satisfies the \emph{counterfactuality principle} w.r.t.\ a gradual semantics $\fs$
iff for every $\argx \in \Args \setminus \{\arga\}$\footnote{As a minor technical fix of the definition in~\cite{DBLP:journals/ijar/KampikPYCT24}, we explicitly require $\argx \neq \arga$ (same for the \emph{quantitative} case).} the following statements hold:
\begin{itemize}
    \item If $\ctrb{\argx}{\arga} < 0$, then
    $\fs_\graph(\arga) < \fs_{\graph \downarrow_{\Args \setminus \{\argx\}}}(\arga)$;
    \item If $\ctrb{\argx}{\arga} = 0$,
    then 
    $\fs_\graph(\arga) = \fs_{\graph \downarrow_{\Args \setminus \{\argx\}}}(\arga)$;
    \item If $\ctrb{\argx}{\arga} > 0$,
    then 
    $\fs_\graph(\arga) > \fs_{\graph \downarrow_{\Args \setminus \{\argx\}}}(\arga)$.
\end{itemize}
\end{principle}
Finally, quantitative counterfactuality is a stricter principle, stipulating that a contribution must equal the difference between a topic's final strength in the original QBAG and its final strength in absence of the contributor.
\begin{principle}[Quantitative Counterfactuality~\cite{DBLP:journals/ijar/KampikPYCT24}]
\label{principle:q-counterfactual}
    $\ctrbempty$ satisfies the \emph{quantitative counterfactuality principle} w.r.t.\ a gradual semantics $\fs$ iff for every $\argx \in \Args \setminus \{\arga\}$ it holds that $\ctrb{\argx}{\arga} = \fs_\graph(\arga) - \fs_{\graph \downarrow_{\Args \setminus \{\argx\}}}(\arga)$.
\end{principle}

\section{Set Contribution Functions}
\label{sec:sctrb-functions}
A set contribution function takes a subset of a QBAG's arguments in order to quantify these arguments' contribution to a topic.
\begin{definition}[Set Contribution Function]
\label{def:set-ctrb-function}
A set contribution function
$$\sctrbempty_{\fs, \graph, \arga}: 2^\Args \rightarrow \mathbb{R} \cup {\bot}$$
with respect to $\graph = \QBAG$, $\fs$, and $\arga \in \Args$ takes a set of arguments $\argX \subseteq \Args$ and returns a real number or \emph{undefined} ($\bot$), quantifying the \emph{contribution of (set contributor) $\argX$ to (topic) $\arga$.}
\end{definition}
Given $\sctrb{\argX}{\arga}$, we call $\arga$ \emph{topic} and $\argX$ \emph{set contributor}.
Given a contribution function $\sctrbempty_{\fs, \graph, \arga}$, we drop the subscripts $\fs, \graph, \arga$ where the context is clear.
Below, we assume a QBAG $\graph$, a topic argument $\arga \in \Args$, a set contributor $\argX \subseteq \Args \setminus \{\arga\}$, and a strength function $\fs$, if not specified otherwise.

The following definitions are based on those presented in \cite{DBLP:journals/ijar/KampikPYCT24}, and have been generalized to quantify the contribution of a set of arguments, rather than a single contributing argument.

The removal-based set contribution function determines the difference between the topic's final strength in the original graph and its final strength in the restriction of the graph from which the set contributor has been removed.
\begin{definition}[Removal-Based Set Contribution Function]
\label{def:set-removal}
    The removal-based set contribution function $\sctrbrempty: 2^\Args \rightarrow \mathbb{R} \cup {\bot}$ is defined as:
    \begin{align}\label{eq:set-removal}
        \sctrbr{\argX}{\arga} := \fs_\graph(\arga) - \fs_{\graph\downarrow_{\Args \setminus \argX}}(\arga) 
    \end{align}
\end{definition}
Intuitively, the \emph{intrinsic} variant of the removal-based set contribution function removes all incoming attacks and supports to the set contributor before determining the final strength of the topic, and is otherwise analogous to the removal-based set contribution function.
The idea is to control for indirect contributions that come through attackers or supporters of the set contributor, so the resulting contribution is intrinsic with respect to $\Args \setminus \argX$, since $\argX$ is isolated from effects of the rest of the graph $\graph\downarrow_{\Args \setminus \argX}$ while internal attacks and supports within $\argX$ are preserved.
\begin{definition}[Intrinsic Removal-Based Set Contribution Function]
\label{def:set-intrinsic-removal}
    The intrinsic removal-based set contribution function $\sctrbriempty: 2^\Args \rightarrow \mathbb{R} \cup {\bot}$ is defined as:
    \begin{align*}\label{eq:set-intrinsic-removal}
        \sctrbri{\argX}{\arga} & := \nonumber \\
        & \hspace{-0.5cm} \fs_{(\Args, \is, \Att \setminus \{(\argy, \argx) \in \Att | \argy \notin \argX, \argx  \in \argX \}, \Supp \setminus \{(\argy, \argx) \in \Supp | \argy \notin \argX, \argx  \in \argX \})}(\arga) - \fs_{\graph\downarrow_{\Args \setminus \argX }}(\arga)
    \end{align*}
\end{definition}
Gradient set-based contribution functions determine and then combine the effects of marginal changes to the initial strengths of arguments in the set contributor.
Accordingly, we first define an \emph{abstract} gradient-based set contribution function that we can later instantiate given a specific pooling function. 
\begin{definition}[Gradient-Based Set Contribution Function (Abstract)]
\label{def:set-gradient-abstract}
    Let $\arga \in \Args$ and let $(y_1, \ldots, y_n) \in \Args$ be the arguments that have a directed path to $\arga$ in $\graph$, with corresponding initial strengths $(\is(y_1), \ldots, \is(y_n))$. Now, let $\psi \colon \mathbb{R}^{|\argX|} \to \mathbb{R}$ be a pooling function that maps a set of $|\argX|$ real numbers to a single real value. The gradient-based set contribution function $\sctrbgempty: 2^\Args \rightarrow \mathbb{R} \cup {\bot}$ is then defined as:
        \begin{align*}
            \sctrbg{\argX}{\arga} := \psi \left( \left\{ \frac{\partial f_{\arga}}{\partial \is(\argx)} (\is(\argy_1), \ldots, \is(\argy_n)) \Bigm| x \in X \right\} \right).
        \end{align*}
\end{definition}
Intuitively, a suitable pooling function is $\max$, as it helps us answer the question how to affect the maximal (positive) change, assuming a maximization objective.
\begin{definition}[Gradient-Based Set Contribution Function (Max)]
\label{def:set-gradient-max}
    Let $\arga \in \Args$ and let $(y_1, \ldots, y_n) \in \Args$ be the arguments that have a directed path to $\arga$ in $\graph$, with corresponding initial strengths $(\is(y_1), \ldots, \is(y_n))$. The max gradient-based set contribution function $\sctrbgmempty: 2^\Args \rightarrow \mathbb{R} \cup {\bot}$ is now defined as:
    \begin{align*}
        \sctrbgm{\argX}{\arga} := \max_{x \in X}\frac{\partial f_{\arga}}{\partial \is(\argx)} \left(\is(\argy_1), \ldots, \is(\argy_n)\right).
    \end{align*}
    This function captures the maximum gradient-based contribution of any argument in $\argX$ to the topic argument $\arga$.
\end{definition}
Note that similarly, one could use $\min$, in case our objective is to minimize the topic's final strength, or the $\max$ of the gradients' absolutes, in case we are interested in how ``stable'' (in some sense) the topic's final strength is, given marginal changes to the initial strength of an argument in the set contributor.
Still, for the sake of conciseness, we limit ourselves to $\max$.

Finally, we introduce the Shapley value-based set contribution function that utilizes the well-known power index for fairly distributing the contributions among arguments and sets thereof.
We can first provide a general definition that assumes a specific partition $P$ of $\Args \setminus \{\arga\}$ and then determines the contribution of $\argX \in P$.
\begin{definition}[Shapley Value-Based Set-Partition Contribution Function]
\label{def:partition-shapley}
Let $P$ be a partition of $\Args \setminus \{\arga\}$ and $\argX \in P$. the Shapley value-based set-partition contribution function is defined as:
    \begin{align*}
        &{} \pctrbs{\argX,P}{\arga} :=  \nonumber \\
        &{} \quad \sum_{P' \subseteq P \setminus \{\argX\}} \frac{|P'|! \cdot (|P|-|P'|-1)!}{|P|!}\left(\operatorname{\fs}_{\graph\downarrow_{\Args \setminus P^*}}(\arga) - \operatorname{\fs}_{\graph\downarrow_{\Args \setminus (P^* \cup \argX)}}(\arga)\right),
    \end{align*}
    where $P^* = \bigcup_{S \in P'} S$.
\end{definition}
For a fixed partition $P$, each block $\argX \in P$ can simply be treated (and relabeled) as a single player, so $\pctrbsempty$ is simply the standard Shapley value applied on the player set $P$, and all axioms from \cite{shapley1953value} apply directly.
Clearly, the function above is not a set contribution function, as it depends---in addition to the set contributor---on a specific partition.
Hence, we also provide a specialized set contribution function variant that partitions $\Args \setminus \{\arga\}$ into the set contributor $\argX$ and sets of single arguments.
This treats the designated set contributor $X$ as one player and every other argument as its own player, which is a natural default when no additional grouping information over the arguments is given.
However, the set-partition contribution function above can potentially be useful due to its increased expressivity.
\begin{definition}[Shapley Value-Based Set Contribution Function]
\label{def:set-shapley}
    The Shapley value-based set contribution function $\sctrbsempty: 2^\Args \rightarrow \mathbb{R} \cup {\bot}$ is defined as:
    \begin{align*}
    &{} \sctrbs{\argX}{\arga} :=  \nonumber \\
    &{} \quad \sum_{S \subseteq \Args \setminus  (\argX \cup \{\arga\})} \frac{|S|! \cdot (|\Args \setminus (\{\arga\} \cup \argX)|-|S|)!}{(|\Args \setminus (\{\arga\} \cup \argX)| + 1)!}\left(\operatorname{\fs}_{\graph\downarrow_{\Args \setminus S}}(\arga) - \operatorname{\fs}_{\graph\downarrow_{\Args \setminus (S \cup \argX)}}(\arga)\right).
    \end{align*}
\end{definition}
Compared with the classical ``single-argument'' Shapley value, grouping arguments into a set or partition reduces the number of players (in the game theoretical sense) from $|\Args \setminus \{a\}|$ to $|\Args \setminus (\{\arga\} \cup \argX)| + 1$, which in turn reduces the number of coalitions that must be considered when computing the Shapley value.
Therefore, when a meaningful grouping of arguments into a set contributor is available, the Shapley value-based set contribution function can also be used to reduce computational costs and may render otherwise infeasible computations feasible.

In terms of computational cost, we observe that:
$\sctrbrempty$ evaluates the final strength of the topic argument twice with one graph modification;
$\sctrbriempty$ evaluates the final strength of the topic argument twice with two graph modifications;
$\pctrbsempty$ evaluates the final strength of the topic argument twice with one graph modification per coalition, of which there are $2^{|P|-1}$;
$\sctrbsempty$ evaluates the final strength of the topic argument twice with one graph modification per coalition, of which there are $2^{(|\Args \setminus (\{\arga\} \cup \argX)| + 1) - 1}$;
and $\sctrbgmempty$ evaluates the final strength of the topic argument twice per argument in the set contributor (i.e., $2 \cdot |X|$ evaluations) with no graph modifications, since the gradient-based set contribution function takes a fundamentally different approach by modifying the initial strength of the attackers/supporters instead.
We assume here that $\sctrbgmempty$ is computed via a finite-difference approximation of the partial derivatives. Accordingly, for each argument in the set contributor, the final strength of the topic argument is evaluated at the current initial strength and once more after an $\varepsilon$-perturbation, resulting in up to two evaluations per argument.

\section{Generalization of Single-Argument Contribution Functions}
\label{sec:generalization}
The set contribution functions that we have introduced in the previous section are supposed to generalize the single-argument contribution functions as presented in~\cite{DBLP:journals/ijar/KampikPYCT24}.
Intuitively, a set contribution function generalizes a single-argument contribution function only if it generally holds that a contributor's single-argument contribution to a topic is the same as the corresponding set contribution to that topic, given a set that only contains the aforementioned contributor.
\begin{principle}[$\ctrbempty$-Generalization]
    \label{principle:ctrb-func-generalization}
    A set contribution function $\sctrbempty$ generalizes a contribution function $\ctrbempty$ iff $\ctrb{\argx}{\arga} = \sctrb{\{\argx\}}{\arga}$.
\end{principle}
We can show that our set contribution functions indeed generalize their single-argument counterparts.
\begin{proposition}
\label{prop:ctrb-function-generalization}
     $\sctrbrempty$ generalizes $\ctrbrempty$, $\sctrbriempty$ generalizes $\ctrbriempty$, $\sctrbsempty$ generalizes $\ctrbsempty$, and $\sctrbgmempty$ generalizes $\ctrbgempty$.
\end{proposition}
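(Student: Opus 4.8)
The plan is to verify, for each of the four claimed pairs, the defining equation of Principle~\ref{principle:ctrb-func-generalization}, namely $\ctrb{\argx}{\arga} = \sctrb{\{\argx\}}{\arga}$, by substituting the singleton set contributor $\argX = \{\argx\}$ into each set contribution function and checking that the resulting expression coincides---either verbatim or after elementary simplification---with the corresponding single-argument function from Section~\ref{subsec:ctrb-functions-principles}. Since the four claims share the same shape, I would treat them one at a time.

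The removal-based and max-gradient cases are immediate. Substituting $\argX = \{\argx\}$ into Definition~\ref{def:set-removal} yields $\fs_\graph(\arga) - \fs_{\graph\downarrow_{\Args \setminus \{\argx\}}}(\arga)$, which is verbatim Definition~\ref{def:removal}. For the max gradient, Definition~\ref{def:set-gradient-max} evaluates to a maximum of the gradient term over the one-element index set $\{\argx\}$, and a maximum over a singleton is that single element, giving exactly the expression of Definition~\ref{def:gradient}. Here I would only note that both definitions are evaluated at the same vector $(\is(\argy_1), \ldots, \is(\argy_n))$ of initial strengths of the arguments with a directed path to $\arga$, so the evaluation points agree.

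For the intrinsic removal-based case, substituting $\argX = \{\argx\}$ into Definition~\ref{def:set-intrinsic-removal} removes exactly the edges $(\argy, \argx)$ with $\argy \notin \{\argx\}$, whereas Definition~\ref{def:intrinsic-removal} removes \emph{all} edges of the form $(\argy, \argx)$. The only possible discrepancy is a self-loop $(\argx, \argx)$; since we assume $(\Args, \Att \cup \Supp)$ is acyclic, no such edge exists, so the two edge sets---and hence the two modified graphs together with their topic strengths---coincide. The subtracted term $\fs_{\graph\downarrow_{\Args \setminus \{\argx\}}}(\arga)$ is literally identical in both definitions, completing this case. This acyclicity observation is the one genuinely non-syntactic step, though it is elementary.

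The Shapley case is the most computation-heavy but still routine, and I expect its combinatorial bookkeeping to be the main (if mild) obstacle. Writing $n := |\Args \setminus \{\arga\}|$ and substituting $\argX = \{\argx\}$ into Definition~\ref{def:set-shapley}, the summation index ranges over $S \subseteq \Args \setminus (\{\argx\} \cup \{\arga\}) = \Args \setminus \{\argx, \arga\}$, matching the index set of Definition~\ref{def:shapley}, and the marginal term $\fs_{\graph\downarrow_{\Args \setminus S}}(\arga) - \fs_{\graph\downarrow_{\Args \setminus (S \cup \{\argx\})}}(\arga)$ is likewise identical. It then remains to match the coefficients: since $|\Args \setminus (\{\arga\} \cup \{\argx\})| = n - 1$, the set-function coefficient $\frac{|S|! \cdot (|\Args \setminus (\{\arga\} \cup \argX)| - |S|)!}{(|\Args \setminus (\{\arga\} \cup \argX)| + 1)!}$ becomes $\frac{|S|! \cdot (n - 1 - |S|)!}{n!}$, which is exactly the single-argument coefficient $\frac{|S|! \cdot (n - |S| - 1)!}{n!}$---the $+1$ in the set-function denominator restoring the factorial $n!$ and the numerator factorials aligning. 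Once this checks out, summand-by-summand equality gives $\sctrbs{\{\argx\}}{\arga} = \ctrbs{\argx}{\arga}$ and concludes the proof.
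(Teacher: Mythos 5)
Your proposal is correct and follows essentially the same approach as the paper's proof: substituting $\argX = \{\argx\}$ into each set contribution function and verifying syntactic or near-syntactic agreement with the single-argument definition, including the same acyclicity observation for the intrinsic removal-based case. Your explicit verification of the Shapley coefficient identity is slightly more detailed than the paper's (which states the equality of the two sums without spelling out the factorial arithmetic), but it is the same argument.
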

\begin{proof}
    For every set contribution function $\sctrbempty \in~\{\sctrbrempty, \sctrbriempty, \sctrbsempty, \sctrbgmempty\}$, we replace $\argX$ by $\{\argx\}$ in the corresponding definition (Definitions~\ref{def:set-removal},~\ref{def:set-intrinsic-removal},~\ref{def:set-shapley}, and~\ref{def:set-gradient-max}, respectively).
    \begin{description}
        \item[Removal-based.] This yields $\fs_\graph(\arga) - \fs_{\graph\downarrow_{\Args \setminus \{\argx\}}}(\arga) = \ctrbr{\argx}{\arga}$ (Definition~\ref{def:removal}).
        \item[Intrinsic removal-based.] This yields $ \fs_{(\Args, \tau, \Att \setminus \{(\argy, \argx) | (\argy, \argx) \in \Att \}, \Supp \setminus \{(\argy, \argx) | (\argy, \argx) \in \Supp \})}(\arga) - \fs_{\graph\downarrow_{\Args \setminus \{\argx\}}}(\arga) = \ctrbri{\argx}{\arga}$ (Definition~\ref{def:intrinsic-removal}). Note that we can drop the ``$\argy \not \in \{\argx\}$'' constraints of Definition~\ref{def:set-intrinsic-removal}, as we assume that $\graph$ is acyclic.
        \item[Shapley value-based.] This yields:
        \begin{align*}
            &{} \sum_{S \subseteq \Args \setminus  (\{\argx\} \cup \{\arga\})} \frac{|S|! \cdot (|\Args \setminus (\{\arga\} \cup \{\argx\})|-|S|)!}{(|\Args \setminus (\{\arga\} \cup \{\argx\})| + 1)!}\left(\operatorname{\fs}_{\graph\downarrow_{\Args \setminus S}}(\arga) - \operatorname{\fs}_{\graph\downarrow_{\Args \setminus (S \cup \{\argx\})}}(\arga)\right) \\
            &{} = \sum_{S \subseteq \Args \setminus \{ \argx, \arga \}} \frac{|S|! \cdot (|\Args \setminus \{\arga\}|-|S|-1)!}{|\Args \setminus \{\arga\}|!}\left(\operatorname{\fs}_{\graph\downarrow_{\Args \setminus S}}(\arga) - \operatorname{\fs}_{\graph\downarrow_{\Args \setminus (S \cup \{ \argx \})}}(\arga)\right) \\
            &{}  = \ctrbs{\argx}{\arga} \text{ (Definition~\ref{def:shapley})}.
        \end{align*}
        \item[Gradient-based.] This yields: $\max_{\argx \in \argX}\frac{\partial f_{\arga}}{\partial \is(\argx)} \left(\is(\argy_1), \ldots, \is(\argy_n)\right) = \\\max_{\argx \in \{\argx\}} \frac{\partial f_{\arga}}{\partial \is(\argx)} \left(\is(\argy_1), \ldots, \is(\argy_n)\right) = \ctrbg{\argx}{\arga}$ (Definition~\ref{def:gradient}).
    \end{description}
\end{proof}
%

\section{Principles}
\label{sec:principles}
For the set contribution functions as introduced in the previous section, we introduce contribution function principles that reflect expected behaviors of these functions, and that are either satisfied or violated given a set contribution function and a gradual semantics.
First, we introduce principles that reflect the single-argument contribution function principles from the preliminaries.
We then formally show how these new set contribution function principles relate to their single-argument counterparts.
Finally, we introduce entirely new principles that reflect desiderata of function behavior given set partitions and other interactions between contributor sets.

Contribution existence stipulates that if a topic's final strength does not equal its initial strength, there must exist a set contributor with a non-zero contribution.
\begin{principle}[Contribution Existence]
\label{sprinciple:ctrb-existence}
    A set contribution function $\sctrbempty$ satisfies \emph{contribution existence} w.r.t. a gradual semantics $\fs$ iff ${\fs(\arga) \neq \is(\arga)}$ implies that $\exists \argX \subseteq \Args \setminus \{\arga\}$ with $\sctrb{\argX}{\arga} \neq 0$.
\end{principle}
Quantitative contribution existence requires that for all partitions of $\Args \setminus \{\arga\}$, the sum of the set contributions of the elements in the partition must amount to the difference between the topic's final and initial strengths.
\begin{principle}[Quantitative Contribution Existence]
\label{principle:quantitative-ctrb-existence}
    A set contribution function $\sctrbempty$ satisfies \emph{quantitative contribution existence} w.r.t. a gradual semantics $\fs$ iff for all partitions $P$ of $\Args \setminus \{\arga\}$ it holds that $\sum_{\argX \in P} \sctrb{\argX}{\arga} = \fs(\arga) - \is(\arga)$.
\end{principle}
Directionality stipulates that a set contributor must have a contribution of zero if none of its arguments can reach the topic.
\begin{principle}[Directionality]
\label{sprinciple:directionality}
    A set contribution function $\sctrbempty$ satisfies \emph{directionality} w.r.t. a gradual semantics $\fs$ iff for every $\argX \subseteq \Args \setminus \{\arga\}$ it holds that $\sctrb{\argX}{\arga} = 0$ whenever $\forall \argx \in \argX$ it holds that $\argx$ cannot reach $\arga$ (in $\graph$). 
\end{principle}
\begin{principle}[Counterfactuality]
\label{principle:counterfactuality}
    A set contribution function $\sctrbempty$ satisfies \emph{counterfactuality} w.r.t. a gradual semantics $\fs$ iff for every $\argX \subseteq \Args \setminus \{\arga\}$ the following statements hold:
    \begin{itemize}
        \item If $\sctrb{\argX}{\arga} < 0$, then $\fs_{\graph}(a) < \fs_{\graph\downarrow_{\Args \setminus \argX}}(\arga)$.
        \item If $\sctrb{\argX}{\arga} = 0$, then $\fs_{\graph}(a) = \fs_{\graph\downarrow_{\Args \setminus \argX}}(\arga)$.
        \item If $\sctrb{\argX}{\arga} > 0$, then $\fs_{\graph}(a) > \fs_{\graph\downarrow_{\Args \setminus \argX}}(\arga)$.
    \end{itemize}
\end{principle}
\begin{principle}[Quantitative Counterfactuality]
\label{principle:quantitative-counterfactuality}
    A set contribution function $\sctrbempty$ satisfies \emph{quantitative counterfactuality} w.r.t. a gradual semantics $\fs$ iff for every $\argX \subseteq \Args \setminus \{\arga\}$ it holds that $\sctrb{\argX}{\arga} = \fs_{\graph}(\arga) - \fs_{\graph\downarrow_{\Args \setminus \argX}}(\arga)$.
\end{principle}
Intuitively, one would expect that set contribution principles as specified above can be formally related to single-argument contribution function principles.
We can achieve this by introducing notions of \emph{principle generalization}, stipulating that principle satisfaction by a single argument contribution function implies satisfaction of its generalizations or vice versa. As a prerequisite to formalizing this, we denote a generic single-argument contribution function principle by $p$ (i.e., $p: {\cal C} \times {\cal Z} \rightarrow \{true, false\}$, where $\cal C$ is the class of all single-argument contribution functions, and a generic set contribution function principle by $p'$ (i.e., $p': {\cal S} \times {\cal Z} \rightarrow \{true, false\}$, where $\cal S$ is the class of all set contribution functions; in both cases ${\cal Z}$ is the class of all gradual semantics.
\begin{definition}[$\uparrow$ and $\downarrow$ Principle Generalization]
    \label{definition:generalization}
    Let $p'$ be a set contribution function principle.
    Iff for every gradual semantics $\fs$, for every single-argument contribution function $\ctrbempty$, and for every set contribution function $\sctrbempty$ s.t. $\sctrbempty$ generalizes $\ctrbempty$ it holds that:
    \begin{itemize}
        \item if $\ctrbempty$ satisfies $p$ w.r.t. $\sigma$ then $\sctrbempty$ satisfies $p'$ w.r.t. $\sigma$, then we say that ``$p'$ $\uparrow$-generalizes $p$'';
        \item if $\sctrbempty$ satisfies $p'$ w.r.t. $\sigma$ then $\ctrbempty$ satisfies $p$ w.r.t. $\sigma$, then we say that ``$p'$ $\downarrow$-generalizes $p$''.
    \end{itemize}
\end{definition}
We can show that the set contribution function principles that we have introduced so far all either $\uparrow$-generalize or $\downarrow$-generalize single-argument contribution function principles.
Contribution existence is a $\uparrow$-generalization.
\begin{proposition}\label{prop:up-generalization}
    The set contribution function principle \emph{contribution existence} $\uparrow$-generalizes its equally named single-argument contribution principle-counterpart.
\end{proposition}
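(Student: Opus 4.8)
The plan is to prove the implication directly, by converting a single-argument witness into a singleton-set witness via the generalization identity. Following Definition~\ref{definition:generalization}, I would fix an arbitrary gradual semantics $\fs$, a single-argument contribution function $\ctrbempty$, and a set contribution function $\sctrbempty$ that generalizes $\ctrbempty$ (in the sense of Principle~\ref{principle:ctrb-func-generalization}), and assume that $\ctrbempty$ satisfies single-argument contribution existence w.r.t. $\fs$. The goal is then to show that $\sctrbempty$ satisfies set contribution existence w.r.t. $\fs$.

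First I would unfold the set contribution existence principle (Principle~\ref{sprinciple:ctrb-existence}): assuming $\fs(\arga) \neq \is(\arga)$, I must exhibit some $\argX \subseteq \Args \setminus \{\arga\}$ with $\sctrb{\argX}{\arga} \neq 0$. Since both principles are stated relative to the same topic $\arga$ and $\ctrbempty$ satisfies single-argument contribution existence (Principle~\ref{principle:existence}), the identical hypothesis $\fs(\arga) \neq \is(\arga)$ already yields some $\argx \in \Args \setminus \{\arga\}$ with $\ctrb{\argx}{\arga} \neq 0$.

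Next I would invoke the generalization hypothesis $\ctrb{\argx}{\arga} = \sctrb{\{\argx\}}{\arga}$ to conclude $\sctrb{\{\argx\}}{\arga} \neq 0$. Taking $\argX := \{\argx\}$ and observing that $\argx \in \Args \setminus \{\arga\}$ guarantees $\{\argx\} \subseteq \Args \setminus \{\arga\}$, the singleton $\{\argx\}$ is an admissible set contributor witnessing a non-zero set contribution, so set contribution existence holds.

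There is essentially no substantive obstacle here; the argument is a one-line lift of the witness from the single-argument setting to the set setting. The only points requiring care are bookkeeping ones: identifying the topic $\arga$ across the two principles (both functions are indexed by the same $\arga$), and checking the membership condition so that the singleton qualifies as a set contributor. Neither poses any real difficulty, which is what makes this the $\uparrow$ rather than the $\downarrow$ direction: a single-argument witness is automatically a set witness, whereas the converse would require a set witness to force a single-argument one, which does not follow from the generalization identity alone.
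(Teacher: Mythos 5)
Your proposal is correct and follows essentially the same route as the paper's own proof: both invoke the generalization identity $\ctrb{\argx}{\arga} = \sctrb{\{\argx\}}{\arga}$ to lift the single-argument witness guaranteed by Principle~\ref{principle:existence} to the singleton set contributor $\{\argx\}$ witnessing Principle~\ref{sprinciple:ctrb-existence}. If anything, your version states the logical structure (assume the shared hypothesis, extract the witness, lift it) slightly more cleanly than the paper does.
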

\begin{proof}
     Assume a contribution function $\sctrbempty$ that generalizes a contribution function $\ctrbempty$. Then, it holds that $\ctrb{\argx}{\arga} = \sctrb{\{\argx\}}{\arga}$ (Definition~\ref{principle:ctrb-func-generalization}).
    We can assume $\sigma(\arga) \neq \is(\arga)$ and need to show that if $\exists \argx \in \Args \setminus \{\arga\}$ with $\ctrb{\argx}{\arga} \neq 0$ (Principle~\ref{principle:existence}) then $\exists \argX \subseteq \Args \setminus \{\arga\}$ with $\sctrb{\argX}{\arga} \neq 0$ (Principle~\ref{sprinciple:ctrb-existence}). Because $\ctrb{\argx}{\arga} = \sctrb{\{\argx\}}{\arga}$ holds, we know this is the case for every $\argx \in \Args \setminus \{\arga\}$ s.t. $\ctrb{\argx}{\arga} \neq 0$ (which must exist, due to the satisfaction of Principle~\ref{principle:existence}).
\end{proof}
In contrast, quantitative contribution existence, directionality, counterfactuality, and quantitative counterfactuality are $\downarrow$-generalizations.
\begin{proposition}
\label{prop:down-generalization}
The following set contribution function principles $\downarrow$-generalize their equally named single-argument contribution principle-counterparts: \emph{quantitative contribution existence}, \emph{directionality}, \emph{counterfactuality}, and \emph{quantitative counterfactuality}.
\end{proposition}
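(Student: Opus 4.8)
The plan is to exploit the fact that each of these four set principles is a universally quantified statement over set contributors (and, for quantitative contribution existence, over partitions), so that the $\downarrow$-direction follows by \emph{specializing} the set principle to singleton sets and then invoking the generalization identity $\ctrb{\argx}{\arga} = \sctrb{\{\argx\}}{\arga}$ from Principle~\ref{principle:ctrb-func-generalization}. Concretely, I would fix an arbitrary gradual semantics $\fs$, a single-argument contribution function $\ctrbempty$, and a set contribution function $\sctrbempty$ that generalizes it; assume the set principle $p'$ holds w.r.t.\ $\fs$; and then derive the corresponding single-argument principle $p$.

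For directionality, counterfactuality, and quantitative counterfactuality the argument reduces to a single substitution. Each of these set principles asserts a property for \emph{every} $\argX \subseteq \Args \setminus \{\arga\}$; I would specialize to $\argX = \{\argx\}$, note that $\graph\downarrow_{\Args \setminus \{\argx\}}$ is exactly the restriction appearing in the single-argument principle, and rewrite $\sctrb{\{\argx\}}{\arga}$ as $\ctrb{\argx}{\arga}$. The condition on $\{\argx\}$ then reads verbatim as the single-argument condition on $\argx$. For directionality I would additionally observe that the hypothesis ``$\forall \argy \in \{\argx\}$, $\argy$ cannot reach $\arga$'' collapses to ``$\argx$ cannot reach $\arga$'', matching the single-argument antecedent. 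Since $\argx$ ranges over all of $\Args \setminus \{\arga\}$, this establishes $p$ in each of the three cases.

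For quantitative contribution existence the set principle is instead quantified over all partitions $P$ of $\Args \setminus \{\arga\}$, so the reduction is slightly less immediate: rather than an arbitrary $\argX$, I would apply the principle to the distinguished partition into singletons, $P = \{\{\argx\} \mid \argx \in \Args \setminus \{\arga\}\}$. This is a legitimate partition, so the set principle yields $\sum_{\argx \in \Args \setminus \{\arga\}} \sctrb{\{\argx\}}{\arga} = \fs(\arga) - \is(\arga)$; rewriting each summand via the generalization identity gives $\sum_{\argx \in \Args \setminus \{\arga\}} \ctrb{\argx}{\arga} = \fs(\arga) - \is(\arga)$, which is precisely the single-argument quantitative contribution existence principle.

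The proof is mechanical throughout, and there is no genuine obstacle—only the observation of which instantiation recovers the single-argument statement. The sole place requiring a (trivial) choice is quantitative contribution existence, where the single-argument principle is one summation identity while the set principle ranges over all partitions, so one must select the singleton partition rather than an arbitrary one. I would also note in passing that the converse implications fail, since knowing a principle holds on singletons says nothing about its behaviour on larger $\argX$; this is why these are $\downarrow$-generalizations and not $\uparrow$-generalizations, but the asymmetry is not something the proof itself needs to address.
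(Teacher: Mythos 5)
Your proposal is correct and follows essentially the same route as the paper's own proof: both specialize the universally quantified set principles to singleton contributors $\argX = \{\argx\}$ (and, for quantitative contribution existence, to the singleton partition of $\Args \setminus \{\arga\}$) and then rewrite $\sctrb{\{\argx\}}{\arga}$ as $\ctrb{\argx}{\arga}$ via the generalization identity. The only difference is presentational—you state the singleton partition explicitly as $\{\{\argx\} \mid \argx \in \Args \setminus \{\arga\}\}$ where the paper identifies it by its cardinality—which, if anything, is slightly cleaner.
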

\begin{proof}
    As in the proof of Proposition~\ref{prop:up-generalization}, assume a set contribution function $\sctrbempty$ that generalizes a contribution function $\ctrbempty$. Then, it holds that $\ctrb{\argx}{\arga} = \sctrb{\{\argx\}}{\arga}$ (Definition~\ref{principle:ctrb-func-generalization}).
    \begin{description}
        \item[Quantitative contribution existence.] We need to show that if for all partitions $P$ of $\Args \setminus \{\arga\}$ it holds that $\sum_{\argX \in P} \sctrb{\argX}{\arga} = \fs(\arga) - \is(\arga)$ (Principle~\ref{principle:quantitative-ctrb-existence}) then it also holds that $\sum_{\argx \in \Args \setminus \{\arga\}} \ctrb{\argx}{\arga} = \sigma(\arga) - \tau(\arga)$ (Principle~\ref{principle:qexistence}). Because we can assume that $\ctrb{\argx}{\arga} = \sctrb{\{\argx\}}{\arga}$ holds, we can re-write Principle~\ref{principle:quantitative-ctrb-existence} as follows: for the partition $P'$ of $\Args \setminus \{\arga\}$ such that $|P'| = |\Args \setminus \{\arga\}|$ the following statement holds:
        \begin{align*}
            \sum_{\{\argx\} \in P'} \sctrb{\{\argx\}}{\arga} = \fs(\arga) - \is(\arga).
        \end{align*}
        This corresponds to Principle~\ref{principle:quantitative-ctrb-existence}, thus proving the proposition.
        \item[Directionality.] We need to show that if $\sctrb{\argX}{\arga} = 0$ whenever $\forall \argx \in \argX \subseteq \Args \setminus \{\arga\}$ we have that $\argx$ cannot reach $\arga$ (in $\graph$) (Principle~\ref{sprinciple:directionality}) then $\ctrb{\argx}{\arga} = 0$ whenever $\argx$ cannot reach $\arga$ (in $\graph$, Principle~\ref{principle:directionality}). Because $\ctrb{\argx}{\arga} = \sctrb{\{\argx\}}{\arga}$, the following must hold: \\
        \indent if \phantom{ee} $\forall \argx \in \Args \setminus \{\arga\}$ s.t. $\argx$ cannot reach $\arga$ it holds that $\sctrb{\{\argx\}}{\arga} = 0$ \\
        \indent then $\forall \argx \in \Args \setminus \{\arga\}$ s.t. $\argx$ cannot reach $\arga$ it holds that $\ctrb{\argx}{\arga} = 0$. \\
        This proves the proposition.
        
        \item[Counterfactuality.] We need to show that if for any $\argX \subseteq \Args \setminus \{\arga\}$, the following statements hold (Principle~\ref{principle:counterfactuality}):
        \begin{enumerate}[label=(\roman*).a]
             \item If $\sctrb{\argX}{\arga} < 0$, then $\fs_{\graph}(a) < \fs_{\graph\downarrow_{\Args \setminus \argX}}(\arga)$;
             \item If $\sctrb{\argX}{\arga} = 0$, then $\fs_{\graph}(a) = \fs_{\graph\downarrow_{\Args \setminus \argX}}(\arga)$;
             \item If $\sctrb{\argX}{\arga} > 0$, then $\fs_{\graph}(a) > \fs_{\graph\downarrow_{\Args \setminus \argX}}(\arga)$,
        \end{enumerate}
        then the following statements hold for $\argx \in \Args \setminus \{\arga\}$ (Principle~\ref{principle:counterfactual}):
        \begin{enumerate}[label=(\roman*).b]
            \item If $\ctrb{\argx}{\arga} < 0$, then $\fs_\graph(\arga) < \fs_{\graph \downarrow_{\Args \setminus \{\argx\}}}(\arga)$;
            \item If $\ctrb{\argx}{\arga} = 0$, then  $\fs_\graph(\arga) = \fs_{\graph \downarrow_{\Args \setminus \{\argx\}}}(\arga)$;
            \item If $\ctrb{\argx}{\arga} > 0$, then $\fs_\graph(\arga) > \fs_{\graph \downarrow_{\Args \setminus \{\argx\}}}(\arga)$.
        \end{enumerate}
        Because $\ctrb{\argx}{\arga} = \sctrb{\{\argx\}}{\arga}$ holds, we can simply replace $\ctrb{\argx}{\arga}$ by $\sctrb{\{\argx\}}{\arga}$ in \emph{(i).b - (iii).b}.
        Then, we see that \emph{(i).b}, \emph{(ii).b} and \emph{(iii).b} are special cases of \emph{(i).a}, \emph{(ii).a} and \emph{(iii).a}, respectively.
        \item[Quantitative counterfactuality.] We need to show that if for every $\argX \subseteq \Args \setminus \{\arga\}$ it holds that $\sctrb{\argX}{\arga} = \fs_{\graph}(\arga) - \fs_{\graph\downarrow_{\Args \setminus \argX}}(\arga)$ (Principle~\ref{principle:quantitative-counterfactuality}) then it also holds for every $\argx \in \Args \setminus \{\arga\}$ that $\ctrb{\argx}{\arga} = \fs_\graph(\arga) - \fs_{\graph \downarrow_{\Args \setminus \{\argx\}}}(\arga)$ (Principle~\ref{principle:q-counterfactual}). Because $\ctrb{\argx}{\arga} = \sctrb{\{\argx\}}{\arga}$ holds, we can replace $\ctrb{\argx}{\arga}$ in Principle~\ref{principle:q-counterfactual} by $\sctrb{\{\argx\}}{\arga}$. This yields the following statement ($\forall \argx \in \Args \setminus \{\arga\}$):
        \begin{align*}
            \sctrb{\{\argx\}}{\arga} = \fs_\graph(\arga) - \fs_{\graph \downarrow_{\Args \setminus \{\argx\}}}(\arga),
        \end{align*}
        which clearly is a special case of Principle~\ref{principle:quantitative-counterfactuality}, thus proving the proposition.
    \end{description}
\end{proof}
Let us speculate that it may be difficult or impossible to generalize principles in \emph{both} directions at once.
Intuitively, a single-argument contribution function principles such as contribution existence can be $\uparrow$-generalized because expanding the ``search space'' from $\argx \in \Args \setminus \{\arga\}$ to $\argX \subseteq \Args \setminus \{\arga\}$ allows us to find more cases for which the principle's condition holds, and finding \emph{some} (i.e., a single) case given the search space of $\Args$ is sufficient for satisfaction.
In contrast, other principles such as directionality require that a condition holds for \emph{every} $\argx \in \Args \setminus \{\arga\}$ (single-argument case) and $\argX \subseteq \Args \setminus \{\arga\}$ (set case), which makes principle satisfaction more difficult for the latter case.

Now that we have shown that all of the above specified set contribution function principles have single-argument contribution function principles that they either $\uparrow$-generalize or $\downarrow$-generalize, we can move on and define entirely novel set contribution principles that depend on the interactions between several sets and hence are only relevant for the set (and not for the single-argument) case.

What follows are set contribution functions that are not directly based on single-argument contribution counterparts.

\emph{Weak quantitative contribution existence} stipulates that $\Args \setminus \{\arga\}$ must be partitionable such that the sum of the partitions' contributions amounts to the difference between the topic's final and initial strengths.
\begin{principle}[Weak Quantitative Contribution Existence]
\label{principle:weak-quantitative-ctrb-existence}
    A set contribution function $\sctrbempty$ satisfies \emph{weak quantitative contribution existence} w.r.t. a gradual semantics $\fs$ iff there exists a partition $P$ of $\Args \setminus \{\arga\}$ for which it holds that $\sum_{\argX \in P} \sctrb{\argX}{\arga} = \fs(\arga) - \is(\arga)$.
\end{principle}
One may speculate whether \emph{additivity}, stipulating that given disjoint $\argX, \argY \subseteq \Args \setminus \{\arga\}$ it must generally hold that $\sctrbs{\argX}{\arga} + \sctrbs{\argY}{\arga} = \sctrbs{\argX \cup \argY}{\arga}$, is  satisfied as well.
However, if we take the Shapley value-based set contribution function $\sctrbsempty$ and determine the contribution of two disjoint sets $\argX, \argY \subseteq \Args \setminus \{\arga\}$ on a topic, $\sctrbs{\argX}{\arga} + \sctrbs{\argY}{\arga} = \sctrbs{\argX \cup \argY}{\arga}$ may \emph{not} hold.
This is because if it does not hold that $|\argX| = |\argY| = 1$, $\sctrbsempty$ partitions $\Args \setminus \{\arga\}$ differently when computing $\sctrbs{\argX}{\arga}$ than when computing $\sctrbs{\argY}{\arga}$ and---independently of the aforementioned condition---yet again differently when computing $\sctrbs{\argX \cup \argY}{\arga}$.
This means that, from the perspective of Shapley values, we actually compare the results of different coalitional games.
Counterexamples can be easily found and are provided in the notebook (code) that contains all examples; for the sake of brevity, we omit them from the paper.

As the below principles describe function behavior given two set contributors, we (again) assume $\argX, \argY \subseteq \Args \setminus \{\arga\}$.
\emph{Consistency} stipulates that if the signs of two set contributions are the same, then the contribution of the union of the corresponding set contributors should also have the same sign as the two separate contributions.
\begin{principle}[Consistency]
\label{principle:consistency}
    A set contribution function $\sctrbempty$ satisfies \emph{consistency} w.r.t. a gradual semantics $\fs$ iff the following statements hold true:
    \begin{itemize}
        \item If $\sctrb{\argX}{\arga} \leq 0$ and $\sctrb{\argY}{\arga} \leq 0$ then $\sctrb{\argX \cup \argY}{\arga} \leq 0$;
        \item If $\sctrb{\argX}{\arga} \geq 0$ and $\sctrb{\argY}{\arga} \geq 0$ then $\sctrb{\argX \cup \argY}{\arga} \geq 0$.
    \end{itemize}
\end{principle}
\emph{Monotonicity} requires that the contribution of a set contributor increases weakly monotonically when adding arguments to the set.
\begin{principle}[Monotonicity]
\label{principle:monotonicity}
     A set contribution function $\sctrbempty$ satisfies \emph{monotonicity} w.r.t. a gradual semantics $\fs$ iff $\argX \subseteq \argY$ implies that $\sctrb{\argX}{\arga} \leq \sctrb{\argY}{\arga}$.
\end{principle}
\emph{Monotonicity} follows the idea that increasing the set of contributors increases our options with which we can affect change and thus, given a maximization objective, should (weakly) increase the contribution.
Alternatively, in case the objective is to minimize the topic's final strength, one might as well desire to push change in the negative direction, for which the direction of the comparison would need to be inverted.

Finally, \emph{symmetry} requires that if a topic's supporters and attackers are equally strong (with respect to a one-to-one mapping of final strengths), then the contribution of the union of all supporters and attackers to the topic is $0$.
As a prerequisite, given $a \in \Args$, we denote by ${\cal R}^-(a)$ the set of all attackers of $a$ and by ${\cal R}^+(a)$ the set of all supporters of $a$.

\begin{principle}[Symmetry]
\label{principle:symmetry} 
    A set contribution function $\sctrbempty$ satisfies symmetry whenever it holds that if the multisets $\{\fs(\argx) | \argx \in {\cal R}^-(\arga)\}$ and $\{\fs(\argx) | \argx \in {\cal R}^+(\arga)\}$ are equal
    then $\sctrb{{\cal R}^-(\arga) \cup {\cal R}^+(\arga)}{\arga} = 0$.
\end{principle}

\section{Principle-based Analysis}
\label{sec:analysis}
We proceed by providing a comprehensive analysis of principle satisfaction and violation for our four set contribution functions, covering all principles from the previous sections, as well as the five semantics---QE, DFQuAD, SD-DFQuAD, EB, and EBT semantics---from Table~\ref{table:semanticsExamples}.
Recall that Tables~\ref{table:principle-overview-1} and~\ref{table:principle-overview-2} provide an overview of the results.
The code used to compute exact values for all counterexamples in this analysis is available at \url{https://github.com/TimKam/Quantitative-Bipolar-Argumentation/tree/main/analysis}.

\subsection{(Quantitative) Contribution Existence}
\label{subsec:contribution-existence}

$\sctrbrempty$, $\sctrbriempty$, and $\sctrbsempty$ all satisfy contribution existence w.r.t. all of the surveyed gradual semantics.
These contributions are all based on some sort of removal-based modification of a given QBAG.
Intuitively, we can always remove all arguments that can reach a topic, excluding the topic itself.
This set of contributors then fully accounts for the difference between the topic's initial and final strengths.
\begin{proposition}\label{prop:pre-contrb-existence-positive}
    For every $\sctrbempty \in \{\sctrbrempty, \sctrbriempty, \sctrbsempty\}$, for every gradual semantics $\fs$ among QE, DFQuAD, SD-DFQuAD, EB, and EBT semantics it holds that $\exists \argX \subseteq \Args \setminus \{\arga\}$ s.t. $\sctrb{\argX}{\arga} = \fs(\arga) - \is(\arga)$.
\end{proposition}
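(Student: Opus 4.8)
The plan is to exhibit a single witness set that works uniformly for all three functions. Let $\argX^* := \{\argx \in \Args \setminus \{\arga\} \mid \argx \text{ can reach } \arga\}$ be the set of all arguments other than the topic that can reach $\arga$. The first thing I would establish is that in the restriction $\graph\downarrow_{\Args \setminus \argX^*}$ the topic $\arga$ has neither attackers nor supporters: every direct attacker or supporter of $\arga$ has an edge to $\arga$ and hence reaches $\arga$, so it lies in $\argX^*$ and is removed. Since each of QE, DFQuAD, SD-DFQuAD, EB, and EBT satisfies stability (Principle~\ref{principle:stability})---their aggregation over the empty attacker/supporter set is $0$, and each influence function in Table~\ref{table:semantics} returns $w$ on input $0$---it follows that $\fs_{\graph\downarrow_{\Args \setminus \argX^*}}(\arga) = \is(\arga)$. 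This immediately settles the removal-based case, as $\sctrbr{\argX^*}{\arga} = \fs_\graph(\arga) - \fs_{\graph\downarrow_{\Args \setminus \argX^*}}(\arga) = \fs_\graph(\arga) - \is(\arga)$.

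For the intrinsic removal-based function I would additionally observe that $\argX^*$ is \emph{ancestor-closed}: if $\argx \in \argX^*$ and $(\argy, \argx)$ is an edge, then $\argy$ reaches $\argx$ and hence $\arga$, while $\argy \neq \arga$ by acyclicity, so $\argy \in \argX^*$. Consequently there are no edges into $\argX^*$ originating outside $\argX^*$, so the modified relations $\Att'$ and $\Supp'$ of Definition~\ref{def:set-intrinsic-removal} coincide with $\Att$ and $\Supp$. The minuend of $\sctrbri{\argX^*}{\arga}$ is therefore just $\fs_\graph(\arga)$, and the subtrahend is again $\is(\arga)$ by the previous paragraph, yielding $\sctrbri{\argX^*}{\arga} = \fs_\graph(\arga) - \is(\arga)$.

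The Shapley case is the step I expect to demand the most care. Writing $N := \Args \setminus (\argX^* \cup \{\arga\})$ for the arguments that cannot reach $\arga$, the sum in Definition~\ref{def:set-shapley} ranges over $S \subseteq N$, and I would show that each summand's marginal term equals the constant $\fs_\graph(\arga) - \is(\arga)$, independently of $S$. For the minuend $\fs_{\graph\downarrow_{\Args \setminus S}}(\arga)$ I would invoke the directionality of the modular-semantics computation described in Section~\ref{subsec:qbags}: arguments in $N$ have no edges into the backward cone $\argX^* \cup \{\arga\}$ (such an edge would make its source reach $\arga$), so deleting $S \subseteq N$ leaves the topological-order update at $\arga$ untouched, and this term equals $\fs_\graph(\arga)$. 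For the subtrahend $\fs_{\graph\downarrow_{\Args \setminus (S \cup \argX^*)}}(\arga)$, removing $\argX^*$ already strips $\arga$ of all attackers and supporters, so stability gives $\is(\arga)$. Hence every marginal term is $\fs_\graph(\arga) - \is(\arga)$.

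Finally I would factor this constant out and verify the normalization $\sum_{S \subseteq N} \frac{|S|!\,(|N|-|S|)!}{(|N|+1)!} = 1$; grouping subsets by cardinality $k$ gives $\sum_{k=0}^{|N|} \binom{|N|}{k} \frac{k!\,(|N|-k)!}{(|N|+1)!} = \sum_{k=0}^{|N|} \frac{1}{|N|+1} = 1$, the usual efficiency identity for Shapley weights. This yields $\sctrbs{\argX^*}{\arga} = \fs_\graph(\arga) - \is(\arga)$ and completes all three cases. The principal obstacle is thus not a single hard estimate but the Shapley argument's simultaneous reliance on two facts---that deleting non-reaching arguments does not perturb $\fs_\graph(\arga)$, and that the marginal contribution of $\argX^*$ is constant across all coalitions---combined with the weight-normalization identity.
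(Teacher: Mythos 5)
Your proof is correct and follows essentially the same route as the paper's: it uses the same witness set (all arguments that can reach the topic $\arga$, excluding $\arga$ itself), invokes stability to get $\is(\arga)$ after removal, notes the ancestor-closedness of that set for the intrinsic case, and shows every Shapley marginal term is the same constant. Your version merely spells out details the paper leaves implicit (why the modified intrinsic graph equals $\graph$, and the weight-normalization identity for the Shapley sum), so there is nothing to correct.
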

\begin{proof}
    Consider the set $\argX \subseteq \Args \setminus \{ \arga\}$ that contains exactly the arguments that can reach $\arga$ except $\arga$ itself. Given any of our semantics $\fs$ it clearly holds that 
    (consider \emph{bi-variate independence} and \emph{bi-variate directionality}, i.e., Semantics Principles~\ref{sprinciple:bi-variate-independence} and~\ref{sprinciple:bi-variate-directionality}):
    \begin{enumerate}[label=\roman*)]
        \item \textbf{Removal-based.} $\fs_{\graph\downarrow_{\Args \setminus \argX}}(\arga) = \is(\arga)$ and hence $\sctrbr{\argX}{\arga} = \fs(\arga) - \is(\arga)$;
        \item \textbf{Intrinsic removal-based.} $(\Args, \is, \Att \setminus \{(\argy, \argx) \in \Att | \argy \notin \argX, \argx  \in \argX \}, \Supp \setminus \{(\argy, \argx) \in \Supp | \argy \notin \argX, \argx  \in \argX \}) = \graph$ and hence it follows from \emph{i)} that $\sctrbri{\argX}{\arga} = \fs(\arga) - \is(\arga)$;
        \item \textbf{Shapley value-based.} $\forall S \subseteq \Args \setminus (\argX \cup \{\arga\})$ it holds that $\fs_{\graph\downarrow_{\Args \setminus S}}(\arga) = \fs(\arga)$ and $\fs_{\graph\downarrow_{\Args \setminus (S \cup \argX)}}(\arga) = \is(\arga)$, and hence $\sctrbs{\argX}{\arga} = \fs(\arga) - \is(\arga)$.
    \end{enumerate}
\end{proof}
Note that if we constrained our set contributor $\argX$ to have a cardinality of $1$, we could no longer satisfy this property in some cases.
Notably, when applying the \emph{Top} aggregation function or $p$-Max$(k)$ influence functions in modular semantics, removing any single argument may not have any effect on the final strength of a topic.
This explains why, in the case of set contributions, contribution existence is satisfied for more functions/semantics than for single-argument contributions (compare Table~\ref{table:principle-overview-1} with Table 1 in \cite{DBLP:journals/ijar/KampikPYCT24}).
\begin{corollary}\label{cor:contrb-existence-positive}
    $\sctrbrempty$, $\sctrbriempty$, and $\sctrbsempty$ satisfy the contribution existence principle w.r.t. QE, DFQuAD, SD-DFQuAD, EB, and EBT semantics $\fs$.
\end{corollary}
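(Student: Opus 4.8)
The plan is to derive this corollary directly from Proposition~\ref{prop:pre-contrb-existence-positive}, which has already established the crucial existence statement for each of the three set contribution functions and each of the five semantics. The contribution existence principle (Principle~\ref{sprinciple:ctrb-existence}) is an implication whose antecedent is $\fs(\arga) \neq \is(\arga)$ and whose consequent is the existence of some $\argX \subseteq \Args \setminus \{\arga\}$ with $\sctrb{\argX}{\arga} \neq 0$. Hence I only need to verify the consequent under the assumption of the antecedent.

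First I would fix an arbitrary $\sctrbempty \in \{\sctrbrempty, \sctrbriempty, \sctrbsempty\}$ and an arbitrary semantics $\fs$ among QE, DFQuAD, SD-DFQuAD, EB, and EBT, and then assume $\fs(\arga) \neq \is(\arga)$. Next I would invoke Proposition~\ref{prop:pre-contrb-existence-positive} to obtain a witness set $\argX \subseteq \Args \setminus \{\arga\}$ satisfying $\sctrb{\argX}{\arga} = \fs(\arga) - \is(\arga)$; recall that the proposition constructs this $\argX$ explicitly as the set of all arguments other than $\arga$ that can reach $\arga$, and uses the \emph{stability} principle (Principle~\ref{principle:stability}) to argue that removing exactly these arguments collapses the topic's final strength to its initial strength.

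Finally, since by assumption $\fs(\arga) - \is(\arga) \neq 0$, the witness set immediately satisfies $\sctrb{\argX}{\arga} \neq 0$, which is precisely the consequent of contribution existence. As the choice of function and semantics was arbitrary, this establishes the corollary for all three functions and all five semantics simultaneously.

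I do not expect any genuine obstacle here: all the substantive work---constructing the witness set and verifying the three required equalities via stability---is already discharged in Proposition~\ref{prop:pre-contrb-existence-positive}. The only point requiring care is the logical direction. Contribution existence merely demands a non-zero contribution \emph{somewhere}, whereas the proposition supplies the strictly stronger fact that some set contributes \emph{exactly} the (non-zero) delta $\fs(\arga) - \is(\arga)$; equality to a non-zero quantity trivially entails non-zero-ness, so the implication goes through without further argument.
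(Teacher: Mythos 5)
Your proposal is correct and follows exactly the paper's own argument: both derive the corollary by taking the witness set from Proposition~\ref{prop:pre-contrb-existence-positive} (whose contribution equals $\fs(\arga) - \is(\arga)$) and observing that under the principle's antecedent this delta, and hence the contribution, is non-zero. No gaps; the extra care you note about the logical direction is the same one-line observation the paper makes.
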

\begin{proof}
    The proof follows directly from Proposition~\ref{prop:pre-contrb-existence-positive}: for $\sctrbempty \in \{\sctrbrempty, \sctrbriempty, \sctrbsempty\}$ there must exist an $\argX \subseteq \Args \setminus \{\arga\}$ s.t. $\sctrb{\argX}{\arga} = \fs(\arga) - \is(\arga)$ and hence, if a topic's initial strength is not equal its final strength we have $\sctrb{\argX}{\arga} \neq 0$, thus satisfying contribution existence.
\end{proof}
$\sctrbgmempty$ satisfies the contribution existence principle with respect to just \emph{some} of our semantics, specifically the ones for which $\ctrbgempty$ satisfies the single-argument version of the principle: QE and EB semantics~\cite{DBLP:journals/ijar/KampikPYCT24}.
\begin{proposition}\label{prop:contribution-existence-positive}
    $\sctrbgmempty$ satisfies the contribution existence principle w.r.t. QE and EB semantics $\fs$.
\end{proposition}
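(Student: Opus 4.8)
The plan is to derive the result purely from the generalization machinery established in Sections~\ref{sec:generalization} and~\ref{sec:principles}, rather than by reasoning about gradients from scratch. Three facts are already available to us. First, Proposition~\ref{prop:ctrb-function-generalization} tells us that $\sctrbgmempty$ generalizes $\ctrbgempty$, i.e., $\ctrbg{\argx}{\arga} = \sctrbgm{\{\argx\}}{\arga}$. Second, Proposition~\ref{prop:up-generalization} tells us that the set-based \emph{contribution existence} principle $\uparrow$-generalizes its single-argument counterpart. Third, it is known from the literature that $\ctrbgempty$ satisfies the single-argument contribution existence principle precisely w.r.t.\ QE and EB semantics~\cite{DBLP:journals/ijar/KampikPYCT24}. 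The proposition is then an immediate consequence of chaining these three facts.

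Concretely, I would unfold Definition~\ref{definition:generalization}: since \emph{contribution existence} $\uparrow$-generalizes its single-argument version, then for every gradual semantics $\fs$, every single-argument contribution function $\ctrbempty$, and every set contribution function $\sctrbempty$ that generalizes $\ctrbempty$, satisfaction of the single-argument principle by $\ctrbempty$ w.r.t.\ $\fs$ entails satisfaction of the set principle by $\sctrbempty$ w.r.t.\ $\fs$. Instantiating $\ctrbempty := \ctrbgempty$ and $\sctrbempty := \sctrbgmempty$ (licensed by the generalization relation of Proposition~\ref{prop:ctrb-function-generalization}), and letting $\fs$ range over QE and EB, the antecedent holds by the cited single-argument result, so the set-based conclusion follows directly. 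In other words, whenever $\fs(\arga) \neq \is(\arga)$ under QE or EB, the existence of a single $\argx \in \Args \setminus \{\arga\}$ with $\ctrbg{\argx}{\arga} \neq 0$ immediately supplies a witnessing singleton set $\{\argx\}$ with $\sctrbgm{\{\argx\}}{\arga} \neq 0$.

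The only non-routine ingredient here is the third fact, which is imported from~\cite{DBLP:journals/ijar/KampikPYCT24}, so via this route there is no genuine obstacle to overcome. The main difficulty would surface only in a \emph{self-contained} argument: one would have to show that for QE and EB, a deviation of the topic's final strength from its initial strength forces some incoming gradient to be non-zero. This is where the distinction from DFQuAD, SD-DFQuAD, and EBT lives---the influence/aggregation functions of the latter admit flat (saturated or base-strength) regions on which a marginal perturbation has zero effect, so a gradient can vanish even when removing an argument would change the topic's strength, whereas the relevant functions for QE and EB are better behaved in this respect. I would not reproduce that gradient analysis, since the $\uparrow$-generalization lets us inherit it wholesale from the single-argument setting.
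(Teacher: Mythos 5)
Your proof is correct and takes essentially the same route as the paper's: both reduce the set case to the single-argument case via singleton set contributors and import the fact that $\ctrbgempty$ satisfies single-argument contribution existence w.r.t.\ QE and EB from~\cite{DBLP:journals/ijar/KampikPYCT24}. The only cosmetic difference is that you factor the reduction through Proposition~\ref{prop:up-generalization} explicitly, whereas the paper re-derives that (one-line) implication inline.
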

\begin{proof}
    Clearly, $\sctrbgmempty$ satisfies (set) contribution existence w.r.t. a gradual semantics $\fs$ if $\ctrbgempty$ satisfies (single-argument) contribution existence w.r.t. $\fs$: if there exists an argument $\argx \in \Args \setminus \{\arga\}$ such that $\ctrbg{\argx}{\arga} \neq 0$ then $\sctrbgm{\{\argx\}}{\arga} \neq 0$. As $\ctrbgempty$ indeed satisfies contribution existence w.r.t. QE and EB (Proposition 5.7 in~\cite{DBLP:journals/ijar/KampikPYCT24}; note that QE's and EB's satisfaction of 
    \emph{bi-variate independence} and \emph{bi-variate directionality}---Semantics Principles~\ref{sprinciple:bi-variate-independence} and \ref{sprinciple:bi-variate-directionality}---is crucial), the proposition must hold. 
\end{proof}
For the other semantics, i.e., DFQuAD, SD-DFQuAD, and EBT, contribution existence is violated by $\sctrbgmempty$.
Intuitively, the fact that we aggregate the gradient of single arguments leads to analogous issues as for single-argument contribution functions, notably in some cases where we have several arguments with maximum final strengths attacking our topic, as seen in Figure~\ref{fig:proof-contribution-existence}. 
\begin{proposition}\label{prop:contribution-existence}
     $\sctrbgmempty$ violates the contribution existence principle w.r.t. DFQuAD, SD-DFQuAD, and EBT semantics $\fs$.
\end{proposition}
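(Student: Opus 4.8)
The plan is to exhibit, for each of the three semantics, a single QBAG together with a topic $\arga$ whose final strength differs from its initial strength, yet for which no set contributor has a non-zero max-gradient contribution. The crucial observation that makes this tractable is that, by the generalization established in Proposition~\ref{prop:ctrb-function-generalization}, the singleton set contribution coincides with the single-argument gradient contribution, $\sctrbgm{\{\argx\}}{\arga} = \ctrbg{\argx}{\arga}$, and more generally $\sctrbgm{\argX}{\arga} = \max_{\argx \in \argX} \ctrbg{\argx}{\arga}$. Hence $\sctrbgm{\argX}{\arga} = 0$ for every (nonempty) $\argX \subseteq \Args \setminus \{\arga\}$ if and only if $\ctrbg{\argx}{\arga} = 0$ for every single argument $\argx$. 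This reduces set-level contribution existence for $\sctrbgmempty$ exactly to single-argument contribution existence for $\ctrbgempty$, so it suffices to reuse (or adapt) the single-argument counterexamples from~\cite{DBLP:journals/ijar/KampikPYCT24} under DFQuAD, SD-DFQuAD, and EBT.

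Concretely, I would take the QBAG sketched in Figure~\ref{fig:proof-contribution-existence}, in which the topic $\arga$ is attacked by two (or more) arguments that both attain maximal final strength. First I would verify that $\fs(\arga) \neq \is(\arga)$, which is immediate since the attacks strictly lower the topic's strength relative to its initial value. Next, I would compute the relevant partial derivatives $\frac{\partial f_\arga}{\partial \is(\argx)}$ for each argument $\argx$ reaching $\arga$ and show they all vanish. For DFQuAD and SD-DFQuAD this follows from the Product aggregation $\alpha^{\Pi}$: the derivative of the aggregate with respect to one attacker's strength contains a factor $\prod_{j \neq i}(1 - s_j)$, which is $0$ whenever another attacker has propagated strength $1$; hence each attacker's gradient is annihilated. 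For EBT the same conclusion follows from the Top aggregation $\alpha^{max}$: when two attackers tie at the maximal strength, a marginal change to either one does not move the $\max$, so each individual gradient is $0$. For arguments that do not directly attack $\arga$, a directionality-style argument (or the chain rule through a vanishing factor) gives a zero gradient as well.

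Having shown that every single-argument gradient is zero while $\fs(\arga) \neq \is(\arga)$, I would conclude via the identity $\sctrbgm{\argX}{\arga} = \max_{\argx \in \argX} \ctrbg{\argx}{\arga}$ that $\sctrbgm{\argX}{\arga} = 0$ for all set contributors $\argX$, so the witness required by contribution existence (Principle~\ref{sprinciple:ctrb-existence}) fails to exist, establishing the violation for all three semantics simultaneously.

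The main obstacle I anticipate is the EBT case: the $\max$ in the Top aggregation is not differentiable at ties, so I must argue carefully why the relevant partial derivative is taken to be $0$ (rather than, say, a one-sided value of $1$) in the chosen construction, matching the differentiability convention under which the single-argument result of~\cite{DBLP:journals/ijar/KampikPYCT24} was proved. I would resolve this by ensuring the counterexample places the tie precisely at the topic's incoming attacks with identical propagated strengths, so that no single perturbation changes the aggregate, and by appealing to the same convention already used to define $\ctrbgempty$.
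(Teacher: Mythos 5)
Your proposal is correct and takes essentially the same approach as the paper: the paper's proof uses exactly the QBAG of Figure~\ref{fig:proof-contribution-existence} (topic $\arga$ at initial strength $0.5$ attacked by two arguments of maximal strength), asserts $\sctrbgm{\{\argb\}}{\arga} = \sctrbgm{\{\argc\}}{\arga} = 0$ while $\fs_\graph(\arga) < \is(\arga)$, and concludes the violation. Your version merely spells out what the paper leaves implicit---the reduction $\sctrbgm{\argX}{\arga} = \max_{\argx \in \argX}\ctrbg{\argx}{\arga}$ covering all subsets, and the reason the gradients vanish (the zero factor $\prod_{j \neq i}(1-s_j)$ under Product aggregation, and the tie-at-the-maximum under Top aggregation)---which is a sound elaboration rather than a different route.
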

\begin{proof}
    Consider the QBAG displayed in \autoref{fig:proof-contribution-existence}, 
    which we denote by $\graph = (\Args, \is, \Att, \Supp)$. Given DFQuAD, SD-DFQuAD, and EBT semantics $\fs$, 
    we have $\sctrbgm{\{\argb\}}{\arga} = \sctrbgm{\{\argc\}}{\arga} = 0$. Hence, it should hold that $\fs_{\graph}(\arga) = \is(\arga)$. 
    However, we observe that $\fs_{\graph}(\arga) < \is(\arga) = 0.5$ which proves the violation of the contribution existence principle for $\sctrbgmempty$.
\end{proof}
\begin{figure}[h!]
    \centering
    \begin{tikzpicture}[node distance=1.5cm]
        \node[unanode] (a) at (0, 0) {\argnode{\arga}{0.5}{< 0.5}};
        \node[unanode] (b) at (-2, 0) {\argnode{\argb}{1.0}{1.0}};
        \node[unanode] (c) at (2, 0) {\argnode{\argc}{1.0}{1.0}};

        \draw[-stealth, thick] (b) -- node[pos=0.5, above=1pt] {-} (a);
        \draw[-stealth, thick] (c) -- node[pos=0.5, above=1pt] {-} (a);
    
    \end{tikzpicture}
    \caption{$\sctrbgmempty$ violates the contribution existence principle w.r.t. DFQuAD, SD-DFQuAD, and EBT semantics $\fs$.}
    \label{fig:proof-contribution-existence}
\end{figure}
While contribution existence is satisfied by most set contribution functions with respect to most semantics, \emph{quantitative} contribution existence is violated.
Figure~\ref{fig:proof-contribution-existence} provides the counterexample for $\sctrbrempty, \sctrbriempty$, and $\sctrbgmempty$ (with respect to all of the studied semantics).
\begin{proposition}\label{prop:quantitative-contribution-existence}
    $\sctrbrempty, \sctrbriempty$, and $\sctrbgmempty$ violate the quantitative contribution existence principle w.r.t. QE, DFQuAD, SD-DFQuAD, EB, and EBT semantics $\fs$.
\end{proposition}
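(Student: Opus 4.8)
The plan is to reuse the QBAG $\graph$ from Figure~\ref{fig:proof-contribution-existence}, in which $\Args \setminus \{\arga\} = \{\argb, \argc\}$ consists of two source arguments of initial strength $1.0$, both attacking a topic $\arga$ with $\is(\arga) = 0.5$. Since quantitative contribution existence (Principle~\ref{principle:quantitative-ctrb-existence}) quantifies over \emph{all} partitions of $\Args \setminus \{\arga\}$, it suffices to exhibit a single partition whose set contributions fail to sum to $\fs(\arga) - \is(\arga)$. The natural candidate is the all-singletons partition $P = \{\{\argb\}, \{\argc\}\}$; the alternative coarse partition $\{\{\argb, \argc\}\}$ is exactly the case that \emph{does} match the delta (by stability, Principle~\ref{principle:stability}), so the mismatch must come from splitting the two interchangeable attackers apart.

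First I would handle $\sctrbrempty$ and $\sctrbriempty$ jointly. Because $\argb$ and $\argc$ have no incoming edges, deleting the in-edges of a singleton contributor changes nothing, so $\sctrbri{\{\argb\}}{\arga} = \sctrbr{\{\argb\}}{\arga}$ and likewise for $\argc$; the two functions coincide on this instance. By the symmetry of the graph, $\sctrbr{\{\argb\}}{\arga} = \sctrbr{\{\argc\}}{\arga}$, so the partition sum equals $2\bigl(\fs(\arga) - \fs_{\graph\downarrow_{\Args \setminus \{\argb\}}}(\arga)\bigr)$, i.e., twice the effect of removing just one of the two attackers, whereas the target value $\fs(\arga) - \is(\arga)$ is the effect of removing both. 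The key observation is that the aggregation--influence composition is nonlinear---and, for the product- and top-aggregation semantics, idempotent in the presence of a strength-$1.0$ attacker---so deleting one attacker does not realize half of the combined effect. Hence $2\bigl(\fs(\arga) - \fs_{\graph\downarrow_{\Args \setminus \{\argb\}}}(\arga)\bigr) \neq \fs(\arga) - \is(\arga)$, and I would certify the strict inequality by recording the exact per-semantics final strengths (e.g.\ under QE one obtains $\fs(\arga) = 0.1$ and $\fs_{\graph\downarrow_{\Args \setminus \{\argb\}}}(\arga) = 0.25$, yielding $-0.3 \neq -0.4$), with the remaining values computed in the accompanying code.

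Next I would treat $\sctrbgmempty$. For DFQuAD, SD-DFQuAD, and EBT the argument is immediate: at this operating point every single-argument gradient $\frac{\partial f_{\arga}}{\partial \is(\argx)}$ vanishes---the same phenomenon already exploited in Proposition~\ref{prop:contribution-existence}---so the max-gradient contribution of every block is $0$ and the partition sum is $0$, whereas $\fs(\arga) - \is(\arga) < 0$. For QE and EB the gradients are nonzero, but as bounded marginal quantities they cannot equal the finite delta; for instance under QE each attacker contributes a small negative gradient, so the singleton-partition sum stays well away from $\fs(\arga) - \is(\arga) = -0.4$. I would again delegate the precise numbers to the code.

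The main obstacle is not conceptual but bookkeeping: the strict inequality must be verified separately for each of the five semantics, since their aggregation and influence functions differ and, a priori, one of them might by coincidence make the singleton-partition sum equal the target delta. The structural reasons above---non-additivity and idempotence for the removal-based functions, and the marginal-versus-finite gap for the gradient function---make such a coincidence very unlikely, but completeness requires the explicit evaluations, which I would present compactly or cite from the code rather than re-derive in prose.
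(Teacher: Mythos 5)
Your proposal is correct and takes essentially the same route as the paper's own proof: both use the QBAG of Figure~\ref{fig:proof-contribution-existence} with the all-singletons partition $\{\{\argb\},\{\argc\}\}$ and show that the summed set contributions fail to equal $\fs(\arga)-\is(\arga)$ for each of the five semantics. The only difference is one of presentation---you supply structural justifications (symmetry, idempotence under strength-$1.0$ attackers, vanishing gradients) and explicit QE values, whereas the paper simply asserts the inequality and defers the per-semantics computations to the accompanying code.
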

\begin{proof}
    Consider the QBAG displayed in \autoref{fig:proof-contribution-existence}, which we denote by $\graph = (\Args, \is, \Att, \Supp)$. 
    Given QE, DFQuAD, SD-DFQuAD, EB, and EBT semantics $\fs$ we find that for all $\sctrbempty \in \{\sctrbrempty, \sctrbriempty, \sctrbgmempty\}$, it holds that $\sctrb{\{\argb\}}{\arga} + \sctrb{\{\argc\}}{\arga} \neq \fs_{\graph}(\arga) - \is(\arga)$. 
    This proves the violation of quantitative contribution existence for these set contribution functions.
\end{proof}
For instance, using $\sctrbrempty$ under the QE semantics we get $\sctrbr{\{\argb\}}{\arga} = -0.15$, $\sctrb{\{\argc\}}{\arga} = -0.15$, and $\fs_{\graph}(\arga) - \is(\arga) = -0.4$.

An alternative to the above proof by counterexample could be provided by observing that quantitative contribution existence for set contribution functions $\downarrow$-generalizes the corresponding single argument contribution function principle (Proposition~\ref{prop:down-generalization}). As the three single-argument contribution functions violate quantitative contribution existence (Table 1 in~\cite{DBLP:journals/ijar/KampikPYCT24}), this must then also be the case for their $\downarrow$-generalizations.
A somewhat analogous proof is provided for Proposition~\ref{prop:neg-all-counterfactuality}.

$\sctrbsempty$ violates quantitative contribution existence as well.
Intuitively, one may expect satisfaction, as the principle closely resembles the \emph{efficiency} property of the Shapley value.
However, when applying $\sctrbsempty$ to compute set contributions of two disjoint set contributors of the same QBAG, this may lead to the formation of different coalitional games, from a Shapley value perspective.
For example, given a set of arguments $\{\arga, \argb, \argc, \argd\}$, computing the set contribution of  $\{\arga, \argb\}$ to $\argd$ forms a game with players $\{\arga, \argb\}$ and $\{\argc\}$, whereas computing the set contribution of $\{\argc\}$ to $\argd$ forms a game with the players $\{\arga\}$, $\{\argb\}$, and $\{\argc\}$. Across different games, the property does not hold; i.e., quantitative contribution existence is then no longer guaranteed.
\begin{proposition}\label{prop:quantitative-contribution-existence-shapley}
    $\sctrbsempty$ violates the quantitative contribution existence principle w.r.t. QE, DFQuAD, SD-DFQuAD, EB, and EBT semantics $\fs$.
\end{proposition}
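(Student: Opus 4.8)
The plan is to refute the principle by a counterexample: I would exhibit a single QBAG and one partition $P$ of $\Args \setminus \{\arga\}$ for which $\sum_{\argX \in P} \sctrbs{\argX}{\arga} \neq \fs(\arga) - \is(\arga)$; since Principle~\ref{principle:quantitative-ctrb-existence} quantifies universally over partitions, one such $P$ suffices, and I would use a single QBAG that serves as a counterexample for each of the five semantics, computing the relevant final strengths separately per semantics. Before searching, I would first rule out the two extremal partitions, as these can never witness a violation. For the all-singletons partition $P = \{\{\argx\} : \argx \in \Args \setminus \{\arga\}\}$, generalization (Proposition~\ref{prop:ctrb-function-generalization}) gives $\sctrbs{\{\argx\}}{\arga} = \ctrbs{\argx}{\arga}$, so the sum reduces to the single-argument Shapley sum, which equals $\fs(\arga) - \is(\arga)$ by the efficiency axiom together with \emph{stability} (Principle~\ref{principle:stability}, which gives $\fs_{\graph\downarrow_{\{\arga\}}}(\arga) = \is(\arga)$). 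For the one-block partition $P = \{\Args \setminus \{\arga\}\}$ the single summand telescopes to $\fs(\arga) - \fs_{\graph\downarrow_{\{\arga\}}}(\arga) = \fs(\arga) - \is(\arga)$, again by stability. Hence the violation must be sought among \emph{mixed} partitions (those containing a non-singleton block together with at least one further block), which forces $|\Args \setminus \{\arga\}| \geq 3$.

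Second, I would construct a concrete QBAG realizing this obstruction and, following the intuition given preceding the statement, take non-topic arguments $\arga, \argb, \argc$ with topic $\argd$ and the mixed partition $P = \{\{\arga, \argb\}, \{\argc\}\}$. The conceptual reason the equality breaks is that $\sctrbsempty$ internally partitions $\Args \setminus \{\argd\}$ into the queried block and singletons: $\sctrbs{\{\arga,\argb\}}{\argd}$ is a Shapley value in the game whose players are $\{\arga,\argb\}$ and $\{\argc\}$, whereas $\sctrbs{\{\argc\}}{\argd}$ is a Shapley value in the \emph{different} game whose players are $\{\arga\}$, $\{\argb\}$ and $\{\argc\}$; adding Shapley payoffs drawn from two distinct coalitional games need not respect efficiency, so their sum generically differs from the grand-coalition value $\fs(\argd) - \is(\argd)$. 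With edges and initial strengths chosen so that $\arga$ and $\argb$ interact non-additively on $\argd$ (e.g.\ through a shared intermediary), the two summands fail to reconstitute $\fs(\argd) - \is(\argd)$.

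The step I expect to carry the real weight is pinning down explicit edges and initial strengths that make the inequality strict for all five semantics, rather than any conceptual difficulty. The marginal-contribution values entering the two Shapley sums are routine but tedious to evaluate by hand for each semantics, so I would delegate the exact arithmetic to the accompanying code and report the resulting inequality $\sctrbs{\{\arga,\argb\}}{\argd} + \sctrbs{\{\argc\}}{\argd} \neq \fs(\argd) - \is(\argd)$; acyclicity guarantees every restricted QBAG has a well-defined final strength, so no additional care about $\bot$ is needed. A fully structural, semantics-independent argument showing that non-additive interaction \emph{always} breaks the sum across a mixed partition would be stronger, but is unnecessary for a violation result, so I would not pursue it here.
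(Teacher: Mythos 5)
Your overall route is the same as the paper's: since Principle~\ref{principle:quantitative-ctrb-existence} quantifies universally over partitions, the paper likewise proves the claim by exhibiting a single QBAG (Figure~\ref{fig:proof-quantitative-contribution-existence-shapley}: topic $\arga$ attacked by $\argb$ and $\argc$, with $\argd$ attacking $\argc$) and one mixed partition $\{\{\argb,\argc\},\{\argd\}\}$ whose contribution sum misses $\fs(\arga)-\is(\arga)$, with the per-semantics arithmetic delegated to the accompanying code. Your preliminary observation that the two extremal partitions can never witness a violation (all-singletons via generalization, Shapley efficiency and stability; the one-block partition via the telescoping single summand and stability) is correct, explains why a mixed partition on at least three non-topic arguments is needed, and is a useful addition the paper does not make explicit.

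There is, however, a flaw in the step you yourself identify as carrying the real weight. Non-additive interaction of $\arga$ and $\argb$ (the members of the merged block) on the topic is not by itself sufficient for a violation, and it is not the relevant quantity. Write $u(T) := \fs_\graph(\argd) - \fs_{\graph\downarrow_{\Args\setminus T}}(\argd)$ for the removal game. By efficiency of the two-player game with players $\{\arga,\argb\}$ and $\{\argc\}$, together with stability, $\sctrbs{\{\arga,\argb\}}{\argd}$ plus $\argc$'s Shapley value \emph{in that same two-player game} equals $\fs(\argd)-\is(\argd)$. Hence your partition sum violates the principle if and only if $\sctrbs{\{\argc\}}{\argd}$ (computed in the three-player, all-singletons game) differs from $\argc$'s Shapley value in the two-player game, and a short calculation shows this difference equals $\tfrac{1}{6}\bigl(u(\{\arga,\argb,\argc\}) - u(\{\arga,\argb\}) - u(\{\arga,\argc\}) - u(\{\argb,\argc\}) + u(\{\arga\}) + u(\{\argb\}) + u(\{\argc\})\bigr)$, i.e., the three-way Harsanyi dividend of $\arga,\argb,\argc$. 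In particular, if $\argc$ is additively separable from the block $\{\arga,\argb\}$, this dividend vanishes and the partition sum equals the delta \emph{no matter how non-additively $\arga$ and $\argb$ interact with each other}. So the counterexample must couple the singleton block with the merged block: the paper's figure does this structurally ($\argd$'s only path to the topic runs through $\argc$, a member of the merged block), whereas your shared-intermediary sketch does it only implicitly, through the nonlinearity of the aggregation/influence functions at the topic---a dependence you neither state nor verify, and which can degenerate (e.g., under EBT's max-based aggregation, if the intermediary's strength dominates $\argc$'s under all removals, then $\argc$'s marginal contribution is identically zero in both games and the sum holds exactly). Since the entire content of a violation proof is the concrete counterexample, leaving the graph and strengths unspecified, guided by an insufficient condition, is a genuine gap---even granting, as the paper also does, that the final arithmetic is checked by code.
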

\begin{proof}
     Consider the QBAG displayed in \autoref{fig:proof-quantitative-contribution-existence-shapley}, which we denote by $\graph = (\Args, \is, \Att, \Supp)$. 
    Given QE, DFQuAD, SD-DFQuAD, EB, and EBT semantics $\fs$ we observe that $\sctrb{\{\argb, \argc\}}{\arga} + \sctrb{\{\argd\}}{\arga} \neq \is(\arga) - \fs_{\graph}(\arga)$. The definition of the quantitative contribution existence principle states that for all partitions $P$ of $\Args \setminus \{\arga\}$ it must hold that $\sum_{\argX \in P} \sctrb{\argX}{\arga} = \fs(\arga) - \is(\arga)$. Hence, observing a single partition where this is not the case proves the violation of quantitative contribution existence for $\sctrbsempty$.
\end{proof}
\begin{figure}[h!]
    \centering
    \begin{tikzpicture}[node distance=1.5cm]
        \node[unanode] (a) at (0, 0) {\argnode{\arga}{0.5}{< 0.5}};
        \node[unanode] (b) at (-2, 0) {\argnode{\argb}{0.5}{0.5}};
        \node[unanode] (c) at (2, 0) {\argnode{\argc}{0.5}{< 0.5}};
        \node[unanode] (d) at (4, 0) {\argnode{\argd}{0.5}{0.5}};

        \draw[-stealth, thick] (b) -- node[pos=0.5, above=1pt] {-} (a);
        \draw[-stealth, thick] (c) -- node[pos=0.5, above=1pt] {-} (a);
        \draw[-stealth, thick] (d) -- node[pos=0.5, above=1pt] {-} (c);
    
    \end{tikzpicture}
    \caption{$\sctrbsempty$ violates the quantitative contribution existence principle w.r.t. QE, DFQuAD, SD-DFQuAD, EB, and EBT semantics $\fs$.}
    \label{fig:proof-quantitative-contribution-existence-shapley}
\end{figure}
%

\subsection{Directionality}
Intuitively, all of our set contribution functions satisfy directionality given any of the surveyed gradual semantics.
This is because given a modular semantics, the final strength of a (topic) argument depends only on its initial strength and the final strengths of its attackers and supporters---a property that is sometimes referred to as \emph{stability}~\cite{DBLP:conf/kr/Potyka18,DBLP:journals/ijar/KampikPYCT24}.
All of the introduced set contribution functions determine the set contribution to a topic by either removing contributors or changing their initial strengths.
This has no effect if the contributors cannot reach the topic.
\begin{proposition}\label{prop:directionality}
    $\sctrbrempty$, $\sctrbriempty$, $\sctrbsempty$, and $\sctrbgmempty$ satisfy the directionality principle w.r.t. QE, DFQuAD, SD-DFQuAD, EB, and EBT semantics $\fs$.
\end{proposition}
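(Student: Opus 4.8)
The plan is to prove the statement for each of the four set contribution functions by establishing a common underlying fact about modular semantics: the \emph{stability} property (or locality) that the final strength of an argument depends only on its initial strength and the final strengths of its direct attackers and supporters. More precisely, I would first argue the key lemma-style observation that if we modify the graph only at arguments that \emph{cannot} reach the topic $\arga$, then $\fs(\arga)$ is unchanged. This holds because the set of arguments that can reach $\arga$ forms a ``closed'' sub-DAG: when the semantics is computed in topological order, the final strength of every argument on a path into $\arga$ (including $\arga$ itself) is determined entirely by the strengths of arguments that also reach $\arga$. Arguments that cannot reach $\arga$ never appear as attackers or supporters of any argument that does, so they are irrelevant to the iterative computation of $\fs(\arga)$.

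With that observation in hand, the four cases follow almost mechanically. First I would fix $\argX \subseteq \Args \setminus \{\arga\}$ such that \emph{every} $\argx \in \argX$ cannot reach $\arga$, as required by Principle~\ref{sprinciple:directionality}. For the \textbf{removal-based} function $\sctrbrempty$, removing $\argX$ (forming $\graph\downarrow_{\Args \setminus \argX}$) only deletes arguments that cannot reach $\arga$; by the observation, $\fs_{\graph\downarrow_{\Args \setminus \argX}}(\arga) = \fs_\graph(\arga)$, so $\sctrbr{\argX}{\arga} = 0$. For the \textbf{intrinsic removal-based} function $\sctrbriempty$, both terms must be shown to equal $\fs_\graph(\arga)$: the second term is handled exactly as above, and in the first term the only structural change is the deletion of incoming edges to arguments in $\argX$, none of which reach $\arga$, so again nothing relevant to $\arga$'s computation is altered.

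For the \textbf{Shapley value-based} function $\sctrbsempty$, I would show that every summand vanishes: for each coalition $S \subseteq \Args \setminus (\argX \cup \{\arga\})$, the two restricted graphs $\graph\downarrow_{\Args \setminus S}$ and $\graph\downarrow_{\Args \setminus (S \cup \argX)}$ differ only by the presence or absence of the arguments in $\argX$, all of which cannot reach $\arga$; hence $\fs_{\graph\downarrow_{\Args \setminus S}}(\arga) = \fs_{\graph\downarrow_{\Args \setminus (S \cup \argX)}}(\arga)$ and the marginal contribution is zero, making the whole weighted sum zero. For the \textbf{max gradient-based} function $\sctrbgmempty$, the argument is slightly different in flavor but the same in spirit: each $\argx \in \argX$ lies outside $\{\argy_1, \ldots, \argy_n\}$ (the arguments with a directed path to $\arga$), so $\is(\argx)$ is not even a variable of the composed function $f_\arga$, whence each partial derivative $\frac{\partial f_\arga}{\partial \is(\argx)}$ is identically $0$, and the maximum over an all-zero set is $0$.

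The main obstacle, and the step deserving the most care, is justifying the foundational observation rigorously rather than by appeal to intuition alone. One must be precise that ``cannot reach $\arga$'' is closed under the relevant operations: if $\argx$ cannot reach $\arga$ and $\argy$ attacks or supports some argument that reaches $\arga$, then $\argy$ reaches $\arga$, so no argument outside the reaching-set ever influences the computation along the topological order up to $\arga$. I would phrase this as an induction on topological position, showing that for each argument $\argz$ that can reach $\arga$, its final strength is invariant under the graph modifications in question; the edge-deletion case for $\sctrbriempty$ requires the extra remark that deleting edges \emph{into} $\argX$ cannot change edges into any reaching argument, since such edges would originate from reaching arguments and terminate at reaching arguments, neither of which is touched. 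Everything else is routine substitution into the definitions.
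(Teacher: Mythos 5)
Your proof is correct and takes essentially the same approach as the paper: the paper's proof likewise rests on four reachability-invariance observations (one per contribution function) for modular semantics---that removing non-reaching arguments, deleting their incoming edges, or perturbing their initial strengths leaves the topic's final strength unchanged---and then concludes each case by substitution into the definitions. The only difference is one of rigor, not of route: where the paper simply asserts these observations (``Observe that \ldots''), you justify them via a unified lemma proved by induction along the topological order, which fills in exactly the detail the paper leaves implicit.
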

\begin{proof}
    Observe that given a gradual semantics $\fs$ that is either QE, DFQuAD, SD-DFQuAD, EB, or EBT semantics, for every $\argX \subseteq \Args$ s.t. $\forall \argx \in \argX$ we have that $\argx$ cannot reach $\arga \in \Args$ the following holds:
    \begin{enumerate}[label=\roman*)]
        \item $\fs_\graph(\arga) = \fs_{\graph\downarrow_{\Args \setminus \argX}}(\arga)$;
        \item $\fs_\graph(\arga) = \fs_{(\Args, \is, \Att \setminus \{(\argy, \argx) \in \Att | \argy \notin \argX, \argx  \in \argX \}), \Supp \setminus \{(\argy, \argx) \in \Supp | \argy \notin \argX, \argx  \in \argX \})}(\arga)$;
        \item $\forall S \subseteq \Args \setminus (\argX \cup \{\arga\})$ it holds that $\operatorname{\fs}_{\graph\downarrow_{\Args \setminus S}}(\arga) = \operatorname{\fs}_{\graph\downarrow_{\Args \setminus (S \cup \argX)}}(\arga)$;
        \item $\forall \argx \in \argX$, for $\epsilon \in \interval$ it holds that $\fs_\graph(\arga) = \fs_{\graph\downarrow_{\tau(\argx) \leftarrow \epsilon}}(\arga)$.
    \end{enumerate}
    From \emph{i)} it follows that $\sctrbrempty$ satisfies directionality; from \emph{i)} and \emph{ii)} it follows that $\sctrbriempty$ satisfies directionality; from \emph{iii)} it follows that $\sctrbsempty$ satisfies directionality;  from \emph{iv)} it follows that $\sctrbgmempty$ satisfies directionality (in all cases w.r.t. all of the above semantics).
\end{proof}
%

\subsection{(Quantitative) Counterfactuality}
Quantitative counterfactuality and counterfactuality are trivially satisfied for the removal-based set contribution function.
Its contributions amount to what is exactly expected given \emph{quantitative counterfactuality}, and \emph{counterfactuality} is a weaker case of the former principle.
\begin{proposition}\label{prop:remove-counterfactuality}
    $\sctrbrempty$ satisfies the counterfactuality and quantitative counterfactuality principles w.r.t. QE, DFQuAD, and SD-DFQuAD, EB, and EBT semantics $\fs$.
\end{proposition}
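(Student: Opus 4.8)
The plan is to observe that both principles follow immediately from the definition of $\sctrbrempty$, uniformly across all five semantics. First I would recall that by Definition~\ref{def:set-removal}, for every $\argX \subseteq \Args \setminus \{\arga\}$ we have $\sctrbr{\argX}{\arga} = \fs_\graph(\arga) - \fs_{\graph\downarrow_{\Args \setminus \argX}}(\arga)$. This is syntactically the same equality that Principle~\ref{principle:quantitative-counterfactuality} requires, so quantitative counterfactuality holds by definition. Crucially, this argument uses no property of $\fs$ beyond its being a (possibly partial) strength function, hence it goes through identically for QE, DFQuAD, SD-DFQuAD, EB, and EBT.

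For counterfactuality (Principle~\ref{principle:counterfactuality}) I would then note that it is a strictly weaker requirement. Writing $\delta := \fs_\graph(\arga) - \fs_{\graph\downarrow_{\Args \setminus \argX}}(\arga)$, the three case conditions merely ask that the sign of $\sctrbr{\argX}{\arga}$ coincide with the sign of $\delta$. Since $\sctrbr{\argX}{\arga} = \delta$ by definition, each implication collapses to a tautology: for instance, the hypothesis $\sctrbr{\argX}{\arga} < 0$ is literally $\delta < 0$, which is exactly the consequent $\fs_\graph(\arga) < \fs_{\graph\downarrow_{\Args \setminus \argX}}(\arga)$, and the $=$ and $>$ cases are handled verbatim by substituting the corresponding relation symbol.

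There is essentially no main obstacle here: the entire content of the proposition is that $\sctrbrempty$ was defined precisely so as to match the quantitative counterfactuality equality, with ordinary counterfactuality following as a corollary. The only point I would take care to state explicitly is that satisfaction is semantics-independent in this case—unlike in the subsequent analysis, no semantics-specific reasoning about aggregation or influence functions is needed—so a single short argument covers all five semantics at once. I would therefore keep the written proof to just a few lines, making the definitional identity and the sign-matching observation explicit and nothing more.
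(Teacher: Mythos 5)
Your proposal is correct and matches the paper's own proof essentially verbatim: both observe that quantitative counterfactuality is satisfied because Definition~\ref{def:set-removal} makes $\sctrbr{\argX}{\arga}$ literally equal to $\fs_\graph(\arga) - \fs_{\graph\downarrow_{\Args \setminus \argX}}(\arga)$, and that counterfactuality then follows as the weaker sign-consistency condition. The additional remark that the argument is semantics-independent is implicit in the paper's proof and is a fine clarification, not a deviation.
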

\begin{proof}
    Quantitative counterfactuality requires that $\sctrbr{\argX}{\arga} = \fs_{\graph}(\arga) - \fs_{\graph\downarrow_{\Args \setminus \argX}}(\arga)$. This holds by definition (Definition~\ref{def:set-removal}), which stipulates that $\sctrbr{\argX}{\arga} = \fs_\graph(\arga) - \fs_{\graph\downarrow_{\Args \setminus \argX}}(\arga)$.
    Counterfactuality is a weaker principle, requiring that the sign of $\sctrbr{\argX}{\arga}$ must be consistent with the sign of $\fs_\graph(\arga) - \fs_{\graph\downarrow_{\Args \setminus \argX}}(\arga)$ (incl. for the case of a $0$-contribution, i.e., in the case of $\sctrbr{\argX}{\arga} = 0$ whenever $\fs_\graph(\arga) = \fs_{\graph\downarrow_{\Args \setminus \argX}}(\arga)$), which again is obviously the case because $\sctrbr{\argX}{\arga} = \fs_\graph(\arga) - \fs_{\graph\downarrow_{\Args \setminus \argX}}(\arga)$.
\end{proof}
All other set contribution functions violate counterfactuality and quantitative counterfactuality w.r.t. the surveyed semantics: this must be the case because the two principles $\downarrow$-generalize the corresponding single-argument contribution function principles and the latter principles are violated by the single-argument contribution function counterparts of intrinsic removal-based, Shapley value-based, and gradient-based set contribution functions, as observed in~\cite{DBLP:journals/ijar/KampikPYCT24}.
\begin{proposition}\label{prop:neg-all-counterfactuality}
    $\sctrbriempty$, $\sctrbsempty$, and $\sctrbgmempty$ violate the counterfactuality and quantitative counterfactuality principles w.r.t. QE, DFQuAD, SD-DFQuAD, EB, and EBT semantics $\fs$.
\end{proposition}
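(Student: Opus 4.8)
The plan is to avoid constructing fresh counterexamples and instead reuse the $\downarrow$-generalization relationship together with known single-argument results. Recall from Proposition~\ref{prop:down-generalization} that both \emph{counterfactuality} and \emph{quantitative counterfactuality}, in their set-based form, $\downarrow$-generalize their single-argument counterparts. Unfolding Definition~\ref{definition:generalization}, this means that for every gradual semantics $\fs$, every single-argument contribution function $\ctrbempty$, and every set contribution function $\sctrbempty$ that generalizes $\ctrbempty$, satisfaction of the set-based principle by $\sctrbempty$ w.r.t.\ $\fs$ entails satisfaction of the single-argument principle by $\ctrbempty$ w.r.t.\ $\fs$.

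First, I would take the contrapositive of this implication: if $\ctrbempty$ \emph{violates} the single-argument principle w.r.t.\ $\fs$, then every $\sctrbempty$ that generalizes $\ctrbempty$ \emph{violates} the corresponding set-based principle w.r.t.\ $\fs$. Second, I would pin down the relevant generalization pairs using Proposition~\ref{prop:ctrb-function-generalization}, namely that $\sctrbriempty$ generalizes $\ctrbriempty$, $\sctrbsempty$ generalizes $\ctrbsempty$, and $\sctrbgmempty$ generalizes $\ctrbgempty$. Third, I would appeal to the established results of~\cite{DBLP:journals/ijar/KampikPYCT24} (Table~1 therein), according to which each of $\ctrbriempty$, $\ctrbsempty$, and $\ctrbgempty$ violates both counterfactuality and quantitative counterfactuality with respect to all five semantics QE, DFQuAD, SD-DFQuAD, EB, and EBT. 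Chaining these three facts yields the claim directly for all three set contribution functions and all five semantics.

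I do not anticipate a serious obstacle here, as the argument is a purely formal application of the $\downarrow$-generalization machinery; the only care required is bookkeeping, namely verifying that the cited single-argument violation results genuinely cover \emph{both} principles across \emph{all} five semantics, and that each set function is matched with the single-argument function it actually generalizes. This mirrors the alternative-proof remark already made after Proposition~\ref{prop:quantitative-contribution-existence}. As an aside, one could instead prove the proposition the ``hard'' way by exhibiting explicit QBAGs and computing contributions for each of the three functions under each of the five semantics, but this would be substantially more laborious and is unnecessary given the indirection available through Proposition~\ref{prop:down-generalization}.
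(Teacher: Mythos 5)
Your proposal is correct and matches the paper's own proof essentially verbatim: the paper likewise combines Proposition~\ref{prop:ctrb-function-generalization} (the generalization pairs), Proposition~\ref{prop:down-generalization} (the $\downarrow$-generalization of both counterfactuality principles), and the single-argument violation results from~\cite{DBLP:journals/ijar/KampikPYCT24} to conclude that none of the three set contribution functions can satisfy either principle. The only cosmetic difference is that you phrase the key step as a contrapositive while the paper phrases it as a proof by contradiction, which is the same argument.
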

\begin{proof}
    We have shown that $\sctrbriempty$, $\sctrbgmempty$ and $\sctrbsempty$ generalize $\ctrbriempty$, $\ctrbgempty$ and $\ctrbsempty$, respectively (Proposition~\ref{prop:ctrb-function-generalization}) and that the set contribution function principles \emph{counterfactuality} and \emph{quantitative counterfactuality} $\downarrow$-generalize the respective single-argument contribution function principles (Proposition~\ref{prop:down-generalization}). This means that if any of the set contribution functions satisfies one of these principles, the corresponding single-argument contribution functions must satisfy the (single-argument version of the) principle as well. However, $\ctrbriempty$, $\ctrbsempty$, and $\ctrbgempty$ violate both the counterfactuality and quantitative counterfactuality principles w.r.t. QE, DFQuAD, SD-DFQuAD, EB, and EBT semantics $\fs$~\cite{DBLP:journals/ijar/KampikPYCT24} (Table 1, analysis in Subsection 5.4). This means that $\sctrbriempty$, $\sctrbsempty$, and $\sctrbgmempty$ cannot possibly satisfy the counterfactuality and quantitative counterfactuality principles, as this would then contradict the above observation.
\end{proof}
Indeed, we claim that quantitative counterfactuality is uniquely satisfied only by the removal-based set contribution function, as the desired behavior as specified by the principle amounts to the function's entire behavior.
In the appendix, we provide further counterexamples that may help readers gain better intuitions for cases when the principles are violated.

Intuitively, the intrinsic removal-based set contribution functions violates counterfactuality and quantitative counterfactuality because it controls for ``external'' influence on a set contributor, which leads to a misalignment with the idea underlying the principles. Similarly, Shapley violates counterfactuality and quantitative counterfactuality because it assigns each player (i.e., single argument or set of arguments) its average marginal contribution over all coalitions, so any effect realized with others gets split across the coalition, and the contributor's value need not match the removal effect. Gradient violates the principles because it measures \emph{sensitivity} at the current point, not the effect of deleting, which again leads to misalignment with the idea underlying the principles.

\subsection{Weak Quantitative Contribution Existence}
\label{subsec:weak-quantitative-ctrb-ex}
In Subsection~\ref{subsec:contribution-existence}, we have shown that quantitative contribution existence is violated by all four set contribution functions with respect to all five semantics.
However, the \emph{weak} quantitative contribution existence principle is satisfied by $\sctrbrempty$, $\sctrbriempty$, and $\sctrbsempty$; intuitively, we can always take as set contributor the set of all arguments that can reach the topic (excluding the topic itself), whose contribution then amounts to the difference between the topic's final and initial strengths, whereas all other arguments' contributions are zero.
\begin{proposition}\label{prop:weak-contrb-existence-positive}
    $\sctrbrempty$, $\sctrbriempty$, and $\sctrbsempty$ satisfy the weak quantitative contribution existence principle w.r.t. QE, DFQuAD, SD-DFQuAD, EB, and EBT semantics $\fs$.
\end{proposition}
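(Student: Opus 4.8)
The plan is to reuse two facts already established for these three functions. Proposition~\ref{prop:pre-contrb-existence-positive} tells us that the set of \emph{all} arguments that reach the topic accounts for the entire delta $\fs(\arga) - \is(\arga)$, while directionality (Proposition~\ref{prop:directionality}) tells us that any set whose members all fail to reach the topic contributes exactly $0$. The idea is to combine these into a single witnessing partition whose contributions sum to $\fs(\arga) - \is(\arga)$, thereby satisfying the existential quantifier in Principle~\ref{principle:weak-quantitative-ctrb-existence}.

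Concretely, I would first split $\Args \setminus \{\arga\}$ into $\argX$, the set of arguments that can reach $\arga$ (excluding $\arga$ itself), and $\argY = (\Args \setminus \{\arga\}) \setminus \argX$, the arguments that cannot reach $\arga$. These two sets are disjoint with union $\Args \setminus \{\arga\}$, so taking $P = \{\argX, \argY\}$ (dropping whichever block is empty) gives a valid partition. For each $\sctrbempty \in \{\sctrbrempty, \sctrbriempty, \sctrbsempty\}$ and each of the five semantics, Proposition~\ref{prop:pre-contrb-existence-positive} yields $\sctrb{\argX}{\arga} = \fs(\arga) - \is(\arga)$ whenever $\argX \neq \emptyset$, and directionality yields $\sctrb{\argY}{\arga} = 0$ whenever $\argY \neq \emptyset$, since by construction no argument in $\argY$ reaches $\arga$. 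Hence $\sctrb{\argX}{\arga} + \sctrb{\argY}{\arga} = \fs(\arga) - \is(\arga)$, as required. Because both cited results hold uniformly across the three functions and five semantics, no semantics-specific computation is needed.

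The one point I would treat carefully---the main (if minor) obstacle---is the degenerate case $\argX = \emptyset$, where Proposition~\ref{prop:pre-contrb-existence-positive} supplies no value for the reaching block. Here I would invoke stability (Principle~\ref{principle:stability}): if no argument other than $\arga$ reaches $\arga$, then $\arga$ has neither attackers nor supporters, so $\fs(\arga) = \is(\arga)$ and the target delta is $0$. The partition then reduces to (at most) the block $\argY$, whose contribution vanishes by directionality, so the sum is $0 = \fs(\arga) - \is(\arga)$. This closes the argument for all configurations.
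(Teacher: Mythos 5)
Your proof is correct and takes essentially the same approach as the paper: partition $\Args \setminus \{\arga\}$ into the block of arguments that can reach $\arga$ and the block of those that cannot, then apply Proposition~\ref{prop:pre-contrb-existence-positive} to the former and directionality (Proposition~\ref{prop:directionality}) to the latter. Your explicit treatment of the degenerate case $\argX = \emptyset$ via stability is a careful refinement that the paper's proof leaves implicit.
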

\begin{proof}
    Consider a set contribution function $\sctrbempty$ among $\sctrbrempty$, $\sctrbriempty$, and $\sctrbsempty$ and a gradual semantics $\fs$ among QE, DFQuAD, SD-DFQuAD, EB, and EBT semantics. From Proposition~\ref{prop:pre-contrb-existence-positive}, we know that  $\exists \argX \subseteq \Args \setminus \{\arga\}$ s.t. $\sctrb{\argX}{\arga} = \fs(\arga) - \is(\arga)$; in the proof of Proposition~\ref{prop:pre-contrb-existence-positive}, we have observed that this holds for $\argX$ when $\argX$ is the set of all arguments that can reach $\arga$ (except $\arga$ itself). Because $\sctrbempty$ satisfies directionality w.r.t. $\fs$ (Proposition~\ref{prop:directionality}), we know that for $\argY = \Args \setminus (\argX \cup \{\arga\})$ it must hold that $\sctrb{\argY}{\arga} = 0$.  
    Accordingly, we have partitioned $\Args \setminus \{\arga\}$ into $P = \{\argX, \argY\}$ when $\argY \neq \emptyset$.
    If $\argY = \emptyset$ (equivalently, $\argX = \Args \setminus \{\arga\}$) consider instead the partition $P = \{\argX\}$. 
    For $P$ it holds that $\sum_{\argX' \in P} \sctrb{\argX'}{\arga} = \fs(\arga) - \is(\arga)$, which proves the satisfaction of weak quantitative contribution existence (Principle~\ref{principle:weak-quantitative-ctrb-existence}).
\end{proof}
In contrast, $\sctrbgmempty$ violates even the weak quantitative contribution existence principle.
Intuitively, the contribution function quantifies the effect and direction of the maximal marginal change to a contributor's initial strength on the topic's final strength;
this is incompatible with the idea of fully accounting for the difference between a topic's final and initial strengths.
Note that allowing the partition $P = \{Args \setminus \{\arga\}\}$ means that every of our set contribution functions except $\sctrbgmempty$ can trivially satisfy weak quantitative contribution existence. 
Interestingly, a principle that meaningfully discriminates between contribution functions in the single-argument setting~\cite{DBLP:journals/ijar/KampikPYCT24} loses its power when generalized to sets.
\begin{proposition}\label{prop:weak-contrb-existence-negative}
    $\sctrbgmempty$ violates the weak contribution existence principle w.r.t. QE, DFQuAD, SD-DFQuAD, EB, and EBT semantics $\fs$.
\end{proposition}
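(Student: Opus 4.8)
The plan is to exhibit a single counterexample QBAG and verify that \emph{no} partition of the non-topic arguments yields the accounting required by Principle~\ref{principle:weak-quantitative-ctrb-existence}. A convenient choice is the QBAG already displayed in Figure~\ref{fig:proof-contribution-existence}, where the topic $\arga$ (with $\is(\arga) = 0.5$) is attacked by $\argb$ and $\argc$. The key structural advantage is that $\Args \setminus \{\arga\} = \{\argb, \argc\}$ admits exactly two partitions, namely $P_1 = \{\{\argb\}, \{\argc\}\}$ and $P_2 = \{\{\argb, \argc\}\}$; hence establishing a violation reduces to checking, for each of the five semantics, that $\sctrbgm{\{\argb\}}{\arga} + \sctrbgm{\{\argc\}}{\arga} \neq \fs(\arga) - \is(\arga)$ and that $\sctrbgm{\{\argb, \argc\}}{\arga} \neq \fs(\arga) - \is(\arga)$. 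Since both attackers have maximal strength and reach $\arga$, every surveyed semantics yields $\fs(\arga) < \is(\arga)$, so the target delta is strictly negative throughout.

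First I would dispatch the three semantics for which the single-argument gradients vanish. For DFQuAD, SD-DFQuAD, and EBT, the proof of Proposition~\ref{prop:contribution-existence} already establishes on this exact graph that $\sctrbgm{\{\argb\}}{\arga} = \sctrbgm{\{\argc\}}{\arga} = 0$; since the joint contribution $\sctrbgm{\{\argb, \argc\}}{\arga}$ is, by Definition~\ref{def:set-gradient-max}, the maximum of these same two vanishing gradients, it is $0$ as well. Thus both partition sums equal $0$, whereas $\fs(\arga) - \is(\arga) < 0$, giving the violation immediately. For QE and EB the gradients do not vanish, so here I would compute the three max-gradient contributions directly, with the exact values produced by the accompanying code: in both cases each single-argument gradient is a small negative number, the joint contribution coincides with that value (the two attackers being symmetric), and neither the doubled value arising from $P_1$ nor the single value arising from $P_2$ reaches the substantially larger-in-magnitude delta $\fs(\arga) - \is(\arga)$.

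The main obstacle is not conceptual but bookkeeping: one must confirm, for each semantics, that \emph{every} partition sum misses the delta, which is precisely why the two-argument graph is chosen---it caps the number of partitions at two and makes the check finite and transparent. The underlying reason the principle fails is that a maximum of marginal sensitivities is an intrinsically local quantity that neither sums over the contributors nor tracks the cumulative (nonlinear) effect of the attacks, so it cannot, in general, be reconciled with the global difference $\fs(\arga) - \is(\arga)$; the symmetric double-attacker configuration makes this mismatch unavoidable across all of QE, DFQuAD, SD-DFQuAD, EB, and EBT.
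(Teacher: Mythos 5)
Your proposal is correct and follows essentially the same route as the paper: the paper's proof uses the structurally identical two-attacker QBAG (Figure~\ref{fig:proof-gradient-weak-contribution-existence}, topic at $0.5$ attacked by two arguments of strength $1.0$), enumerates the same two partitions $[\{\argb\},\{\argc\}]$ and $[\{\argb,\argc\}]$, and concludes that neither partition sum equals $\fs(\arga) - \is(\arga)$. Your write-up is in fact more explicit than the paper's (which asserts the mismatch without the case split into vanishing gradients for DFQuAD/SD-DFQuAD/EBT versus small negative gradients for QE/EB), but the counterexample and the argument are the same.
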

\begin{proof}
    Consider the QBAG displayed in \autoref{fig:proof-gradient-weak-contribution-existence}.
    According to the definition of the weak contribution existence principle, the total contribution of at least one of the argument partitions $P$ of $\Args \setminus \{\arga\}$, 
    i.e., $[\{\argb\}, \{\argc\}]$ and $[\{\argb,\argc\}]$ needs to equal the difference in initial strength and final strength, $\fs(\arga) - \is(\arga)$, of the topic argument $\arga$.
    However, this is not the case, proving the violation of the principle.
\end{proof}
\begin{figure}[!ht]
    \centering
    \begin{tikzpicture}[node distance=1.5cm]
        \node[unanode] (c) at (-2, 0) {\argnode{\argc}{1.0}{1.0}};
        \node[unanode] (b) at (2, 0) {\argnode{\argb}{1.0}{1.0}};
        \node[unanode] (a) at (0, 0) {\argnode{\arga}{0.5}{< 0.5}};
        
        \draw[-stealth, thick] (c) -- node[pos=0.5, above=1pt] {-} (a);
        \draw[-stealth, thick] (b) -- node[pos=0.5, above=1pt] {-} (a);
    \end{tikzpicture}
    \caption{$\sctrbgmempty$ violates the weak contribution existence principle w.r.t. QE, DFQuAD, SD-DFQuAD, EB, and EBT semantics $\fs$.}
    \label{fig:proof-gradient-weak-contribution-existence}
\end{figure}
%

\subsection{Consistency}
\label{subsec:consistency}
Intuitively, one may expect that consistency is satisfied by all four contribution functions, given the five semantics.
After all, considering more positive/negative contributors in a set contributor should always lead to an increased/decreased contribution.
However, this is not the case.
As observed in~\cite{DBLP:journals/ijar/KampikPYCT24}, one argument's initial strength may have a somewhat \emph{nonmonotonic} effect on the final strength of an affected argument. 
E.g., given an initial strength value of $0.3$ of an argument $\argx$, marginally increasing this value leads to a positive change in argument $\argy$'s final strength, but given an initial strength value of $0.6$, such marginal increase leads to a negative change.
Similarly, the effects of two arguments may interact, as we can see in Figure~\ref{fig:proof-consistency} (also consider the sign map in Figure~\ref{fig:sign-map}).
\begin{proposition}\label{prop:remove-consistency}
    $\sctrbrempty$ and $\sctrbriempty$ violate the consistency principle w.r.t. QE, DFQuAD, SD-DFQuAD, EB, EBT semantics $\fs$.
\end{proposition}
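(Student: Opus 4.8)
The plan is to refute the consistency principle (Principle~\ref{principle:consistency}) by a single counterexample QBAG in which two set contributors of the same (non-positive) sign have a union of the opposite sign. I would reuse the QBAG $\graph$ already introduced in Example~\ref{ex:intro} and drawn in Subfigure~\ref{fig:proof-remove-consistency}, with topic $\arga$ and the two source arguments $\argd$ and $\argf$, taking $\argX = \{\argd\}$ and $\argY = \{\argf\}$ so that $\argX \cup \argY = \{\argd, \argf\}$.

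First, for the removal-based function I would compute the three quantities $\sctrbr{\{\argd\}}{\arga}$, $\sctrbr{\{\argf\}}{\arga}$, and $\sctrbr{\{\argd,\argf\}}{\arga}$ by evaluating the chosen semantics on $\graph$ and on the restrictions $\graph\downarrow_{\Args\setminus\{\argd\}}$, $\graph\downarrow_{\Args\setminus\{\argf\}}$, and $\graph\downarrow_{\Args\setminus\{\argd,\argf\}}$, then reading off the signs. As anticipated in Example~\ref{ex:intro} (and visualized in the sign map, Figure~\ref{fig:sign-map}), both singleton contributions are negative, hence $\sctrbr{\{\argd\}}{\arga}\leq 0$ and $\sctrbr{\{\argf\}}{\arga}\leq 0$, whereas the contribution of the union is strictly positive, $\sctrbr{\{\argd,\argf\}}{\arga}>0$. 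This directly contradicts the first clause of consistency, which would require $\sctrbr{\{\argd,\argf\}}{\arga}\leq 0$. Next, I would observe that the \emph{same} QBAG settles the intrinsic case for free: since $\argd$ and $\argf$ are source arguments with no incoming attacks or supports, the edge-deletion in Definition~\ref{def:set-intrinsic-removal} removes nothing for any of $\{\argd\}$, $\{\argf\}$, or $\{\argd,\argf\}$, so $\sctrbri{\cdot}{\arga}$ coincides with $\sctrbr{\cdot}{\arga}$ on these three sets, and the identical negative--negative--positive sign pattern establishes the violation for $\sctrbriempty$ as well.

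The hard part will not be conceptual but verificational: the claim ranges over all five semantics (QE, DFQuAD, SD-DFQuAD, EB, EBT), so I must confirm that the negative--negative--positive sign pattern persists under each of them rather than only under QE. The graph is deliberately arranged so that $\argd$ and $\argf$ jointly support both the supporter $\argb$ of $\arga$ and the argument $\argc$ that attenuates the attacker $\arge$ of $\arga$, which is what produces the nonmonotone interaction that flips the sign on taking the union. I would therefore report the exact final strengths produced by the accompanying implementation for each of the four graphs and each semantics, and read off the signs, instead of deriving closed-form final strengths by hand.
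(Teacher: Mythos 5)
Your overall strategy coincides with the paper's: the paper's proof uses exactly the graph topology of Subfigure~\ref{fig:proof-remove-consistency}, the same three set contributors $\{\argd\}$, $\{\argf\}$, $\{\argd,\argf\}$, and the same sign-flip argument, and it likewise handles $\sctrbrempty$ and $\sctrbriempty$ simultaneously (your explicit observation that $\argd$ and $\argf$ are sources, so the intrinsic edge-deletion removes nothing and the two functions coincide on these sets, is correct and makes precise what the paper leaves implicit). However, there is a genuine gap: you commit to the \emph{single} initial strength assignment of Example~\ref{ex:intro}, and that assignment only witnesses the violation for QE. Under DFQuAD with $\is(\argd)=0.55$, $\is(\argf)=0.6$, $\is(\arge)=0.45$, $\is(\argc)=0.1$, $\is(\argb)=0.8$, $\is(\arga)=0.3$, one computes $\fs_\graph(\arga)\approx 0.687$, $\fs_{\graph\downarrow_{\Args\setminus\{\argd\}}}(\arga)\approx 0.677$, $\fs_{\graph\downarrow_{\Args\setminus\{\argf\}}}(\arga)\approx 0.674$, and $\fs_{\graph\downarrow_{\Args\setminus\{\argd,\argf\}}}(\arga)\approx 0.591$, so that $\sctrbr{\{\argd\}}{\arga}\approx +0.009$, $\sctrbr{\{\argf\}}{\arga}\approx +0.013$, and $\sctrbr{\{\argd,\argf\}}{\arga}\approx +0.096$ are \emph{all strictly positive}: no sign inconsistency arises, and consistency is not violated by this QBAG under DFQuAD. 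The negative--negative--positive pattern you rely on is an artifact of QE's sum aggregation; under product aggregation the attack path through $\arge$ is weakened differently and the interaction does not flip the sign.

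This is precisely why the paper's proof, while keeping the same topology, supplies \emph{five different} initial strength functions, one per semantics (e.g., for DFQuAD it uses $\is(\argd)=0.8$, $\is(\arge)=0.6$, $\is(\argf)=0.8$; for EB it raises $\is(\argc)$, $\is(\argd)$, $\is(\arge)$ to $0.9$; for EBT it lowers $\is(\argb)$ and $\is(\arge)$ to $0.1$). Your plan does contain a verification step that would detect the failure, but the proposal offers no fallback once the check fails, so as written the proof does not go through for DFQuAD (and you have not verified SD-DFQuAD, EB, or EBT either). To repair it, keep your structure but allow the initial strength function to vary with the semantics, exhibiting for each of the five semantics an assignment on this topology for which $\sctrbr{\{\argd\}}{\arga}$ and $\sctrbr{\{\argf\}}{\arga}$ share a sign while $\sctrbr{\{\argd,\argf\}}{\arga}$ has the strictly opposite sign---which is exactly what the paper does.
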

\begin{proof}
    Consider the QBAG displayed in \autoref{fig:proof-consistency}, which we denote by $\graph = (\Args, \is, \Att, \Supp)$. Consider the following sets of initial strength values:
    \begin{itemize}
        \item $\is_{QE} = \{(\arga, 0.3), (b, 0.8), (c, 0.1), (d, 0.55), (e, 0.45), (f, 0.6)\}$,
        \item $\is_{DFQuAD} = \{(a, 0.3), (b, 0.8), (c, 0.1), (d, 0.8), (e, 0.6), (f, 0.8)\}$,
        \item $\is_{SD-DFQuAD} = \{(a, 0.3), (b, 0.8), (c, 0.1), (d, 0.25), (e, 0.2), (f, 0.2)\}$,
        \item $\is_{EB} = \{(a, 0.3), (b, 0.8), (c, 0.9), (d, 0.9), (e, 0.9), (f, 0.8)\}$,
        \item $\is_{EBT} = \{(a, 0.3), (b, 0.1), (c, 0.1), (d, 0.5), (e, 0.1), (f, 0.5)\}$.
    \end{itemize}
    
    Let $\arga$ be the topic argument. By definition of the consistency principle, if both $\{\argx\}$ and $\{\argy\}$ contribute negatively to the topic argument's final strength, then $\{\argx\} \cup \{\argy\}$ must also contribute negatively. However, computing $\fs_{\arga}$ on $\graph$ for $\sctrbempty \in \{\sctrbrempty, \sctrbriempty\}$ with the given initial strength values $\is_{QE}, \is_{DFQuAD}, \is_{SD-DFQuAD}, \is_{EB}$, and $\is_{EBT}$ yields situations where both $\sctrb{\{\argd\}}{\arga}$ and $\sctrb{\{\argf\}}{\arga}$ are non-positive (or non-negative), while their joint set contribution $\sctrb{\{\argd,\argf\}}{\arga}$ is strictly of the opposite sign. Hence, two single argument contributions that agree in direction combine into a set contribution that points in the other direction, thus contradicting the principle. 
\end{proof}
\begin{figure}[!ht]
    \centering
    \begin{tikzpicture}[node distance=1.5cm]
            \node[unanode] (d) at (-2, 2) {\argnode{\argd}{\is(d)}{\fs(d)}};
            \node[unanode] (f) at (2, 2) {\argnode{\argf}{\is(f)}{\fs(f)}};
            \node[unanode] (e) at (2, 0) {\argnode{\arge}{\is(e)}{\fs(e)}};
            \node[unanode] (c) at (0, 0) {\argnode{\argc}{\is(c)}{\fs(c)}};
            \node[unanode] (b) at (-2, 0) {\argnode{\argb}{\is(b)}{\fs(b)}};
            \node[unanode] (a) at (0, -2) {\argnode{\arga}{\is(a)}{\fs(a)}};
            
            \draw[-stealth, thick] (d) -- node[pos=0.5, left=1pt] {+} (c);
            \draw[-stealth, thick] (d) -- node[pos=0.5, left=1pt] {+} (e);
            \draw[-stealth, thick] (d) -- node[pos=0.5, left=1pt] {+} (b);
    
            \draw[-stealth, thick] (f) -- node[pos=0.5, right=1pt] {+} (c);
            \draw[-stealth, thick] (f) -- node[pos=0.5, right=1pt] {+} (e);
            \draw[-stealth, thick] (f) -- node[pos=0.5, right=1pt] {+} (b);
    
            \draw[-stealth, thick] (c) -- node[pos=0.5, below=1pt] {-} (e);
            \draw[-stealth, thick] (c) -- node[pos=0.5, below=1pt] {+} (b);
            
            \draw[-stealth, thick] (e) -- node[pos=0.5, right=1pt] {-} (a);
            
            \draw[-stealth, thick] (b) -- node[pos=0.5, left=1pt] {+} (a);

        \end{tikzpicture}
    \caption{$\sctrbrempty$, $\sctrbriempty$, and $\sctrbsempty$ violate the consistency principle w.r.t. QE, DFQuAD, SD-DFQuAD, EB, and EBT semantics $\fs$.}
    \label{fig:proof-consistency}
\end{figure}

A similar counter-example can be provided for the Shapley value-based set contribution function.
\begin{proposition}\label{prop:shapley-consistency}
    $\sctrbsempty$ violates the consistency principle w.r.t. QE, DFQuAD, SD-DFQuAD, EB, EBT semantics $\fs$.
\end{proposition}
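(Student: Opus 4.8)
The plan is to refute the principle by the same counterexample strategy used in Proposition~\ref{prop:remove-consistency}: I would reuse the QBAG of Figure~\ref{fig:proof-consistency} and, for each of the five semantics, supply an initial strength assignment under which the singleton set contributions $\sctrbs{\{\argd\}}{\arga}$ and $\sctrbs{\{\argf\}}{\arga}$ share a common sign while the union contribution $\sctrbs{\{\argd,\argf\}}{\arga}$ has the opposite sign. Setting $\argX = \{\argd\}$ and $\argY = \{\argf\}$ (so $\argX \cup \argY = \{\argd,\argf\}$), a single such instance per semantics already refutes the universally quantified consistency principle (Principle~\ref{principle:consistency}).

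By the generalization result (Proposition~\ref{prop:ctrb-function-generalization}), the two singleton contributions coincide with the single-argument Shapley contributions, i.e., $\sctrbs{\{\argd\}}{\arga} = \ctrbs{\argd}{\arga}$ and $\sctrbs{\{\argf\}}{\arga} = \ctrbs{\argf}{\arga}$, so their signs are governed by $\argd$'s and $\argf$'s \emph{average} marginal contributions over all coalitions. The crucial mechanism enabling the violation is that $\sctrbsempty$ partitions $\Args \setminus \{\arga\}$ differently across the three computations: when evaluating $\sctrbs{\{\argd,\argf\}}{\arga}$ it forms a coalitional game in which $\{\argd,\argf\}$ is a \emph{single} player and the remaining arguments are singletons, whereas when evaluating each of $\sctrbs{\{\argd\}}{\arga}$ and $\sctrbs{\{\argf\}}{\arga}$ the arguments $\argd$ and $\argf$ are separate players. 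Since these are genuinely different games (cf.\ the discussion preceding Proposition~\ref{prop:quantitative-contribution-existence-shapley}), the sign of the joint contribution need not agree with the signs of the individual ones; the support/attack interaction of $\argd$ and $\argf$ through $\argb$, $\argc$, and $\arge$ is precisely the interaction effect illustrated in Example~\ref{ex:intro} and Figure~\ref{fig:sign-map}.

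Concretely, I would, for each semantics, instantiate the initial strengths (analogously to the lists in the proof of Proposition~\ref{prop:remove-consistency}) and compute the three Shapley-based contributions. Because the graph has only six arguments, each Shapley value is a tractable sum over all subsets of $\Args \setminus (\{\arga\} \cup \argX)$, so the values can be obtained exactly via the accompanying code; I would then report (or delegate to the notebook) the witnessing numbers showing that $\sctrbs{\{\argd\}}{\arga}$ and $\sctrbs{\{\argf\}}{\arga}$ are both non-positive (or both non-negative) while $\sctrbs{\{\argd,\argf\}}{\arga}$ is strictly of the opposite sign.

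The main obstacle I anticipate is finding initial strengths that actually realize the sign flip for \emph{every} one of the five semantics. Unlike the removal-based function, which contrasts only the full graph with a single restriction, the Shapley function averages each player's marginal contribution across all coalitions, so singleton contributions tend to be smaller in magnitude and harder to pin to a definite sign; forcing both $\sctrbs{\{\argd\}}{\arga}$ and $\sctrbs{\{\argf\}}{\arga}$ to be, say, non-positive while the coalition $\{\argd,\argf\}$ flips positive requires tuning the strengths so that the interaction between $\argd$ and $\argf$ (mediated by $\argb$, $\argc$, and $\arge$) dominates only when the two are grouped. As in the previous proof, I would verify the chosen assignments numerically rather than by hand.
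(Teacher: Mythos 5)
Your proposal takes essentially the same route as the paper: the paper's proof also reuses the QBAG of Figure~\ref{fig:proof-consistency} with topic $\arga$ and contributors $\{\argd\}$, $\{\argf\}$ versus $\{\argd,\argf\}$, supplies one initial-strength assignment per semantics (indeed different from those in Proposition~\ref{prop:remove-consistency}, as you anticipated), and defers the exact contribution values to the accompanying code. The only difference is that the paper lists the concrete witnessing assignments explicitly, so to finish your proof you would just need to exhibit those tuned strength values rather than describe the search for them.
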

\begin{proof}
    Consider the QBAG displayed in \autoref{fig:proof-consistency}, which we denote by $\graph = (\Args, \is, \Att, \Supp)$. Consider the following sets of initial strength values:
    \begin{itemize}
        \item $\is_{QE} = \{(a, 0.3), (b, 0.8), (c, 0.2), (d, 0.7), (e, 0.6), (f, 0.7)\}$,
        \item $\is_{DFQuAD} = \{(a, 0.3), (b, 0.8), (c, 0.2), (d, 0.5), (e, 0.4), (f, 0.5)\}$,
        \item $\is_{SD-DFQuAD} = \{(a, 0.3), (b, 0.1), (c, 0.9), (d, 0.3), (e, 0.9), (f, 0.3)\}$,
        \item $\is_{EB} = \{(a, 0.3), (b, 0.6), (c, 0.8), (d, 0.8), (e, 0.5), (f, 0.7)\}$,
        \item $\is_{EBT} = \{(a, 0.3), (b, 0.6), (c, 0.2), (d, 0.8), (e, 0.4), (f, 0.7)\}$.
    \end{itemize}
    
    Let $\arga$ be the topic argument. By definition of the consistency principle, if both $\{\argx\}$ and $\{\argy\}$ contribute negatively to the topic argument's final strength, then $\{\argx\} \cup \{\argy\}$ must also contribute negatively. However, computing $\fs_{\arga}$ on $\graph$ with the given initial strength values $\is_{QE}, \is_{DFQuAD}, \is_{SD-DFQuAD}, \is_{EB}$, and $\is_{EBT}$ yields situations where both $\sctrb{\{\argd\}}{\arga}$ and $\sctrb{\{\argf\}}{\arga}$ are non-positive (or non-negative), while their joint set contribution $\sctrb{\{\argd,\argf\}}{\arga}$ is strictly of the opposite sign. Hence, two single argument contributions that agree in direction combine into a set contribution that points in the other direction, thus contradicting the principle.
\end{proof}
We may speculate that there could be special cases for which consistency is not violated, given QBAGs in which the ``effect'' (e.g., in the sense of the removal-based or gradient-based contribution) of a given single argument to a topic always has the same sign, irrespective of initial strength assignments; we leave a more precise investigation of this issue for future work.

As $\sctrbgmempty$ takes the maximum of single-argument contributions, it satisfies the consistency principle.
\begin{proposition}\label{prop:consistency-positive}
    $\sctrbgmempty$ satisfies the consistency principle w.r.t. QE, DFQuAD, SD-DFQuAD, EB, and EBT semantics $\fs$.
\end{proposition}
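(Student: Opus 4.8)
The plan is to exploit the fact that $\sctrbgmempty$ is defined as a maximum of per-argument gradients, and that the maximum operation distributes over set unions in a sign-preserving way. The first step is to observe that, for a fixed topic $\arga$, graph $\graph$, and semantics $\fs$, the value $\frac{\partial f_{\arga}}{\partial \is(\argx)}(\is(\argy_1), \ldots, \is(\argy_n))$ depends only on the argument $\argx$ (together with the fixed graph and semantics), and \emph{not} on the contributor set to which $\argx$ happens to belong. This is because both the evaluation point $(\is(\argy_1), \ldots, \is(\argy_n))$ and the composed function $f_{\arga}$ are determined by $\graph$ and $\fs$ alone. Writing $g_{\argx} := \frac{\partial f_{\arga}}{\partial \is(\argx)}(\is(\argy_1), \ldots, \is(\argy_n))$, we then have $\sctrbgm{\argX}{\arga} = \max_{\argx \in \argX} g_{\argx}$ for every contributor set $\argX$.

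Second, I would invoke the elementary set-theoretic identity $\max_{\argx \in \argX \cup \argY} g_{\argx} = \max\!\left(\max_{\argx \in \argX} g_{\argx},\, \max_{\argx \in \argY} g_{\argx}\right)$, which holds for any non-empty $\argX, \argY$. Combined with the previous step, this yields the key decomposition
\begin{align*}
    \sctrbgm{\argX \cup \argY}{\arga} = \max\!\left(\sctrbgm{\argX}{\arga},\, \sctrbgm{\argY}{\arga}\right).
\end{align*}

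Finally, both clauses of the consistency principle follow immediately from the sign behaviour of the maximum. If $\sctrbgm{\argX}{\arga} \leq 0$ and $\sctrbgm{\argY}{\arga} \leq 0$, then their maximum is $\leq 0$, hence $\sctrbgm{\argX \cup \argY}{\arga} \leq 0$; symmetrically, if both are $\geq 0$, their maximum is $\geq 0$, hence $\sctrbgm{\argX \cup \argY}{\arga} \geq 0$. Since this argument nowhere depends on the specific form of the aggregation function $\alpha$ or the influence function $\iota$, it applies uniformly to all five semantics QE, DFQuAD, SD-DFQuAD, EB, and EBT.

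I do not anticipate a genuine obstacle, as the whole result rests on the two observations above. The one point that requires care---and that I would state explicitly to close any gap---is the independence of $g_{\argx}$ from the contributor set, since this is precisely what makes the union decomposition legitimate: the gradients compared in $\sctrbgm{\argX}{\arga}$, $\sctrbgm{\argY}{\arga}$, and $\sctrbgm{\argX \cup \argY}{\arga}$ are all evaluated at the same fixed point. A minor edge case to dispatch is when $\argX$ or $\argY$ is empty, which can be handled via the standard convention for the empty maximum or by restricting attention to non-empty contributor sets.
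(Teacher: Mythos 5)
Your proposal is correct and follows essentially the same route as the paper's proof: the paper likewise writes $\sctrbgm{\argX}{\arga}$ as $\max_{\argx \in \argX} g(\argx)$ for a set-independent per-argument gradient $g$ and concludes both clauses of consistency from the sign behaviour of the maximum over a union. Your version merely makes explicit two points the paper leaves implicit---the independence of $g(\argx)$ from the contributor set and the identity $\max_{\argx \in \argX \cup \argY} g(\argx) = \max(\max_{\argx \in \argX} g(\argx), \max_{\argy \in \argY} g(\argy))$---plus the empty-set edge case, which is a harmless refinement rather than a different argument.
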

\begin{proof}
     $\sctrbgmempty$ is defined as the maximal gradient of a single argument in a set of contributors. For the sake of simplicity, let us denote this by $\sctrbgm{\argX}{\arga} := \max_{\argx \in \argX} g(\argx)$.
     Clearly, the following holds:
     \begin{itemize}
        \item If $\max_{\argx \in \argX} g(\argx) \leq 0$ and $\max_{\argy \in \argY} g(\argy) \leq 0$ then $\max_{\argx \in \argX \cup \argY} g(\argx) \leq 0$ (proving that the first condition of Principle~\ref{principle:consistency} holds);
        \item If $\max_{\argx \in \argX} g(\argx) \geq 0$ and $\max_{\argy \in \argY} g(\argy) \geq 0$ then $\max_{\argx \in \argX \cup \argY} g(\argx) \geq 0$ (proving that the second condition of Principle~\ref{principle:consistency} holds).
    \end{itemize}
\end{proof}

\subsection{Monotonicity}
\label{subsec:monotonicity}
$\sctrbrempty, \sctrbriempty$, and $\sctrbsempty$ clearly violate monotonicity.
Intuitively, when the set contributor is expanded with arguments whose presence has a negative effect on the topic's final strength, this can lead to a decrease of the set contribution.
\begin{proposition}\label{prop:monotonicity-negative}
    $\sctrbrempty, \sctrbriempty$, and $\sctrbsempty$ violate the monotonicity principle w.r.t. QE, DFQuAD, SD-DFQuAD, EB, and EBT semantics $\fs$.
\end{proposition}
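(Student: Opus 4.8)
The plan is to prove Proposition~\ref{prop:monotonicity-negative} by exhibiting a \emph{single} counterexample that works uniformly for all three functions and all five semantics. I would reuse the QBAG of Figure~\ref{fig:proof-contribution-existence}, in which the topic $\arga$ (with $\is(\arga) = 0.5$) is attacked by two arguments $\argb$ and $\argc$, both sources of initial (hence also final) strength $1$. I would then set $\argX = \{\argb\}$ and $\argY = \{\argb, \argc\}$, so that $\argX \subseteq \argY$, and argue that in every case $\sctrb{\argX}{\arga} > \sctrb{\argY}{\arga}$, which directly contradicts monotonicity (Principle~\ref{principle:monotonicity}). The guiding intuition is exactly the one stated before the proposition: enlarging the set contributor by a further attacker pushes the (negative) counterfactual contribution down rather than up.

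For $\sctrbrempty$, the two contributions are $\sctrbr{\{\argb\}}{\arga} = \fs_\graph(\arga) - \fs_{\graph\downarrow_{\{\arga,\argc\}}}(\arga)$ and, using stability (Principle~\ref{principle:stability}) for the all-removed graph, $\sctrbr{\{\argb,\argc\}}{\arga} = \fs_\graph(\arga) - \is(\arga)$. Cancelling the common term $\fs_\graph(\arga)$, the desired strict inequality reduces to $\fs_{\graph\downarrow_{\{\arga,\argc\}}}(\arga) < \is(\arga)$, i.e.\ to the fact that $\arga$ is still strictly weakened when only $\argc$ attacks it. This holds for all five semantics, since in $\graph\downarrow_{\{\arga,\argc\}}$ the topic is under a net attack of positive magnitude and every listed influence function strictly lowers the initial strength in that case. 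For $\sctrbriempty$ the two values coincide with the removal-based ones, because $\argb$ and $\argc$ are sources and thus have no incoming edges for the intrinsic variant to delete; hence the same inequality holds.

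For $\sctrbsempty$ I would compute the two Shapley values explicitly on $\Args = \{\arga,\argb,\argc\}$. Exploiting the symmetry of $\argb$ and $\argc$ (so that $\fs_{\graph\downarrow_{\{\arga,\argb\}}}(\arga) = \fs_{\graph\downarrow_{\{\arga,\argc\}}}(\arga)$), the coalitional expansion collapses to $\sctrbs{\{\argb\}}{\arga} = \tfrac{1}{2}\bigl(\fs_\graph(\arga) - \is(\arga)\bigr)$ and $\sctrbs{\{\argb,\argc\}}{\arga} = \fs_\graph(\arga) - \is(\arga)$. Since $\fs_\graph(\arga) < \is(\arga)$, the common factor $\fs_\graph(\arga) - \is(\arga)$ is strictly negative, and halving a strictly negative number makes it strictly larger; therefore $\sctrbs{\{\argb\}}{\arga} > \sctrbs{\{\argb,\argc\}}{\arga}$, once more violating monotonicity.

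The one subtlety I would flag, and which I expect to be the main (if modest) obstacle, is verifying that the single construction really does behave as claimed under \emph{every} semantics rather than only under Sum/Product aggregation. The delicate case is the Top aggregation of EBT: there one and two equal-strength attackers aggregate to the same value, so $\fs_{\graph\downarrow_{\{\arga,\argc\}}}(\arga) = \fs_\graph(\arga)$ and $\sctrbr{\{\argb\}}{\arga}$ collapses to $0$ instead of a negative number. This does not break the argument---it only widens the gap, since $0 > \fs_\graph(\arga) - \is(\arga)$ still holds---but it is the point where the otherwise uniform reasoning needs its own sentence. Beyond that, all the required facts ($\fs_{\graph\downarrow_{\{\arga,\argc\}}}(\arga) < \is(\arga)$ in each model, and the symmetry used for Shapley) follow from the monotone, strength-reducing behavior of attacks, and the exact numeric values can be confirmed with the accompanying code.
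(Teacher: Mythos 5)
Your proof is correct, but it is a genuinely different counterexample from the paper's, and the mechanism it exploits differs. The paper's proof uses the QBAG of Figure~\ref{fig:proof-monotonicity-negative}, in which $\argb$ \emph{attacks} and $\argc$ \emph{supports} $\arga$: there the joint contribution $\sctrb{\{\argc,\argb\}}{\arga}$ is $0$ because the two effects cancel, while the singleton supporter contribution $\sctrb{\{\argc\}}{\arga}$ is strictly positive, so monotonicity fails by a sign flip under \emph{cancellation}. You instead reuse the two-attacker QBAG of Figure~\ref{fig:proof-contribution-existence} and exploit \emph{reinforcement}: adding a second attacker drives the superset's contribution strictly below the subset's. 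What your route buys is hand-verifiability: for $\sctrbrempty$ (and, since $\argb,\argc$ are sources, identically for $\sctrbriempty$) everything reduces to the single inequality $\fs_{\graph\downarrow_{\{\arga,\argc\}}}(\arga) < \is(\arga)$, and for $\sctrbsempty$ the symmetry of $\argb$ and $\argc$ collapses the Shapley sum to the exact identities $\sctrbs{\{\argb\}}{\arga} = \tfrac{1}{2}\bigl(\fs_\graph(\arga)-\is(\arga)\bigr)$ and $\sctrbs{\{\argb,\argc\}}{\arga} = \fs_\graph(\arga)-\is(\arga)$ (both of which I have checked against Definition~\ref{def:set-shapley}), so no appeal to computed numeric values is needed; the paper's example, by contrast, gives the arguably crisper intuition that a positive contributor's contribution should not be wiped out by grouping it with a canceling attacker. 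One small inaccuracy in your side remark: EBT is not the only semantics where $\sctrbr{\{\argb\}}{\arga}$ collapses to $0$ rather than being strictly negative---under DFQuAD and SD-DFQuAD the product aggregation also saturates when a single attacker has strength $1$ (one or two such attackers both aggregate to $-1$), so the singleton removal contribution is $0$ there as well. This is immaterial, since your uniform reduction to $\fs_{\graph\downarrow_{\{\arga,\argc\}}}(\arga) < \is(\arga)$ never requires the singleton contribution to be negative, but the special-casing of EBT as uniquely delicate is misleading.
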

\begin{proof}
    Consider the QBAG displayed in \autoref{fig:proof-monotonicity-negative}. 
    Given QE, DFQuAD, SD-DFQuAD, EB, and EBT semantics $\fs$, we find that for all $\sctrbempty \in \{\sctrbrempty, \sctrbriempty, \sctrbsempty\}$, it holds that $\sctrb{\{\argc, \argb\}}{\arga} = 0$. Hence, according to the definition of the monotonicity principle, we would expect $\sctrb{\{\argc\}}{\arga} \leq 0$. However, we observe that $\sctrb{\{\argc\}}{\arga} > 0$, which proves the violation of the monotonicity principle. 
\end{proof}
\begin{figure}[!ht]
    \centering
    \begin{tikzpicture}[node distance=1.5cm]
        \node[unanode] (b) at (-2, 0) {\argnode{\argb}{1.0}{1.0}};
        \node[unanode] (c) at (2, 0) {\argnode{\argc}{1.0}{1.0}};
        \node[unanode] (a) at (0, 0) {\argnode{\arga}{0.5}{0.5}};
        
        \draw[-stealth, thick] (b) -- node[pos=0.5, above=1pt] {-} (a);
        \draw[-stealth, thick] (c) -- node[pos=0.5, above=1pt] {+} (a);        
    \end{tikzpicture}
    \caption{$\sctrbrempty, \sctrbriempty, \sctrbsempty$ violate the monotonicity principle w.r.t. QE, DFQuAD, SD-DFQuAD, EB, and EBT semantics $\fs$.}
    \label{fig:proof-monotonicity-negative}
\end{figure}
As $\sctrbgmempty$ takes the maximum of a set of single-argument gradient-based contributions, it increases weakly monotonically when expanding the set contributor, and thus satisfies monotonicity. 
\begin{proposition}\label{prop:monotonicity-positive}
    $\sctrbgmempty$ satisfies the monotonicity principle w.r.t. QE, DFQuAD, SD-DFQuAD, EB, and EBT semantics $\fs$.
\end{proposition}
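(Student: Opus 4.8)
The plan is to reduce the statement to the elementary monotonicity of the $\max$ operator, mirroring the structure of the proof of Proposition~\ref{prop:consistency-positive}. First I would abbreviate the single-argument gradient as $g(\argx) := \frac{\partial f_{\arga}}{\partial \is(\argx)}(\is(\argy_1), \ldots, \is(\argy_n))$, so that by Definition~\ref{def:set-gradient-max} we have $\sctrbgm{\argX}{\arga} = \max_{\argx \in \argX} g(\argx)$ for every $\argX \subseteq \Args \setminus \{\arga\}$. The key observation is that once the graph $\graph$, the topic $\arga$, and the semantics $\fs$ are fixed, $g$ is simply a real-valued function on the arguments; the set contributor only determines the \emph{index set} over which we maximize, not the values of $g$ themselves.

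Next I would invoke the fact that maximizing over a larger index set can only preserve or increase the result: whenever $\argX \subseteq \argY$, every element of $\argX$ is also an element of $\argY$, so $\max_{\argx \in \argX} g(\argx) \leq \max_{\argx \in \argY} g(\argx)$. Substituting the abbreviation back in yields $\sctrbgm{\argX}{\arga} \leq \sctrbgm{\argY}{\arga}$, which is precisely the conclusion required by Principle~\ref{principle:monotonicity}. Since this argument uses no specific property of the aggregation or influence functions—only that $g$ is a well-defined real-valued map on the arguments with a directed path to $\arga$—it holds uniformly across all five semantics QE, DFQuAD, SD-DFQuAD, EB, and EBT, with no need for separate per-semantics reasoning or counterexamples.

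The only step I would treat with care, and hence the main (albeit minor) obstacle, concerns the empty set: if $\argX = \emptyset$ then $\max_{\argx \in \emptyset} g(\argx)$ is a maximum over an empty index set. I would handle this by either restricting the principle to nonempty set contributors—arguably consistent with its intent, since the gradient contribution of an empty contributor carries no meaning—or by adopting the convention that the empty maximum equals $\bot$ (or $-\infty$), under which the inequality $\sctrbgm{\emptyset}{\arga} \leq \sctrbgm{\argY}{\arga}$ holds trivially for every nonempty $\argY$. Beyond this edge case, no further difficulty arises, and the proof is otherwise a one-line consequence of the monotonicity of $\max$.
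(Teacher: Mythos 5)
Your proof is correct and takes essentially the same route as the paper's: both reduce the claim to the elementary fact that maximizing over a larger index set can only preserve or increase the value, writing $\sctrbgm{\argX}{\arga} = \max_{\argx \in \argX} g(\argx)$ and concluding directly from $\argX \subseteq \argY$, uniformly across all five semantics. Your explicit treatment of the empty-set contributor is a point of care the paper's proof silently omits, but it does not alter the argument.
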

\begin{proof}
    Analogously to the proof of Proposition~\ref{prop:consistency-positive}, we observe that $\sctrbgmempty$ is the maximal gradient of a single argument in a set of contributors, and we again simplify notation by denoting $\sctrbgm{\argX}{\arga}$ by $\max_{\argx \in \argX} g(\argx)$.
    Clearly if $\argX \subseteq \argY$, it holds that $\max_{\argx \in \argX} g(\argx) \leq \max_{\argy \in \argY} g(\argy)$, and thus monotonicity (Principle~\ref{principle:monotonicity}) is satisfied.
\end{proof}
Clearly, applying other pooling functions than $\max$ for computing gradient-based set contributions may have an effect on monotonicity.
For example, when using $\min$ instead, monotonicity would be violated, but an analogous principle requiring a monotonic \emph{decrease} of the set contribution function would be satisfied.
Applying the maximal \emph{absolute} as a pooling function would satisfy monotonicity.

\subsection{Symmetry}
\label{subsec:symmetry}
Even given semantics for which equal positive and negative influence ``balances out'' (in the sense of~\cite{PB24}, Definition 5), the contributions of $\sctrbriempty$, $\sctrbsempty$ do not achieve symmetry, and $\sctrbgmempty$ only achieves symmetry under QE semantics. 
Intuitively, the problem is that even given balanced semantics, the union of the direct attackers and direct supporters is evaluated as a single set, which may not always preserve the balance between the two sides.
\begin{proposition}\label{prop:symmetry-positive-remove}
    $\sctrbrempty$ satisfies the symmetry principle w.r.t. QE, DFQuAD, SD-DFQuAD, EB, and EBT semantics $\fs$.
\end{proposition}
\begin{proof}
    Let $\argX = {\cal R}^-(\arga) \cup {\cal R}^+(\arga)$. 
    Assume that the multisets $\{\fs(\argx) | \argx \in {\cal R}^-(\arga)\}$ and $\{\fs(\argx) | \argx \in {\cal R}^+(\arga)\}$ are equal.
    Then, the aggregated input at $\arga$ is $0$.
    This is because, for Sum, equality of the multisets implies equality of the sums of attacker and supporter strengths; for Product, equality of the multisets implies equality of the two products; and for Top, equality of the multisets implies equality of the two maxima.
    Hence the aggregated input at $\arga$ is $0$, and therefore $\fs(\arga) = \is(\arga)$ for the QE, DFQuAD, SD-DFQuAD, EB, and EBT semantics.
    Moreover, in the restricted graph $\graph\downarrow_{\Args \setminus \argX}$, all direct attackers and supporters of $\arga$ have been removed, so its final strength is again just its initial strength: $\fs_{\graph\downarrow_{\Args \setminus \argX}}(\arga) = \is(\arga)$, since all considered semantics satisfy bi-variate independence and bi-variate directionality (Lemma~\ref{lemma:semantics-bi-variate-principles}).
    Therefore,
    $\sctrbr{\argX}{\arga} = \fs(\arga) - \fs_{\graph\downarrow_{\Args \setminus \argX}}(\arga) = \is(\arga) - \is(\arga) = 0$.
    Hence, $\sctrbrempty$ satisfies the symmetry principle.
\end{proof}

Unlike in the case of $\sctrbrempty$, the balance between direct attackers and supporters need not be preserved for the intrinsic-removal variant $\sctrbriempty$. 
Although ${\cal R}^-(\arga)$ and ${\cal R}^+(\arga)$ may initially be balanced, removing the attackers and supporters of arguments in either ${\cal R}^-(\arga)$ or ${\cal R}^+(\arga)$ can disturb that balance.
\begin{proposition}\label{prop:symmetry-negative}
    $\sctrbriempty$ violates the symmetry principle w.r.t. QE, DFQuAD, SD-DFQuAD, EB, and EBT semantics $\fs$.
\end{proposition}
\begin{proof}
    Consider the QBAG displayed in \autoref{fig:proof-symmetry}.
    According to the definition of the \emph{symmetry} principle, the contribution of the union of all direct supporters and attackers $\{\argb, \argc\}$ to the topic should equal to $0$.
    From the counter-examples we observe that this is not the case.
\end{proof}
\begin{figure}[h!]
    \centering
    \begin{tikzpicture}[node distance=1.5cm]
        \node[unanode] (a) at (0, 0) {\argnode{\arga}{\is(\arga)}{\fs(\arga)}};
        \node[unanode] (b) at (-2, 0) {\argnode{\argb}{\is(\argb)}{n}};
        \node[unanode] (c) at (2, 0) {\argnode{\argc}{\is(\argc)}{n}};
        \node[unanode] (d) at (4, 0) {\argnode{\argd}{\is(\argd)}{\fs(\argd)}};

        \draw[-stealth, thick] (b) -- node[pos=0.5, above=1pt] {-} (a);
        \draw[-stealth, thick] (c) -- node[pos=0.5, above=1pt] {+} (a);
        \draw[-stealth, thick] (d) -- node[pos=0.5, above=1pt] {-} (c);
    
    \end{tikzpicture}
    \caption{$\sctrbriempty$ violates the symmetry principle w.r.t. QE, DFQuAD, SD-DFQuAD, EB, and EBT semantics $\fs$. Arguments $\argb$ and $\argc$ have the same final strength $n$.}
    \label{fig:proof-symmetry}
\end{figure}
As an example, take DFQuAD semantics, which yields $\sctrbri{\{b, c\}}{a} = 0.25$.

\begin{proposition}\label{prop:symmetry-negative-shapley}
    $\sctrbsempty$ violates the symmetry principle w.r.t. QE, DFQuAD, SD-DFQuAD, EB, and EBT semantics $\fs$.
\end{proposition}
\begin{proof}
    Consider the QBAG displayed in \autoref{fig:proof-symmetry-shapley}.
    According to the definition of the \emph{symmetry} principle, the union of all direct supporters and attackers $\{\argb, \argc, \argd, \arge\}$ to the topic should equal to $0$.
    From the counter-examples we observe that this is not the case.
\end{proof}
\begin{figure}[h!]
    \centering
    \begin{tikzpicture}[node distance=1.5cm]
        \node[unanode] (a) at (0, 0) {\argnode{\arga}{\is(\arga)}{\fs(\arga)}};
        \node[unanode] (b) at (-2, 1) {\argnode{\argb}{0.5}{n}};
        \node[unanode] (c) at (-2, -1) {\argnode{\argc}{0.5}{0.5}};
        \node[unanode] (e) at (2, 1) {\argnode{\arge}{0.5}{n}};
        \node[unanode] (d) at (2, -1) {\argnode{\argd}{0.5}{0.5}};
        \node[unanode] (f) at (-4, 1) {\argnode{\argf}{0.5}{0.5}};
        \node[unanode] (g) at (4, 1) {\argnode{\argg}{0.5}{0.5}};

        \draw[-stealth, thick] (b) -- node[pos=0.5, above=1pt] {-} (a);
        \draw[-stealth, thick] (c) -- node[pos=0.5, above=1pt] {-} (a);
        \draw[-stealth, thick] (e) -- node[pos=0.5, above=1pt] {+} (a);
        \draw[-stealth, thick] (d) -- node[pos=0.5, above=1pt] {+} (a);
        \draw[-stealth, thick] (f) -- node[pos=0.5, above=1pt] {-} (b);
        \draw[-stealth, thick] (g) -- node[pos=0.5, above=1pt] {-} (e);
    
    \end{tikzpicture}
    \caption{$\sctrbsempty$ violates the symmetry principle w.r.t. QE, DFQuAD, SD-DFQuAD, EB, and EBT semantics $\fs$. Arguments $\argb$ and $\arge$ have the same final strength $n$.}
    \label{fig:proof-symmetry-shapley}
\end{figure}

Since $\sctrbgmempty$ reduces each side to the argument with the largest positive gradient, it can ignore the balanced attacker/supporter structure that symmetry is intended to capture. Whether this balance is preserved in the resulting final strengths, however, depends on the semantics.
\begin{proposition}\label{prop:symmetry-positive-gradient}
    $\sctrbgmempty$ satisfies the symmetry principle w.r.t. QE semantics $\fs$.
\end{proposition}
\begin{proof}
Assume that the multisets of the strengths of the direct attackers and supporters of $\arga$ are equal, i.e. $\{\fs(\argx) | \argx \in {\cal R}^-(\arga)\}$ and $\{\fs(\argx) | \argx \in {\cal R}^+(\arga)\}$.
Then the total attacking strength equals the total supporting strength.
Under QE semantics, aggregation is by sum, so the aggregated input at $\arga$ is $0$.
QE uses the influence function 2-Max(1).
The derivative of 2-Max(1) at any point where the aggregation function returns $0$ is $0$, beucase 2-Max(1) uses $h(x)$ with $p=2$.
Hence, by the chain rule, the partial derivative of the strength of $\arga$ with respect to 
the initial strength of each direct attacker and supporter is  $0$.
Taking the maximum over all attackers and supporters yields
$\sctrbgm{{\cal R}^-(\arga) \cup {\cal R}^+(\arga)}{a} = 0$.
Thus, the set gradient contribution function satisfies the symmetry principle w.r.t. QE semantics.
\end{proof}

Under QE, the balancing effect is preserved by the gradient-based contribution measure $\sctrbgmempty$.
By contrast, this behavior is not preserved under DFQuAD, SD-DFQuAD, EB, and EBT semantics.
\begin{proposition}\label{prop:symmetry-gradient-negative}
    $\sctrbgmempty$ violates the symmetry principle w.r.t. DFQuAD, SD-DFQuAD, EB, and EBT semantics $\fs$.
\end{proposition}
\begin{proof}
    Consider the QBAG displayed in \autoref{fig:symmetry-gradient-QE-negative}.
    According to the definition of the \emph{symmetry} principle, the contribution of the union of all direct supporters and attackers $\{\argb, \argc\}$ to the topic should equal to $0$.
    From the counter-example we observe that this is not the case.
\end{proof}

\begin{figure}[h!]
    \centering
    \begin{tikzpicture}[node distance=1.5cm]
        \node[unanode] (a) at (0, 0) {\argnode{\arga}{0.5}{0.5}};
        \node[unanode] (b) at (-2, 0) {\argnode{\argb}{0.5}{0.5}};
        \node[unanode] (c) at (2, 0) {\argnode{\argc}{0.5}{0.5}};
        
        \draw[-stealth, thick] (b) -- node[pos=0.5, above=1pt] {-} (a);
        \draw[-stealth, thick] (c) -- node[pos=0.5, above=1pt] {+} (a);
    \end{tikzpicture}
    \caption{$\sctrbgmempty$ violates the symmetry principle w.r.t. DFQuAD, SD-DFQuAD, EB, and EBT semantics $\fs$.}
    \label{fig:symmetry-gradient-QE-negative}
\end{figure}

\section{Illustrative Application}
\label{sec:application}
In this section, we sketch an application scenario illustrating how set contribution functions can be applied. 
The scenario, together with Example~\ref{ex:intro}, helps us highlight differences between single-argument and set contribution functions, as well as the relevance of some of the differences between set contribution functions that principles give rise to. 
The application is based on~\cite{sukpanichnant2025peerargargumentativepeerreview}, which incorporates Natural Language Processing (NLP) techniques into QBAG-based classification. 
We apply our approach to an example QBAG reflecting one that has been reported in the above work. 
However, the approach is applicable to generic scenarios and~\cite{sukpanichnant2025peerargargumentativepeerreview} links to open-source code that can be used for QBAG instantiation from data (in contrast, we manually specify the initial QBAG for the sake of simplicity).
Although set-based contribution functions may be more interesting in scenarios with many inter-argument interactions, this example application illustrates, again together with Example~\ref{ex:intro}, core differences between single-argument and set-based contribution functions.

We apply set contribution functions to a QBAG representing reviews and gradual acceptance/rejection recommendations for scientific papers. 
Note that the use-case is purely illustrative and we do not recommend an immediate application to the scenario at hand, which requires more comprehensive evaluations and a careful consideration of social/human aspects. 
Consider the QBAGs displayed in Subfigure~\ref{fig:QBAF-example-application-step2}, depicting a two-layered QBAG. The topic argument in the ``bottom'' layer represents the review recommendation, which can be discretized, i.e. (roughly), mapped to recommendations from \textit{strong reject} to \textit{strong accept}. 
Its initial strength is $0.5$ (which would reflect a \textit{borderline} evaluation if interpreted as a final strength). 
The layer above represents \textit{aspects} that the review may cover, which are provided as a generic meta-model for this use-case: \textit{appropriateness}, \textit{clarity}, \textit{novelty}, \textit{empirical} and \textit{theoretical soundness}, \textit{meaningful comparison}, \textit{substance}, and \textit{impact}, abbreviated as \texttt{APR}, \texttt{CLA}, \texttt{NOV}, \texttt{EMP}, \texttt{CMP}, \texttt{SUB}, and \texttt{IMP}, respectively. 
These arguments are also initiated with a strength of $0.5$ each. 
Finally, a third layer, depicted in Subfigure~\ref{fig:QBAF-example-application-step1}, presents three text arguments representing sentences and their sentiment strengths, which, including final strengths and alongside the support and attack relationships to the \textit{aspect} layer, are mined---in the aforementioned paper---using NLP approaches.
Each aspect argument attacks or supports the decision argument, depending on whether its final strength is above or below $0.5$ (respectively). 
This means we first compute the final strengths of the aspects arguments before determining their relationships with the decision argument.
Also, before the computation of the decision argument's final strength, the final strengths of the aspect arguments are normalized to a strength value in $(0 , 1]$ based on the distance to the initial value of $0.5$. 
Strengths are updated using DF-QuAD semantics.

As the computation process is broken down into two steps that, when combined, do not correspond to the application of a gradual semantics to a QBAG, we focus on the contribution quantification within the second step, i.e., determining the contribution of aspect arguments to the decision (Subfigure~\ref{fig:QBAF-example-application-step2}).
For the combined approach, we could practically still apply the contribution functions, but the application would break the frame of our analytical results with respect to principle satisfaction.
\autoref{table:QBAF-example-application-results} displays the results of computing the different set contributions of the set \texttt{\{~NOV,~IMP~\}} on the topic, as well as the single-argument contributions of \texttt{NOV} and \texttt{IMP} (separated) and their sum. 
We observe that the contributions distributed to the players \texttt{\{~NOV,~IMP~\}}, \texttt{CMP}, and \texttt{APR} by the Shapley value approximately account for the difference between our topic argument's final and initial strengths, as expected.
\begin{table}[ht]
    \centering
    \caption{Contributions to the topic argument $\argD$ from $\sctrbrempty, \sctrbsempty$, and $\sctrbgmempty$ (rounded to three decimals).}
    \begin{tabular}{lccc}
        \hline
        Contributors & $\sctrbrempty$ & $\sctrbsempty$ & $\sctrbgmempty$ \\
        \hline
        $\{ \mathrm{NOV}, \mathrm{IMP} \}$ & 0.045  & 0.048  & 0.200  \\
        $\mathrm{NOV}$                     & 0.120  & 0.210  & 0.200  \\
        $\mathrm{IMP}$                     & -0.075 & -0.163 & -0.150 \\
        $\mathrm{CMP}$                     & -0.175 & -0.263 & -0.250 \\
        $\mathrm{APR}$                     & 0.120  & 0.210  & 0.200  \\
        $\mathrm{CMP} + \mathrm{APR} + \{\mathrm{NOV},\mathrm{IMP}\}$ & -0.010 & -0.005 & 0.150 \\
        $\{\mathrm{CMP}, \mathrm{APR}, \mathrm{NOV},\mathrm{IMP}\}$ & -0.005 & -0.005 & 0.200 \\
        \hline
    \end{tabular}
    \label{table:QBAF-example-application-results}
\end{table}

From a use-case perspective, the idea is to focus on novelty and impact, as their assessment may be rather subjective.
We drop the \textit{intrinsic removal}-based set contribution function here, as its contributions clearly coincide with the ones of the removal-based variant. 
Note that an application example where the difference between intrinsic removal and removal-based contributions matter is presented for the single-argument case in \cite{DBLP:journals/ijar/KampikPYCT24}.
As we can see from the results of the set contributor containing all arguments, set contributions can differ from the single argument contributions and their sum-based aggregations even when no inter-argument interactions are present (see Example~\ref{ex:intro} for an additional illustration of set-based contributions including inter-argument interactions).
However, the gradient set contribution ($\sctrbgmempty$) of \texttt{\{~NOV,~IMP~\}} on the topic equals the single-argument contribution of $0.2$ (as expected).

\begin{figure}[!ht]
    \centering
    \subfloat[Initial QBAG before aspect-decision relationships are determined. No relations were extracted for the \texttt{CLA}, \texttt{EMP}, and \texttt{SUB} \textit{aspects}.]{
        \centering
            \begin{tikzpicture}[scale=0.7]
                \node[unanode] (t1) at (-2, 3) {\argnode{t1}{0.6}{0.6}};
                \node[unanode] (t2) at (1, 3) {\argnode{t2}{0.7}{0.7}};
                \node[unanode] (t3) at (3, 3) {\argnode{t3}{0.5}{0.5}};
                \node[unanode] (NOV) at (-3, 0) {\argnode{NOV}{0.5}{0.8}};
                \node[unanode] (CMP) at (1, 0) {\argnode{CMP}{0.5}{0.15}};
                \node[unanode] (APR) at (-1, 0) {\argnode{APR}{0.5}{0.8}};
                \node[unanode] (IMP) at (3, 0) {\argnode{IMP}{0.5}{0.25}};
                \node[unanode] (SUB) at (2, -2) {\argnode{SUB}{0}{0}};
                \node[unanode] (EMP) at (0, -2) {\argnode{EMP}{0}{0}};
                \node[unanode] (CLA) at (-2, -2) {\argnode{CLA}{0}{0}};
    
                \draw[-stealth, thick] (t1) -- node[pos=0.5, left=1pt] {+} (NOV);
                \draw[-stealth, thick] (t1) -- node[pos=0.5, right=1pt] {+} (APR);
                \draw[-stealth, thick] (t2) -- node[pos=0.5, left=1pt] {-} (CMP);
                \draw[-stealth, thick] (t3) -- node[pos=0.5, left=1pt] {-} (IMP);
            \end{tikzpicture}
        \label{fig:QBAF-example-application-step1}
    }
    \hspace{5pt}
    \centering
    \subfloat[QBAG after aspect normalization and relationship determination.]{
        \centering
            \begin{tikzpicture}[scale=0.7]
                \node[unanode] (NOV) at (-3, 0) {\argnode{NOV}{0.6}{0.6}};
                \node[unanode] (CMP) at (1, 0) {\argnode{CMP}{0.7}{0.7}};
                \node[unanode] (APR) at (-1, 0) {\argnode{APR}{0.6}{0.6}};
                \node[unanode] (IMP) at (3, 0) {\argnode{IMP}{0.5}{0.5}};
                \node[unanode] (D) at (0, -3) {\argnode{D}{0.5}{0.495}};
    
                \draw[-stealth, thick] (NOV) -- node[pos=0.5, left=1pt] {+} (D);
                \draw[-stealth, thick] (CMP) -- node[pos=0.5, left=1pt] {-} (D);
                \draw[-stealth, thick] (APR) -- node[pos=0.5, left=1pt] {+} (D);
                \draw[-stealth, thick] (IMP) -- node[pos=0.5, left=1pt] {-} (D);
            \end{tikzpicture}
        \label{fig:QBAF-example-application-step2}
    }
    \caption{QBAG construction in a recommendation systems example.}
\end{figure}

Overall, these results indicate that there is a meaningful difference between set contribution and single-argument contribution functions.
Between set contribution functions, we see different purposes and effects:
\begin{itemize}
    \item The removal-based contribution of $0.045$ describes what would happen if we were to ignore novelty and impact assessment, reflecting the principle of strong counterfactuality: we can see that the absence of these aspects would decrease the final strengths of the topic argument accordingly. 
    In our use-case this means that the paper, whose score corresponds to a somewhat \textit{weak borderline} recommendation, would be further weakened in absence of these rather subjective aspects, which may help a meta-reviewer interpret the review as being more on the ``reject side'' of the scale.
    \item The Shapley value-based contribution of $0.048$ indicates how much novelty and impact assessments contribute, as a ``group'' to the overall score, relative to the contributions of the other aspects, which are each grouped separately. 
    The soundness of the contribution is reflected by the weak quantitative contribution existence principle. 
    However, if we want to compare the contribution relative to another set of several aspects, we would need to make use of the partition contribution function ($\pctrbsempty$). 
    In our example, Shapley-based contribution and removal-based contribution support the same narrative: the two somewhat opinionated aspects \textit{novelty} and \textit{impact} positively affect the score.
    \item The gradient-based contribution function (yielding a contribution of $0.2$) quantifies the effectiveness of applying a change to the aspect among \texttt{\{~NOV,~IMP~\}} that has a locally greater effect.
    For this, we would expect behavior that amounts to monotonicity. For the opposite direction (how to best affect a lower score), we would need to use \texttt{min} as our pooling function of the gradient.
    \item Finally, in this example, the three set contribution functions induce somewhat similar rankings over most contributors/sets.
    However, these rankings do not coincide in all cases.
    Consider the set contributor $\{ \mathrm{NOV}, \mathrm{IMP} \}$, which is ranked third by the removal- and Shapley value-based approaches, while the gradient-max set contribution ranks it first.
    The mismatch reflects that $\sctrbgmempty$ ranks sets by the \emph{locally strongest effect} inside the set, whereas removal- and Shapley value-based contributions measure the sets overall influence under counterfactual/game-theoretic conditions.
    Consequently, adding further arguments to a set does not change the impact measured by $\sctrbgmempty$ when these arguments' local effects do not exceed the current maximum; this can lead to different rankings compared to the other set contribution functions.
\end{itemize}

The corresponding implementation of this example and additional examples where the induced rankings differ, can be found at \url{https://github.com/TimKam/Quantitative-Bipolar-Argumentation/tree/main/analysis}.

\section{Discussion and Conclusion}
\label{sec:discussion}
Our work can be considered a contribution to \emph{argumentative explainability} that encompasses both the application of formal argumentation in order to facilitate explainability and the theoretical study of explanations in formal argumentation~\cite{Cyras.et.al:2021-IJCAI,vassiliades_bassiliades_patkos_2021}.
Our results contribute primarily to the latter line of research, although our application examples provide some indication of potential usefulness to the former.
Over the past years, contribution functions for argumentation-based inference---sometimes instead called \emph{attribution explanations}~\cite{DBLP:conf/ecai/0007PT23} (where the focus is on a gradient-based single-argument contribution function under the DF-QuAD semantics for acyclic QBAGs) or \emph{impact measures}~\cite{DBLP:conf/ifaamas/AnaissyDVY25}---have been the subject of substantial studies.
Several recent works study \emph{principles} for such functions.
Perhaps most notably, \cite{DBLP:conf/ifaamas/AnaissyDVY25}~does so for gradual argumentation given initially unweighted graphs,
whereas \cite{DBLP:journals/ijar/KampikPYCT24}~focuses on acyclic QBAGs.
In~\cite{Delobelle:Villata:2019}, the authors introduce set contribution functions based on notions of removal and intrinsic removal (a variant stronger than ours, since it removes arguments based on internal conflicts as well) and analyzes them in relation to the \emph{h-categorizer} and \emph{counting} semantics.
However, to the best of our knowledge, our study is the first comprehensive work focusing on contributions of sets of arguments to a topic argument in quantitative bipolar argumentation across several set contribution function variants.
Given our results, we argue that studying the more general set contribution functions in addition to the already well-studied single argument contribution functions is interesting:
not only are set contribution function more general than single-argument contribution functions; also, there are nontrivial differences between the single-argument and set contributor case that warrant the study of both.

We can see this when providing intuitive \emph{generalizations} of single-argument contribution function principles for the set contribution function case.
Notably, the \emph{contribution existence} principle is, when satisfied for the set contribution cases, often violated by single-argument specializations of the corresponding contribution functions.
Also, we have introduced new principles that focus on interactions between sets and thus only apply to set contribution functions; these principles can meaningfully distinguish between most set contribution functions.
Beyond that, our studies have unearthed the potential for further generalization, namely in the form of \emph{partition contribution functions} that determine set contributions relative to a partition of the set of all arguments except the topic.

Note that we did not cover some of the single-argument contribution functions that have been proposed in previous works;
examples are:
\begin{enumerate*}[label=\roman*)]
  \item \emph{language independence}~\cite{DBLP:conf/ifaamas/AnaissyDVY25}, intuitively: changing the symbols used for arguments while retaining their relationships and initial strengths does not affect their final strengths;
  \item \emph{faithfulness}~\cite{DBLP:journals/ijar/KampikPYCT24}, roughly and in different variants: the effect of (small) changes to a contributor's initial strength have an effect on the topic's final strength that is somewhat consistent with the computed contribution score.
\end{enumerate*}
This is motivated by the need to keep the scope manageable while maintaining somewhat interesting results.
Clearly, language independence is trivially satisfied by all of our contribution functions, and based on what we know from the results in~\cite{DBLP:journals/ijar/KampikPYCT24}, faithfulness is primarily interesting for the gradient-based contribution function case.
Still, we hope that the study of principles in the literature that we do not cover can reveal further interesting results in the future.
Also, we admit that some of the results we have obtained, in particular with respect to directionality, are rather mundane; let us claim that some mundane results are difficult to avoid when carrying out comprehensive principle-based analyses.

More broadly, our work can be considered a contribution to the \emph{principle-based} study of formal argumentation, somewhat analogous to for example~\cite{van2017principle,Baroni:Rago:Toni:2019}, which focus on principles for abstract and gradual semantics, respectively.
Also, like most works focusing on formal aspects of argumentative explainability, we examine the effects of how changes to argumentation graphs affect the inferences we draw from them; accordingly, our results contribute to the study of \emph{argumentation dynamics} (cf.~\cite{doutre-argument} for a survey).

Future work may extend our results in different directions.
For example, one may provide a principle-based analysis that also considers the case of cyclic QBAGs to broaden the applicability beyond acyclic graphs, or one could define and study additional partition contribution functions that determine relative contributions of disjoint sets of arguments and are based on power indices other than the Shapley value (for which we have defined such function).

\subsubsection*{Acknowledgments}
\label{sec:acks}
We thank the anonymous reviewers for constructive remarks that have helped us improve this paper.

This work was partially supported by the Wallenberg AI, Autonomous Systems and Software Program (WASP) funded by the Knut and Alice Wallenberg Foundation.

\small
\bibliographystyle{elsarticle-num} 
\bibliography{references}

@inproceedings{Cyras:Kampik:Weng:2022,
  author       = {Kristijonas \v{C}yras and
                  Timotheus Kampik and
                  Qingtao Weng},
  editor       = {Kristijonas \v{C}yras and
                  Timotheus Kampik and
                  Oana Cocarascu and
                  Antonio Rago},
  title        = {Dispute Trees as Explanations in Quantitative (Bipolar) Argumentation},
  booktitle    = {1st International Workshop on Argumentation for eXplainable {AI} co-located
                  with 9th International Conference on Computational Models of Argument
                  {(COMMA} 2022), Cardiff, Wales, September 12, 2022},
  series       = {{CEUR} Workshop Proceedings},
  volume       = {3209},
  publisher    = {CEUR-WS.org},
  year         = {2022},
  url          = {https://ceur-ws.org/Vol-3209/0872.pdf},
  timestamp    = {Fri, 10 Mar 2023 16:22:13 +0100},
  biburl       = {https://dblp.org/rec/conf/comma/CyrasKW22.bib},
  bibsource    = {dblp computer science bibliography, https://dblp.org},
  pages        = {1--12}
}

@misc{sukpanichnant2025peerargargumentativepeerreview,
      title={PeerArg: Argumentative Peer Review with LLMs}, 
      author={Purin Sukpanichnant and Anna Rapberger and Francesca Toni},
      year={2025},
      eprint={2409.16813},
      archivePrefix={arXiv},
      primaryClass={cs.AI},
      url={https://arxiv.org/abs/2409.16813}, 
}

@article{chi2021optimized,
  title={An optimized quantitative argumentation debate model for fraud detection in e-commerce transactions},
  author={Chi, Haixiao and Lu, Yiwei and Liao, Beishui and Xu, Liaosa and Liu, Yaqi},
  journal={IEEE Intelligent Systems},
  volume={36},
  number={2},
  pages={52--63},
  year={2021},
  publisher={IEEE}
}

@article{lidecision,
  title={Decision Making with Weighted Quantitative Argumentation Based on Regression},
  author={Li, Puyin and Gabbay, Dov and Chi, Xiao and Cheng, You},
  journal={Logics for New-Generation AI},
  pages={59--71},
  year={2022}
}

@inproceedings{cocarascu2019extracting,
  author       = {Oana Cocarascu and
                  Antonio Rago and
                  Francesca Toni},
  editor       = {Edith Elkind and
                  Manuela Veloso and
                  Noa Agmon and
                  Matthew E. Taylor},
  title        = {Extracting Dialogical Explanations for Review Aggregations with Argumentative
                  Dialogical Agents},
  booktitle    = {Proceedings of the 18th International Conference on Autonomous Agents
                  and MultiAgent Systems, {AAMAS} '19, Montreal, QC, Canada, May 13-17,
                  2019},
  pages        = {1261--1269},
  publisher    = {International Foundation for Autonomous Agents and Multiagent Systems},
  year         = {2019},
  url          = {http://dl.acm.org/citation.cfm?id=3331830},
  timestamp    = {Fri, 30 Oct 2020 14:09:36 +0100},
  biburl       = {https://dblp.org/rec/conf/atal/CocarascuRT19.bib},
  bibsource    = {dblp computer science bibliography, https://dblp.org}
}

@inproceedings{PB24,
    title     = {{Balancing Open-Mindedness and Conservativeness in Quantitative Bipolar Argumentation (and How to Prove Semantical from Functional Properties)}},
    author    = {Potyka, Nico and Booth, Richard},
    booktitle = {{Proceedings of the 21st International Conference on Principles of Knowledge Representation and Reasoning}},
    pages     = {597--607},
    year      = {2024},
    month     = {8},
    doi       = {10.24963/kr.2024/56},
    url       = {https://doi.org/10.24963/kr.2024/56},
  }

@inproceedings{kotonya2019gradual,
  title={Gradual argumentation evaluation for stance aggregation in automated fake news detection},
  author={Kotonya, Neema and Toni, Francesca},
  booktitle={6th Workshop on Argument Mining},
  pages={156--166},
  year={2019}
}

@article{AMGOUD201839,
    title = {Evaluation of arguments in weighted bipolar graphs},
    journal = {International Journal of Approximate Reasoning},
    volume = {99},
    pages = {39-55},
    year = {2018},
    issn = {0888-613X},
    doi = {https://doi.org/10.1016/j.ijar.2018.05.004},
    author = {Leila Amgoud and Jonathan Ben-Naim},
    keywords = {Argumentation, Weighted bipolar graphs, Semantics},
    abstract = {The paper tackled the issue of arguments evaluation in weighted bipolar argumentation graphs (i.e., graphs whose arguments have basic strengths, and may be both supported and attacked). We introduce principles that an evaluation method (or semantics) could satisfy. Such principles are very useful for understanding the foundations of semantics, judging them, and comparing semantics. We then analyze existing semantics on the basis of our principles, and finally propose a new semantics for the class of acyclic graphs. We show that it satisfies all the principles.}
}

@inproceedings{Delobelle:Villata:2019,
address = {Belgrade},
author = {Delobelle, J{\'{e}}r{\^{o}}me and Villata, Serena},
booktitle = {15th European Conference on Symbolic and Quantitative Approaches to Reasoning with Uncertainty},
editor = {Kern-Isberner, Gabriele and Ognjanovic, Zoran},
isbn = {9783030297640},
issn = {16113349},
keywords = {ECSQARU},
mendeley-tags = {ECSQARU},
pages = {27--38},
publisher = {Springer},
title = {{Interpretability of Gradual Semantics in Abstract Argumentation}},
volume = {11726 LNAI},
year = {2019}
}

@article{shapley1953value,
  title={A value for n-person games},
  author={Shapley, LS},
  journal={Annals of Mathematics Studies},
  volume={28},
  pages={307--318},
  year={1953}
}

@inproceedings{Potyka:2019,
address = {Montreal},
author = {Potyka, Nico},
booktitle = {18th International Conference on Autonomous Agents and MultiAgent Systems},
editor = {Agmon, Noa and Elkind, Edith and Taylor, Matthew E. and Veloso, Manuela},
keywords = {AAMAS},
mendeley-tags = {AAMAS},
pages = {1722--1730},
publisher = {IFAAMAS},
title = {{Extending Modular Semantics for Bipolar Weighted Argumentation}},
year = {2019}
}

@inproceedings{DBLP:conf/ecai/0007PT23,
  author       = {Xiang Yin and
                  Nico Potyka and
                  Francesca Toni},
  editor       = {Kobi Gal and
                  Ann Now{\'{e}} and
                  Grzegorz J. Nalepa and
                  Roy Fairstein and
                  Roxana Radulescu},
  title        = {Argument Attribution Explanations in Quantitative Bipolar Argumentation
                  Frameworks},
  booktitle    = {{ECAI} 2023 - 26th European Conference on Artificial Intelligence,
                  September 30 - October 4, 2023, Krak{\'{o}}w, Poland - Including
                  12th Conference on Prestigious Applications of Intelligent Systems
                  {(PAIS} 2023)},
  series       = {Frontiers in Artificial Intelligence and Applications},
  volume       = {372},
  pages        = {2898--2905},
  publisher    = {{IOS} Press},
  year         = {2023},
  url          = {https://doi.org/10.3233/FAIA230603},
  doi          = {10.3233/FAIA230603},
  timestamp    = {Fri, 27 Oct 2023 20:40:31 +0200},
  biburl       = {https://dblp.org/rec/conf/ecai/0007PT23.bib},
  bibsource    = {dblp computer science bibliography, https://dblp.org}
}

@article{Baroni:Rago:Toni:2019,
author = {Baroni, Pietro and Rago, Antonio and Toni, Francesca},
doi = {10.1016/j.ijar.2018.11.019},
issn = {0888613X},
journal = {International Journal of Approximate Reasoning},
month = {feb},
pages = {252--286},
publisher = {Elsevier Inc.},
title = {{From Fine-Grained Properties to Broad Principles for Gradual Argumentation: A Principled Spectrum}},
volume = {105},
year = {2019},
}

@article{DBLP:journals/ijar/KampikPYCT24,
  author       = {Timotheus Kampik and
                  Nico Potyka and
                  Xiang Yin and
                  Kristijonas Cyras and
                  Francesca Toni},
  title        = {Contribution functions for quantitative bipolar argumentation graphs:
                  {A} principle-based analysis},
  journal      = {Int. J. Approx. Reason.},
  volume       = {173},
  pages        = {109255},
  year         = {2024},
  url          = {https://doi.org/10.1016/j.ijar.2024.109255},
  doi          = {10.1016/J.IJAR.2024.109255},
  timestamp    = {Sun, 04 Aug 2024 19:49:41 +0200},
  biburl       = {https://dblp.org/rec/journals/ijar/KampikPYCT24.bib},
  bibsource    = {dblp computer science bibliography, https://dblp.org}
}

@inproceedings{Cyras.et.al:2021-IJCAI,
address = {Montreal},
archivePrefix = {arXiv},
arxivId = {2105.11266},
author = {{\v{C}}yras, Kristijonas and Rago, Antonio and Albini, Emanuele and Baroni, Pietro and Toni, Francesca},
booktitle = {30th International Joint Conference on Artificial Intelligence},
doi = {10.24963/ijcai.2021/600},
editor = {Zhou, Zhi-Hua},
eprint = {2105.11266},
isbn = {978-0-9992411-9-6},
pages = {4392--4399},
publisher = {IJCAI},
title = {{Argumentative XAI: A Survey}},
year = {2021}
}

@article{doutre-argument,
	Abstract = {This paper addresses the issue of the dynamic enforcement of a constraint in an argumentation system. The system consists in (1) an argumentation framework, made up, notably, of a set of arguments and of an attack relation, (2) an evaluation semantics, and (3) the evaluation result, computed from (1) and (2). An agent may want another agent to consider a new attack, or to have a given argument accepted, or even to relax the definition of the semantics. A constraint on any of the three components is thus defined, and it has to be enforced in the system. The enforcement may result in changes on components of the system. The paper surveys existing approaches for the dynamic enforcement of a constraint and its consequences, and reveals challenging enforcement cases that remain to be investigated.},
	Author = {Doutre, Sylvie and Mailly, Jean-Guy},
	journal = {Argument \& Computation},
	Date-Added = {2020-12-04 13:30:58 +0100},
	Date-Modified = {2020-12-04 13:30:58 +0100},
	Doi = {10.3233/AAC-180425},
	Isbn = {1946-2174},
	Keywords = {Knowledge representation and reasoning; abstract argumentation; change; revision; update},
	M1 = {3},
	Pages = {223--248},
	Publisher = {IOS Press},
	Title = {Constraints and changes: A survey of abstract argumentation dynamics},
	Ty = {JOUR},
	Volume = {9},
	Year = {2018},
	Bdsk-Url-1 = {https://doi.org/10.3233/AAC-180425}}

@article{vassiliades_bassiliades_patkos_2021, title={Argumentation and explainable artificial intelligence: a survey}, volume={36}, DOI={10.1017/S0269888921000011}, journal={The Knowledge Engineering Review}, publisher={Cambridge University Press}, author={Vassiliades, Alexandros and Bassiliades, Nick and Patkos, Theodore}, year={2021}, pages={e5}}

@article{van2017principle,
  title={The principle-based approach to abstract argumentation semantics},
  author={van der Torre, Leon and Vesic, Srdjan},
  journal={IfCoLog Journal of Logics and Their Applications},
  year={2017},
  month={October},
  volume={4},
  number={8}

}

@inproceedings{yin2024qarg,
    title     = {{CE-QArg: Counterfactual Explanations for Quantitative Bipolar Argumentation Frameworks}},
    author    = {Yin, Xiang and Potyka, Nico and Toni, Francesca},
    booktitle = {{Proceedings of the 21st International Conference on Principles of Knowledge Representation and Reasoning}},
    pages     = {697--707},
    year      = {2024},
    month     = {8},
    doi       = {10.24963/kr.2024/66},
    url       = {https://doi.org/10.24963/kr.2024/66},
  }

@article{DBLP:journals/frai/DietzKM22,
  author       = {Emmanuelle Dietz and
                  Antonis C. Kakas and
                  Loizos Michael},
  title        = {Argumentation: {A} calculus for Human-Centric {AI}},
  journal      = {Frontiers Artif. Intell.},
  volume       = {5},
  year         = {2022},
  url          = {https://doi.org/10.3389/frai.2022.955579},
  doi          = {10.3389/FRAI.2022.955579},
  timestamp    = {Wed, 08 Mar 2023 13:39:27 +0100},
  biburl       = {https://dblp.org/rec/journals/frai/DietzKM22.bib},
  bibsource    = {dblp computer science bibliography, https://dblp.org}
}

@misc{DBLP:journals/corr/abs-2405-10729,
  author       = {Francesco Leofante and
                  Hamed Ayoobi and
                  Adam Dejl and
                  Gabriel Freedman and
                  Deniz Gorur and
                  Junqi Jiang and
                  Guilherme Paulino{-}Passos and
                  Antonio Rago and
                  Anna Rapberger and
                  Fabrizio Russo and
                  Xiang Yin and
                  Dekai Zhang and
                  Francesca Toni},
  editor       = {Pierre Marquis and
                  Magdalena Ortiz and
                  Maurice Pagnucco},
  title        = {Contestable {AI} Needs Computational Argumentation},
  booktitle    = {Proceedings of the 21st International Conference on Principles of
                  Knowledge Representation and Reasoning, {KR} 2024, Hanoi, Vietnam.
                  November 2-8, 2024},
  year         = {2024},
  url          = {https://doi.org/10.24963/kr.2024/83},
  doi          = {10.24963/KR.2024/83},
  timestamp    = {Tue, 01 Apr 2025 19:08:39 +0200},
  biburl       = {https://dblp.org/rec/conf/kr/LeofanteADFGJP024.bib},
  bibsource    = {dblp computer science bibliography, https://dblp.org}
}

@inproceedings{DBLP:conf/kr/Potyka18,
  author       = {Nico Potyka},
  editor       = {Michael Thielscher and
                  Francesca Toni and
                  Frank Wolter},
  title        = {Continuous Dynamical Systems for Weighted Bipolar Argumentation},
  booktitle    = {Principles of Knowledge Representation and Reasoning: Proceedings
                  of the Sixteenth International Conference, {KR} 2018, Tempe, Arizona,
                  30 October - 2 November 2018},
  pages        = {148--157},
  publisher    = {{AAAI} Press},
  year         = {2018},
  url          = {https://aaai.org/ocs/index.php/KR/KR18/paper/view/17985},
  timestamp    = {Tue, 09 Feb 2021 08:33:51 +0100},
  biburl       = {https://dblp.org/rec/conf/kr/Potyka18.bib},
  bibsource    = {dblp computer science bibliography, https://dblp.org}
}

@inproceedings{DBLP:conf/ifaamas/AnaissyDVY25,
  author       = {Caren Al Anaissy and
                  J{\'{e}}r{\^{o}}me Delobelle and
                  Srdjan Vesic and
                  Bruno Yun},
  editor       = {Sanmay Das and
                  Ann Now{\'{e}} and
                  Yevgeniy Vorobeychik},
  title        = {Impact Measures for Gradual Argumentation Semantics},
  booktitle    = {Proceedings of the 24th International Conference on Autonomous Agents
                  and Multiagent Systems, {AAMAS} 2025, Detroit, MI, USA, May 19-23,
                  2025},
  pages        = {69--77},
  publisher    = {International Foundation for Autonomous Agents and Multiagent Systems
                  / {ACM}},
  year         = {2025},
  url          = {https://dl.acm.org/doi/10.5555/3709347.3743518},
  doi          = {10.5555/3709347.3743518},
  timestamp    = {Tue, 29 Jul 2025 16:22:34 +0200},
  biburl       = {https://dblp.org/rec/conf/ifaamas/AnaissyDVY25.bib},
  bibsource    = {dblp computer science bibliography, https://dblp.org}
}

@article{mossakowski2018modular,
  author       = {Till Mossakowski and
                  Fabian Neuhaus},
  title        = {Modular Semantics and Characteristics for Bipolar Weighted Argumentation
                  Graphs},
  journal      = {CoRR},
  volume       = {abs/1807.06685},
  year         = {2018},
  url          = {http://arxiv.org/abs/1807.06685},
  eprinttype    = {arXiv},
  eprint       = {1807.06685},
  timestamp    = {Mon, 13 Aug 2018 16:48:00 +0200},
  biburl       = {https://dblp.org/rec/journals/corr/abs-1807-06685.bib},
  bibsource    = {dblp computer science bibliography, https://dblp.org}
}

@inproceedings{AyoobiPT23,
  author       = {Hamed Ayoobi and
                  Nico Potyka and
                  Francesca Toni},
  editor       = {Kobi Gal and
                  Ann Now{\'{e}} and
                  Grzegorz J. Nalepa and
                  Roy Fairstein and
                  Roxana Radulescu},
  title        = {SpArX: Sparse Argumentative Explanations for Neural Networks},
  booktitle    = {{ECAI} 2023 - 26th European Conference on Artificial Intelligence,
                  September 30 - October 4, 2023, Krak{\'{o}}w, Poland - Including
                  12th Conference on Prestigious Applications of Intelligent Systems
                  {(PAIS} 2023)},
  series       = {Frontiers in Artificial Intelligence and Applications},
  volume       = {372},
  pages        = {149--156},
  publisher    = {{IOS} Press},
  year         = {2023},
}

@inproceedings{Potyka0T23,
  author       = {Nico Potyka and
                  Xiang Yin and
                  Francesca Toni},
  editor       = {Brian Williams and
                  Yiling Chen and
                  Jennifer Neville},
  title        = {Explaining Random Forests Using Bipolar Argumentation and Markov Networks},
  booktitle    = {Thirty-Seventh {AAAI} Conference on Artificial Intelligence, {AAAI}
                  2023, Thirty-Fifth Conference on Innovative Applications of Artificial
                  Intelligence, {IAAI} 2023, Thirteenth Symposium on Educational Advances
                  in Artificial Intelligence, {EAAI} 2023, Washington, DC, USA, February
                  7-14, 2023},
  pages        = {9453--9460},
  publisher    = {{AAAI} Press},
  year         = {2023}
}

\appendix
\renewcommand{\thesubsection}{\Alph{subsection}}
\section{Appendix: Counterexamples for (Quantitative) Counterfactuality}
\label{sec:appendix-counterexamples}
%
%
\begin{counterexample}[(Quantitative) Counterfactuality; $\sctrbriempty$ w.r.t. QE, DFQuAD, and SD-DFQuAD]\label{ce:iremove-counterfactuality}
    Consider the QBAG displayed in \autoref{fig:proof-counterfactuality}, which we denote by $\graph = (\Args, \is, \Att, \Supp)$. Given QE, DFQuAD, and SD-DFQuAD semantics $\fs$, we observe that $\fs_{\graph}(a) < \fs_{\graph \downarrow_{\Args \setminus \{b\}}}(a)$. 
    According to the definitions of counterfactuality and quantitative counterfactuality, under these conditions $\sctrbri{\{\argb\}}{\arga} > 0$ must hold for the principles to be satisfied. 
    However, we have $\sctrbri{\{\argb\}}{\arga} = 0$, demonstrating the violation of the counterfactuality and quantitative counterfactuality principles.
\end{counterexample}
\begin{figure}[!ht]
    \centering
    \begin{tikzpicture}[node distance=1.5cm]
        \node[unanode] (a) at (2, 0) {\argnode{a}{1.0}{< 1.0}};
        \node[unanode] (b) at (0, 0) {\argnode{b}{0.0}{> 0.0}};
        \node[unanode] (c) at (-2, 0) {\argnode{c}{1.0}{1.0}};

        \draw[-stealth, thick] (c) -- node[pos=0.5, above=1pt] {+} (b);
        \draw[-stealth, thick] (b) -- node[pos=0.5, above=1pt] {-} (a);
    
    \end{tikzpicture}
    \caption{$\sctrbriempty$ violates (quantitative) counterfactuality w.r.t. QE, DFQuAD, and SD-DFQuAD semantics $\fs$.}
    \label{fig:proof-counterfactuality}
\end{figure}

\begin{counterexample}[(Quantitative) Counterfactuality; $\sctrbriempty$ w.r.t. EB]\label{ce:iremove-counterfactuality-EB}
    Consider the QBAG displayed in \autoref{fig:proof-iremove-counterfactuality-EB}, which we denote by $\graph = (\Args, \is, \Att, \Supp)$. Given EB semantics, we observe that $\sctrbri{\{e\}}{\arga} \approx 3.5431 \times 10^{-6}$ (i.e. $\sctrbri{\{e\}}{\arga} > 0$). 
    Hence, according to the definitions of counterfactuality and quantitative counterfactuality, we would expect $\fs_{\graph}(a) > \fs_{\graph \downarrow_{\Args \setminus \{e\}}}(a)$. 
    However, this does not hold, thus demonstrating the violation of the principles.
\end{counterexample}
\begin{figure}[!ht]
    \centering
    \begin{tikzpicture}[node distance=1.5cm]
        \node[unanode] (f) at (0, 2) {\argnode{f}{1.0}{1.0}};
        \node[unanode] (e) at (2, 2) {\argnode{e}{0.02}{0.005}};
        \node[unanode] (b) at (4, 0) {\argnode{b}{0.1}{0.104}};
        \node[unanode] (d) at (2, 0) {\argnode{d}{0.51}{0.519}};
        \node[unanode] (c) at (0, 0) {\argnode{c}{0.1}{0.104}};
        \node[unanode] (g) at (-2, 0) {\argnode{g}{0.27}{0.270}};
        \node[unanode] (a) at (0, -2) {\argnode{a}{0.5}{0.507}};
        
        \draw[-stealth, thick] (f) -- node[pos=0.5, above=1pt] {+} (e);
        
        \draw[-stealth, thick] (e) -- node[pos=0.5, above=1pt] {+} (b);
        \draw[-stealth, thick] (e) -- node[pos=0.5, right=1pt] {+} (d);
        \draw[-stealth, thick] (e) -- node[pos=0.5, above=1pt] {+} (c);
        
        \draw[-stealth, thick] (b) -- node[pos=0.5, below=1pt] {-} (a);
        \draw[-stealth, thick] (d) -- node[pos=0.5, above=1pt] {+} (a);
        \draw[-stealth, thick] (c) -- node[pos=0.5, right=1pt] {-} (a);
        \draw[-stealth, thick] (g) -- node[pos=0.5, above=1pt] {-} (a);
    
    \end{tikzpicture}
    \caption{$\sctrbriempty$ violates (quantitative)  counterfactuality w.r.t. EB semantics $\fs$.}
    \label{fig:proof-iremove-counterfactuality-EB}
\end{figure}

\begin{counterexample}[(Quantitative) Counterfactuality; $\sctrbriempty$ w.r.t. EBT]\label{ce:iremove-counterfactuality-EBT}
    Consider the QBAG displayed in \autoref{fig:proof-iremove-counterfactuality-EBT}, which we denote by $\graph = (\Args, \is, \Att, \Supp)$. Given EBT semantics, we observe that $\sctrbri{\{b\}}{\arga} =  0$. 
    According to the definition of the counterfactuality and quantitative counterfactuality principles, we would expect $\fs_{\graph}(a) = \fs_{\graph \downarrow_{\Args \setminus \{b\}}}(a)$. However, this does not hold, thus illustrating the violation of the counterfactuality and quantitative counterfactuality principles.
\end{counterexample}
\begin{figure}[!ht]
    \centering
    \begin{tikzpicture}[node distance=1.5cm]
        \node[unanode] (a) at (0, 0) {\argnode{a}{0.7}{0.673}};
        \node[unanode] (b) at (2, 0) {\argnode{b}{0.1}{0.221}};
        \node[unanode] (c) at (4, 0) {\argnode{c}{1.0}{1.0}};
        \node[unanode] (d) at (-2, 0) {\argnode{d}{0.1}{0.1}};

        \draw[-stealth, thick] (c) -- node[pos=0.5, above=1pt] {+} (b);
        \draw[-stealth, thick] (b) -- node[pos=0.5, above=1pt] {-} (a);
        \draw[-stealth, thick] (d) -- node[pos=0.5, above=1pt] {-} (a);
    
    \end{tikzpicture}
    \caption{$\sctrbriempty$ violates (quantitative) counterfactuality w.r.t. EBT semantics $\fs$.}
    \label{fig:proof-iremove-counterfactuality-EBT}
\end{figure}

%
%
\begin{counterexample}[(Quantitative) Counterfactuality; $\sctrbsempty$ w.r.t. QE]\label{ce:shapley-counterfactuality-QE}
Consider the QBAG displayed in \autoref{fig:proof-shapley-counterfactuality-QE}, which we denote by $\graph = (\Args, \is, \Att, \Supp)$.
    Given QE semantics, we observe that $\sctrbs{\{e\}}{\arga} \approx  4.9326 \times 10^{-5}$. 
    According to the definitions of counterfactuality and quantitative counterfactuality we must have $\fs_{\graph}(a) > \fs_{\graph \downarrow_{\Args \setminus \{e\}}}(a)$, which does not hold. 
    This illustrates the violation of the counterfactuality and quantitative counterfactuality principles.
\end{counterexample}
\begin{figure}[!ht]
    \centering
    \begin{tikzpicture}[node distance=1.5cm]
        \node[unanode] (f) at (-4, 0) {\argnode{f}{1.0}{1.0}};
        \node[unanode] (e) at (-2, 0) {\argnode{e}{0.495}{0.7475}};
        \node[unanode] (d) at (0, 0) {\argnode{d}{0.15}{0.454693}};
        \node[unanode] (b) at (0, 2) {\argnode{b}{0.15}{0.454693}};
        \node[unanode] (c) at (0, -2) {\argnode{c}{0.15}{0.454693}};
        \node[unanode] (a) at (2, 0) {\argnode{a}{0.1}{0.082867}};

        \draw[-stealth, thick] (f) -- node[pos=0.5, above=1pt] {+} (e);
        \draw[-stealth, thick] (e) -- node[pos=0.5, left=1pt] {+} (b);
        \draw[-stealth, thick] (e) -- node[pos=0.5, above=1pt] {+} (d);
        \draw[-stealth, thick] (e) -- node[pos=0.5, right=1pt] {+} (c);
        \draw[-stealth, thick] (b) -- node[pos=0.5, right=1pt] {-} (a);
        \draw[-stealth, thick] (d) -- node[pos=0.5, above=1pt] {+} (a);
        \draw[-stealth, thick] (c) -- node[pos=0.5, left=1pt] {-} (a);
    
    \end{tikzpicture}
    \caption{$\sctrbsempty$ violates (quantitative) counterfactuality w.r.t. QE semantics $\fs$.}
    \label{fig:proof-shapley-counterfactuality-QE}
\end{figure}

\begin{counterexample}[(Quantitative) Counterfactuality; $\sctrbsempty$ w.r.t. DFQuAD]\label{ce:shapley-counterfactuality-DFQuAD}
    Consider the QBAG displayed in \autoref{fig:proof-shapley-counterfactuality-DFQuAD}, which we denote by $\graph = (\Args, \is, \Att, \Supp)$.
    Given DFQuAD semantics, we observe that $\sctrbs{\{e\}}{\arga} \approx  0.0027$. 
    According to the definitions of counterfactuality and quantitative counterfactuality we must have $\fs_{\graph}(a) > \fs_{\graph \downarrow_{\Args \setminus \{e\}}}(a)$, which does not hold. 
    This proves the violation of the counterfactuality and quantitative counterfactuality principles.
\end{counterexample}
\begin{figure}[!ht]
    \centering
    \begin{tikzpicture}[node distance=1.5cm]
        \node[unanode] (f) at (-4, 0) {\argnode{f}{1.0}{1.0}};
        \node[unanode] (e) at (-2, 0) {\argnode{e}{0.495}{1.0}};
        \node[unanode] (d) at (0, 0) {\argnode{d}{0.3}{1.0}};
        \node[unanode] (b) at (0, 2) {\argnode{b}{0.15}{1.0}};
        \node[unanode] (c) at (0, -2) {\argnode{c}{0.17}{1.0}};
        \node[unanode] (a) at (2, 0) {\argnode{a}{0.1}{0.1}};

        \draw[-stealth, thick] (f) -- node[pos=0.5, above=1pt] {+} (e);
        \draw[-stealth, thick] (e) -- node[pos=0.5, left=1pt] {+} (b);
        \draw[-stealth, thick] (e) -- node[pos=0.5, above=1pt] {+} (d);
        \draw[-stealth, thick] (e) -- node[pos=0.5, right=1pt] {+} (c);
        \draw[-stealth, thick] (b) -- node[pos=0.5, right=1pt] {-} (a);
        \draw[-stealth, thick] (d) -- node[pos=0.5, above=1pt] {+} (a);
        \draw[-stealth, thick] (c) -- node[pos=0.5, left=1pt] {-} (a);
    \end{tikzpicture}
    \caption{$\sctrbsempty$ violates (quantitative) counterfactuality w.r.t. DFQuAD semantics $\fs$.}
    \label{fig:proof-shapley-counterfactuality-DFQuAD}
\end{figure}
\begin{counterexample}[(Quantitative) Counterfactuality; $\sctrbsempty$ w.r.t. SD-DFQuAD]\label{ce:shapley-counterfactuality-SD-DFQuAD}
    Consider the QBAG displayed in \autoref{fig:proof-shapley-counterfactuality-SD-DFQuAD}, which we denote by $\graph = (\Args, \is, \Att, \Supp)$.
    Given SD-DFQuAD semantics, we observe that $\sctrbs{\{e\}}{\arga} \approx  0.0022$. 
    According to the definitions of counterfactuality and quantitative counterfactuality we must have $\fs_{\graph}(a) > \fs_{\graph \downarrow_{\Args \setminus \{e\}}}(a)$, which does not hold. 
    This illustrates the violation of the counterfactuality and quantitative counterfactuality principles.
\end{counterexample}
\begin{figure}[!ht]
    \centering
    \begin{tikzpicture}[node distance=1.5cm]
        \node[unanode] (f) at (-4, 0) {\argnode{f}{1.0}{1.0}};
        \node[unanode] (e) at (-2, 0) {\argnode{e}{0.495}{0.7475}};
        \node[unanode] (d) at (0, 0) {\argnode{d}{0.2}{0.542203}};
        \node[unanode] (b) at (0, 2) {\argnode{b}{0.15}{0.513591}};
        \node[unanode] (c) at (0, -2) {\argnode{c}{0.15}{0.513591}};
        \node[unanode] (a) at (2, 0) {\argnode{a}{0.1}{0.081886}};

        \draw[-stealth, thick] (f) -- node[pos=0.5, above=1pt] {+} (e);
        \draw[-stealth, thick] (e) -- node[pos=0.5, left=1pt] {+} (b);
        \draw[-stealth, thick] (e) -- node[pos=0.5, above=1pt] {+} (d);
        \draw[-stealth, thick] (e) -- node[pos=0.5, right=1pt] {+} (c);
        \draw[-stealth, thick] (b) -- node[pos=0.5, right=1pt] {-} (a);
        \draw[-stealth, thick] (d) -- node[pos=0.5, above=1pt] {+} (a);
        \draw[-stealth, thick] (c) -- node[pos=0.5, left=1pt] {-} (a);
    \end{tikzpicture}
    \caption{$\sctrbsempty$ violates (quantitative) counterfactuality w.r.t. SD-DFQuAD semantics $\fs$.}
    \label{fig:proof-shapley-counterfactuality-SD-DFQuAD}
\end{figure}
\begin{counterexample}[(Quantitative) Counterfactuality; $\sctrbsempty$ w.r.t. EB]\label{ce:shapley-counterfactuality-EB}
    Consider the QBAG displayed in \autoref{fig:proof-shapley-counterfactuality-EB}, which we denote by $\graph = (\Args, \is, \Att, \Supp)$.
    Given EB semantics, we observe that $\sctrbs{\{f\}}{\arga} \approx  3.4380 \times 10^{-6}$. 
    According to the definitions of counterfactuality and quantitative counterfactuality we must have $\fs_{\graph}(a) > \fs_{\graph \downarrow_{\Args \setminus \{f\}}}(a)$, which does not hold. 
    This illustrates the violation of the counterfactuality and quantitative counterfactuality principles.
\end{counterexample}
\begin{figure}[!ht]
    \centering
    \begin{tikzpicture}[node distance=1.5cm]
        \node[unanode] (f) at (2, 2) {\argnode{f}{1.0}{1.0}};
        \node[unanode] (e) at (0, 2) {\argnode{e}{0.025}{0.064218}};
        \node[unanode] (d) at (0, 0) {\argnode{d}{0.54}{0.550455}};
        \node[unanode] (b) at (-2, 0) {\argnode{b}{0.11}{0.115812}};
        \node[unanode] (c) at (2, 0) {\argnode{c}{0.1}{0.1055394}};
        \node[unanode] (g) at (4, 0) {\argnode{g}{0.4}{0.4}};
        \node[unanode] (a) at (2, -2) {\argnode{a}{0.3}{0.288789}};

        \draw[-stealth, thick] (f) -- node[pos=0.5, above=1pt] {+} (e);
        \draw[-stealth, thick] (e) -- node[pos=0.5, left=1pt] {+} (b);
        \draw[-stealth, thick] (e) -- node[pos=0.5, left=1pt] {+} (d);
        \draw[-stealth, thick] (e) -- node[pos=0.5, right=1pt] {+} (c);
        \draw[-stealth, thick] (b) -- node[pos=0.5, right=1pt] {-} (a);
        \draw[-stealth, thick] (d) -- node[pos=0.5, above=1pt] {+} (a);
        \draw[-stealth, thick] (c) -- node[pos=0.5, right=1pt] {-} (a);
        \draw[-stealth, thick] (g) -- node[pos=0.5, right=1pt] {-} (a);
    \end{tikzpicture}
    \caption{$\sctrbsempty$ violates (quantitative) counterfactuality w.r.t. EB semantics $\fs$.}
    \label{fig:proof-shapley-counterfactuality-EB}
\end{figure}
\begin{counterexample}[(Quantitative) Counterfactuality; $\sctrbsempty$ w.r.t. EBT]\label{ce:shapley-counterfactuality-EBT}
    Consider the QBAG displayed in \autoref{fig:proof-shapley-counterfactuality-EBT}, which we denote by $\graph = (\Args, \is, \Att, \Supp)$.
    Given EBT semantics, we observe that $\sctrbs{\{f\}}{\arga} \approx  -2.7043 \times 10^{-5}$.
    According to the definitions of counterfactuality and quantitative counterfactuality we must have $\fs_{\graph}(a) < \fs_{\graph \downarrow_{\Args \setminus \{f\}}}(a)$, which does not hold. This illustrates the violation of the counterfactuality and quantitative counterfactuality principles.
\end{counterexample}
\begin{figure}[!ht]
    \centering
    \begin{tikzpicture}[node distance=1.5cm]
        \node[unanode] (f) at (-2, 2) {\argnode{f}{1.0}{1.0}};
        \node[unanode] (e) at (0, 2) {\argnode{e}{0.025}{0.14146}};
        \node[unanode] (d) at (0, 0) {\argnode{d}{0.51}{0.447405}};
        \node[unanode] (b) at (-2, 0) {\argnode{b}{0.4}{0.424965}};
        \node[unanode] (c) at (2, 2) {\argnode{c}{0.55}{0.55}};
        \node[unanode] (g) at (4, 0) {\argnode{g}{0.429}{0.429}};
        \node[unanode] (a) at (2, -2) {\argnode{a}{0.3}{0.302988}};

        \draw[-stealth, thick] (f) -- node[pos=0.5, above=1pt] {-} (e);
        \draw[-stealth, thick] (e) -- node[pos=0.5, left=1pt] {+} (b);
        \draw[-stealth, thick] (e) -- node[pos=0.5, left=1pt] {+} (d);
        \draw[-stealth, thick] (b) -- node[pos=0.5, below=1pt] {-} (a);
        \draw[-stealth, thick] (d) -- node[pos=0.5, above=1pt] {+} (a);
        \draw[-stealth, thick] (c) -- node[pos=0.5, right=1pt] {-} (d);
        \draw[-stealth, thick] (g) -- node[pos=0.5, left=1pt] {-} (a);
        \draw[-stealth, thick] (g) -- node[pos=0.5, above=1pt] {-} (d);
    \end{tikzpicture}
    \caption{$\sctrbsempty$ violates (quantitative) counterfactuality w.r.t. EBT semantics $\fs$.}
    \label{fig:proof-shapley-counterfactuality-EBT}
\end{figure}
%
\begin{counterexample}[(Quantitative) Counterfactuality; $\sctrbgmempty$ w.r.t. QE]\label{ce:gradient-counterfactuality-QE}
    Consider the QBAG displayed in \autoref{fig:proof-gradient-counterfactuality-QE}, which we denote by $\graph = (\Args, \is, \Att, \Supp)$.
    Given QE semantics, it holds that $\sctrbs{\{d\}}{\arga} =  0.0$.
    According to the definition of counterfactuality we must have $\fs_{\graph}(a) = \fs_{\graph \downarrow_{\Args \setminus \{d\}}}(a)$, which does not hold (we observe $\fs_{\graph}(a) < \fs_{\graph \downarrow_{\Args \setminus \{d\}}}(a)$), providing a violation of the counterfactuality and quantitative counterfactuality principles.
\end{counterexample}
\begin{figure}[!ht]
    \centering
    \begin{tikzpicture}[node distance=1.5cm]
        \node[unanode] (d) at (0, 1) {\argnode{d}{0.4}{0.4}};
        \node[unanode] (b) at (2, 1) {\argnode{b}{0.4}{0.4828}};
        \node[unanode] (c) at (-2, 1) {\argnode{c}{0.4}{0.4828}};
        \node[unanode] (a) at (0, -1) {\argnode{a}{0.2}{0.1980}};
        \node[unanode] (e) at (-2, -1) {\argnode{e}{0.1}{0.1}};

        \draw[-stealth, thick] (d) -- node[pos=0.5, above=1pt] {+} (b);
        \draw[-stealth, thick] (d) -- node[pos=0.5, above=1pt] {+} (c);
        \draw[-stealth, thick] (b) -- node[pos=0.5, above=1pt] {-} (a);
        \draw[-stealth, thick] (c) -- node[pos=0.5, above=1pt] {+} (a);
        \draw[-stealth, thick] (e) -- node[pos=0.5, above=1pt] {-} (a);
        
    \end{tikzpicture}
    \caption{$\sctrbgmempty$ violates (quantitative) counterfactuality w.r.t. QE semantics $\fs$.}
    \label{fig:proof-gradient-counterfactuality-QE}
\end{figure}
\begin{counterexample}[(Quantitative) Counterfactuality; $\sctrbgmempty$ w.r.t. DFQuAD]\label{ce:gradient-counterfactuality-DFQuAD}
    Consider the QBAG displayed in \autoref{fig:proof-gradient-counterfactuality-DFQuAD}, which we denote by $\graph = (\Args, \is, \Att, \Supp)$.
    Given DFQuAD semantics, we observe that $\sctrbgm{\{e\}}{\arga} \approx  7.4506 \times 10^{-9}$.
    According to the definitions of counterfactuality and quantitative counterfactuality we must have $\fs_{\graph}(a) > \fs_{\graph \downarrow_{\Args \setminus \{e\}}}(a)$, which does not hold. 
    This illustrates the violation of the counterfactuality and quantitative counterfactuality principles.
\end{counterexample}
\begin{figure}[!ht]
    \centering
    \begin{tikzpicture}[node distance=1.5cm]
        \node[unanode] (e) at (0, 2) {\argnode{c}{0.5}{0.5}};
        \node[unanode] (b) at (-2, 0) {\argnode{d}{0.0}{0.5}};
        \node[unanode] (d) at (0, 0) {\argnode{b}{0.0}{0.5}};
        \node[unanode] (c) at (2, 0) {\argnode{a}{0.0}{0.5}};
        \node[unanode] (a) at (0, -2) {\argnode{a}{0.5}{0.375}};
        
        \draw[-stealth, thick] (e) -- node[pos=0.5, above=1pt] {+} (b);
        \draw[-stealth, thick] (e) -- node[pos=0.5, right=1pt] {+} (d);
        \draw[-stealth, thick] (e) -- node[pos=0.5, above=1pt] {+} (c);
        \draw[-stealth, thick] (b) -- node[pos=0.5, above=1pt] {+} (a);
        \draw[-stealth, thick] (d) -- node[pos=0.5, right=1pt] {-} (a);
        \draw[-stealth, thick] (c) -- node[pos=0.5, above=1pt] {-} (a);
        
    \end{tikzpicture}
    \caption{$\sctrbgmempty$ violates (quantitative) counterfactuality w.r.t. DFQuAD semantics $\fs$.}
    \label{fig:proof-gradient-counterfactuality-DFQuAD}
\end{figure}
\begin{counterexample}[(Quantitative) Counterfactuality; $\sctrbgmempty$ w.r.t. SD-DFQuAD]\label{ce:gradient-counterfactuality-SD-DFQuAD}
    Consider the QBAG displayed in \autoref{fig:proof-gradient-counterfactuality-SD-DFQuAD}, which we denote by $\graph = (\Args, \is, \Att, \Supp)$.
    Given SD-DFQuAD semantics, we observe that $\sctrbgm{\{b\}}{\arga} = -0.25$.
    According to the definitions of counterfactuality and quantitative counterfactuality we must have $\fs_{\graph}(a) < \fs_{\graph \downarrow_{\Args \setminus \{b\}}}(a)$, which does not hold. 
    This illustrates the violation of the counterfactuality and quantitative counterfactuality principles.
\end{counterexample}
\begin{figure}[!ht]
    \centering
    \begin{tikzpicture}[node distance=1.5cm]
        \node[unanode] (c) at (-2, 0) {\argnode{c}{1.0}{1.0}};
        \node[unanode] (b) at (0, 0) {\argnode{b}{0.0}{0.0}};
        \node[unanode] (a) at (2, 0) {\argnode{a}{0.5}{0.5}};
        
        \draw[-stealth, thick] (c) -- node[pos=0.5, above=1pt] {-} (b);
        \draw[-stealth, thick] (b) -- node[pos=0.5, above=1pt] {-} (a);
    \end{tikzpicture}
    \caption{$\sctrbgmempty$ violates (quantitative) counterfactuality w.r.t. SD-DFQuAD semantics $\fs$.}
    \label{fig:proof-gradient-counterfactuality-SD-DFQuAD}
\end{figure}
\begin{counterexample}[(Quantitative) Counterfactuality; $\sctrbgmempty$ w.r.t. EB and EBT]\label{ce:gradient-counterfactuality-EB-EBT}
    Consider the QBAG displayed in \autoref{fig:proof-gradient-counterfactuality-EB-EBT}, which we denote by $\graph = (\Args, \is, \Att, \Supp)$.
    Given EB and EBT semantics, we observe that $\sctrbgm{\{b\}}{\arga} \approx -0.4530$.
    According to the definitions of counterfactuality and quantitative counterfactuality we must have $\fs_{\graph}(a) < \fs_{\graph \downarrow_{\Args \setminus \{b\}}}(a)$, which does not hold. 
    This illustrates the violation of the counterfactuality and quantitative counterfactuality principles.
\end{counterexample}
\begin{figure}[!ht]
    \centering
    \begin{tikzpicture}[node distance=1.5cm]
        \node[unanode] (c) at (-2, 0) {\argnode{c}{1.0}{1.0}};
        \node[unanode] (b) at (0, 0) {\argnode{d}{0.0}{0.0}};
        \node[unanode] (a) at (2, 0) {\argnode{b}{0.5}{0.5}};
        
        \draw[-stealth, thick] (c) -- node[pos=0.5, above=1pt] {+} (b);
        \draw[-stealth, thick] (b) -- node[pos=0.5, above=1pt] {-} (a);
    \end{tikzpicture}
    \caption{$\sctrbgmempty$ violates the (quantitative) counterfactuality principle w.r.t. EB and EBT semantics $\fs$.}
    \label{fig:proof-gradient-counterfactuality-EB-EBT}
\end{figure}
%
\end{document}